\newtheorem{theorem}{Theorem}[section]
\newtheorem{conj}{Conjecture}
\newtheorem{lemma}[theorem]{Lemma}
\newtheorem{informal theorem}[theorem]{Theorem (informal statement)}
\newtheorem{proposition}[theorem]{Proposition}
\newtheorem{claim}[theorem]{Claim}
\newtheorem{fact}[theorem]{Fact}
\newtheorem{remark}[theorem]{Remark}
\theoremstyle{definition}
\newtheorem{definition}[theorem]{Definition}
\newcommand{\eqdef}{\stackrel{{\mathrm {\footnotesize def}}}{=}}
\newcommand{\chow}{\mathbf{Chow}}
\newcommand{\bx}{\mathbf{x}}
\newcommand{\by}{\mathbf{y}}
\newcommand{\bv}{\mathbf{v}}
\newcommand{\bu}{\mathbf{u}}
\newcommand{\bw}{\mathbf{w}}
\newcommand{\br}{\mathbf{r}}
\newcommand{\bp}{\mathbf{p}}
\newcommand{\bc}{\mathbf{c}}
\newcommand{\bM}{\mathbf{M}}
\newcommand{\D}{\mathcal{D}}
\newcommand{\B}{\mathbb{B}}
\newcommand{\Sp}{\mathbb{S}}
\newcommand{\LTF}{\mathcal{H}}
\newcommand{\err}{\mathrm{err}}
\newcommand{\pr}{\mathbf{Pr}}
\newcommand{\E}{\mathbf{E}}
\newcommand{\opt}{\mathrm{OPT}}
\newcommand{\A}{\mathcal{A}}
\newcommand{\cB}{\mathcal{B}}
\newcommand{\cD}{\D}
\newcommand{\cL}{\mathcal{L}}
\newcommand{\cQ}{\mathcal{Q}}
\newcommand{\supp}{{\mathsf{supp}}}
\newcommand{\val}{\text{val}}
\newcommand{\tr}{\mathrm{tr}}
\newcommand{\proj}{\mathrm{Proj}}
\newcommand{\argmin}{\mathrm{argmin}}
\newcommand{\fA}{\mathcal{A}}
\newcommand{\fB}{\mathcal{B}}
\newcommand{\sgn}{\mathrm{sign}}
\newcommand{\sign}{\mathrm{sign}}
\newcommand{\polylog}{\mathrm{polylog}}
\newcommand{\polyloglog}{\text{polyloglog}}
\newcommand{\R}{\mathbb{R}}
\newcommand{\Z}{\mathbb{Z}}
\newcommand{\eps}{\epsilon}
\newcommand{\poly}{\mathrm{poly}}
\newcommand{\littleint}{\mathop{\textstyle \int}}
\newcommand{\littlesum}{\mathop{\textstyle \sum}}
\newcommand{\be}{\mathbf{e}}
\newcommand{\wh}{\widehat}
\def\colorful{0}
\newcommand{\new}[1]{{\color{red} #1}}
\newcommand{\new}[1]{{#1}}
\title{Nearly Tight Bounds for Robust Proper Learning of Halfspaces with a Margin}
\author{
  Ilias Diakonikolas\thanks{Supported by NSF Award CCF-1652862 (CAREER) and a Sloan Research Fellowship. Part of this 
work was performed at the Simons Institute for the Theory of Computing during the program on Foundations of Data Science.} \\
  University of Wisconsin, Madison\\
  {\tt ilias@cs.wisc.edu} \\
  \and
  Daniel M. Kane\thanks{Supported by NSF Award CCF-1553288 (CAREER) and a Sloan Research Fellowship.} \\
  University of California, San Diego\\
  {\tt dakane@cs.ucsd.edu} \\
  \and
  Pasin Manurangsi\thanks{Now at Google Research, Mountain View.} \\
  University of California, Berkeley\\
 {\tt pasin@berkeley.edu}
}
\begin{document}

\maketitle

\begin{abstract}
We study the problem of {\em properly} learning large margin halfspaces in the agnostic PAC model. 
In more detail, we study the complexity of properly learning $d$-dimensional halfspaces
on the unit ball within misclassification error $\alpha \cdot \opt_{\gamma} + \eps$, where $\opt_{\gamma}$
is the optimal $\gamma$-margin error rate and $\alpha \geq 1$ is the approximation ratio.
We give learning algorithms and computational hardness results
for this problem, for all values of the approximation ratio $\alpha \geq 1$, 
that are nearly-matching for a range of parameters. 
Specifically, for the natural setting that $\alpha$ is any constant bigger 
than one, we provide an essentially tight complexity characterization.
On the positive side, we give an $\alpha = 1.01$-approximate proper learner 
that uses $O(1/(\eps^2\gamma^2))$ samples (which is optimal) and runs in time
$\poly(d/\eps) \cdot 2^{\tilde{O}(1/\gamma^2)}$. On the negative side,  
we show that {\em any} constant factor approximate proper learner has runtime 
$\poly(d/\eps) \cdot 2^{(1/\gamma)^{2-o(1)}}$, 
assuming the Exponential Time Hypothesis. 
\end{abstract}

\setcounter{page}{0}

\thispagestyle{empty}

\newpage

\section{Introduction} \label{sec:intro}

\subsection{Background and Problem Definition} \label{ssec:background}
Halfspaces are Boolean functions $h_{\bw}: \R^d \to \{ \pm 1\}$ of the form 
$h_{\bw}(\bx) = \sgn \left(\langle \bw, \bx \rangle \right)$, where $\bw \in \R^d$ is the associated weight vector. 
(The function $\sign: \R \to \{ \pm 1\}$ is defined as $\sgn(u)=1$ if $u \geq 0$ and $\sgn(u)=-1$ otherwise.)
The problem of learning an unknown halfspace with a margin condition (in the sense that no example
is allowed to lie too close to the separating hyperplane) is as old as the field of machine learning --- starting 
with Rosenblatt's Perceptron algorithm~\cite{Rosenblatt:58} --- and has arguably been one of 
the most influential problems in the development of the field, with techniques such as SVMs~\cite{Vapnik:98}
and AdaBoost~\cite{FreundSchapire:97} coming out of its study.


In this paper, we study the problem of learning $\gamma$-margin halfspaces in the 
{\em agnostic} PAC model~\cite{Haussler:92, KSS:94}. Specifically, 
there is an unknown distribution $\D$ on $\B_d \times \{ \pm 1\}$, 
where $\B_d$ is the unit ball on $\R^d$, and the learning 
algorithm $\mathcal{A}$ is given as input a training set $S = \{(\bx^{(i)}, y^{(i)}) \}_{i=1}^m$ of i.i.d. samples
drawn from $\D$. The goal of $\mathcal{A}$ is to output a hypothesis whose  
error rate is competitive with the $\gamma$-margin error rate of the optimal halfspace.
In more detail, the {\em error rate} (misclassification error) of a hypothesis $h: \R^d \to \{\pm 1\}$ (with respect to $\D$) is 
$\err_{0-1}^{\D}(h) \eqdef \pr_{(\bx, y) \sim \D}[h(\bx) \neq y]$.
For $\gamma \in (0, 1)$, the {\em $\gamma$-margin error rate} of a halfspace 
$h_{\bw}(\bx)$ with $\|\bw\|_2 \leq 1$ is 
$\err^{\D}_{\gamma}(\bw) \eqdef \pr_{(\bx, y) \sim \D} \left[y \langle \bw, x \rangle \leq \gamma \right]$.
We denote by $\opt_{\gamma}^{\D} \eqdef \min_{\|\bw\|_2 \leq 1} \err^{\D}_{\gamma}(\bw)$
the minimum $\gamma$-margin error rate achievable by any halfspace. 
We say that $\A$ is an {\em $\alpha$-agnostic learner}, $\alpha \geq 1$, 
if it outputs a hypothesis $h$ that with probability at least $1-\tau$ satisfies
$\err_{0-1}^{\D}(h) \leq \alpha \cdot \opt_{\gamma}^{\D} +\eps$. (For $\alpha$ = 1, we obtain
the standard notion of agnostic learning.) If the hypothesis $h$ is itself
a halfspace, we say that the learning algorithm is {\em proper}. This work 
focuses on proper learning algorithms.


\subsection{Related and Prior Work} \label{ssec:related-work}
In this section, we summarize the prior work that is directly related to the results of this paper.
First, we note that the sample complexity of our learning problem (ignoring computational considerations) 
is well-understood. In particular, the ERM that minimizes the number of {\em $\gamma$-margin errors} 
over the training set (subject to a norm constraint) is known to be an agnostic learner ($\alpha = 1$), 
assuming the sample size is $\Omega(\log(1/\tau)/(\eps^2\gamma^2))$. 
Specifically, $\Theta(\log(1/\tau)/(\eps^2\gamma^2))$ samples\footnote{To avoid clutter in the expressions,
we will henceforth assume that the failure probability $\tau = 1/10$. Recall that one can always boost
the confidence probability with an $O(\log(1/\tau))$ multiplicative overhead in the sample complexity.} 
are known to be sufficient and necessary 
for this learning problem (see, e.g.,~\cite{BartlettM02, McAllester03}). 
In the realizable case ($\opt_\gamma^{\D} = 0$), i.e., if the data is linearly separable with margin $\gamma$, 
the ERM rule above can be implemented in $\poly(d, 1/\eps, 1/\gamma)$ time using the Perceptron algorithm. 
The non-realizable setting ($\opt_\gamma^{\D} >0$) is much more challenging computationally.

The agnostic version of our problem ($\alpha=1$) was first considered in \cite{BenDavidS00}, 
who gave a {\em proper} learning algorithm with 
runtime $\poly(d) \cdot (1/\eps)^{\tilde{O}(1/\gamma^2)}$.
It was also shown in \cite{BenDavidS00} that agnostic proper learning 
with runtime $\poly(d, 1/\eps, 1/\gamma)$ is NP-hard. A question left open by their work 
was characterizing the computational complexity of proper learning as a function of $1/\gamma$. 

Subsequent works focused on {\em improper} learning.
The $\alpha=1$ case was studied in~\cite{SSS09, SSS10} who gave a
learning algorithm with sample complexity $\poly(1/\eps) \cdot 2^{\tilde{O}(1/\gamma)}$
-- i.e., {\em exponential} in $1/\gamma$ --  
and computational complexity $\poly(d/\eps) \cdot 2^{\tilde{O}(1/\gamma)}$. 
The increased sample complexity
is inherent in their approach, as their algorithm works by solving a convex program 
over an expanded feature space. 
\cite{BirnbaumS12} gave an $\alpha$-agnostic learning 
algorithm for all $\alpha \geq 1$ with sample complexity $\poly(1/\eps) \cdot 2^{\tilde{O}(1/(\alpha \gamma))}$ 
and computational complexity $\poly(d/\eps) \cdot 2^{\tilde{O}(1/(\alpha \gamma))}$. 
(We note that the Perceptron algorithm is known to achieve $\alpha = 1/\gamma$~\cite{Servedio:01lnh}. 
Prior to \cite{BirnbaumS12}, \cite{LS:11malicious} gave a $\poly(d, 1/\eps, 1/\gamma)$ time algorithm achieving 
$\alpha = \Theta ((1/\gamma)/\sqrt{\log(1/\gamma)})$.)
\cite{BirnbaumS12} posed as an open question whether their upper bounds 
for improper learning can be achieved with a proper learner. 

\new{A related line of work~\cite{KLS09, ABL17, DKKLMS16, LaiRV16, DKK+17, DKKLMS18-soda, 
DKS18a, KlivansKM18, DKS19, DKK+19-sever} has given polynomial time robust estimators for a range of learning tasks.
Specifically,~\cite{KLS09, ABL17, DKS18a, DKK+19-sever}
obtained efficient PAC learning algorithms for halfspaces with malicious noise~\cite{Valiant:85short, keali93}, under 
the assumption that the uncorrupted data comes from a ``tame'' distribution, e.g., Gaussian or isotropic log-concave. 
It should be noted that the class of $\gamma$-margin distributions considered in this work
is significantly broader and can be far from satisfying
the structural properties required in the aforementioned works.

A growing body of theoretical work has focused on \emph{adversarially robust learning} (e.g.,~\cite{BubeckLPR19,MontasserHS19,DegwekarNV19,Nakkiran19}). In adversarially robust learning, 
the learner seeks to output a hypothesis with small \emph{$\gamma$-robust misclassification error}, 
which for a hypothesis $h$ and a norm $\|\cdot\|$ is typically defined as 
$\pr_{(\bx, y) \sim \cD}[\exists \bx' \textrm{ with } \|\bx' - \bx\| \leq \gamma \textrm{ s.t. } h(\bx') \ne y]$. 
Notice that when $h$ is a halfspace and $\|\cdot\|$ is the Euclidean norm, 
the $\gamma$-robust misclassification error coincides with the $\gamma$-margin error in our context. 
(It should be noted that most of the literature on adversarially robust learning focuses on the $\ell_{\infty}$-norm.)
However, the objectives of the two learning settings are slightly different: in adversarially robust learning, 
the learner would like to output a hypothesis with small $\gamma$-robust misclassification error, 
whereas in our context the learner only has to output a hypothesis with small zero-one misclassification error. Nonetheless, 
as we point out in Remark~\ref{remark:adv-learning}, our algorithms can be adapted to provide 
guarantees in line with the adversarially robust setting as well.

Finally, in the distribution-independent agnostic setting without margin assumptions, there is compelling complexity-theoretic 
evidence that even weak learning of halfspaces is computationally intractable~\cite{GR:06, FGK+:06short, DOSW:11, Daniely16, BhattacharyyaGS18}.
}

\subsection{Our Contributions} \label{ssec:our-results}
We study the complexity of {\em proper} $\alpha$-agnostic learning of $\gamma$-margin
halfspaces on the unit ball. Our main result nearly characterizes the complexity of constant factor
approximation to this problem:

\begin{theorem} \label{thm:constant-factor-bounds}
There is an algorithm that uses $O(1/(\eps^2\gamma^2))$ samples, runs in time
$\poly(d/\eps) \cdot 2^{\tilde{O}(1/\gamma^2)}$ and is an $\alpha = 1.01$-agnostic proper learner
\new{for $\gamma$-margin halfspaces} with confidence probability $9/10$. 
Moreover, assuming the Randomized Exponential Time Hypothesis,
any proper learning algorithm that achieves \new{any} constant factor approximation has runtime
$\poly(d/\eps) \cdot \Omega(2^{(1/\gamma)^{2 - o(1)}})$.
\end{theorem}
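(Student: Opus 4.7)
My plan is to combine Johnson--Lindenstrauss style random projection with brute-force enumeration over a cover of the sphere in the reduced space. Draw $m = O(1/(\eps^2\gamma^2))$ i.i.d.\ samples, which by the standard Bartlett--Mendelson margin-based uniform convergence bound already suffice to pin down the true $(1-O(c))\gamma$-margin error of every unit-norm halfspace from its empirical counterpart, up to additive $\eps$. Now apply a random projection $R : \R^d \to \R^k$ with $k = \tilde O(1/\gamma^2)$ to the sample points; by the Johnson--Lindenstrauss lemma, with high probability the inner products $\langle R\bw, R\bx^{(i)}\rangle$ are preserved to additive accuracy $O(c\gamma)$ for any fixed unit $\bw$, across all $m$ sample points simultaneously. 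In particular, the margins of the unknown optimum $\bw^*$ are preserved to within $(1 \pm O(c))\gamma$ on all samples, for a small tunable constant $c>0$.

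Next, enumerate a $c\gamma$-cover $\mathcal{N}$ of the unit ball of $\R^k$; standard volume arguments give $|\mathcal{N}| = (1/(c\gamma))^{O(k)} = 2^{\tilde O(1/\gamma^2)}$. For each $\bw \in \mathcal{N}$, compute the empirical $(1-O(c))\gamma$-margin error on the projected sample, and output the minimizer $\hat{\bw}$ lifted back to $\R^d$ via $\bw_{\mathrm{out}} = R^\top \hat{\bw}$ (the sign pattern on the sample is preserved since $\langle R^\top \hat{\bw}, \bx\rangle = \langle \hat{\bw}, R\bx\rangle$). The image $R\bw^*$ is within $c\gamma$ of some net point whose empirical $(1-O(c))\gamma$-margin error on the projections is at most the empirical $\gamma$-margin error of $\bw^*$; combining this with Chernoff concentration for $\bw^*$ alone and Bartlett--Mendelson uniform convergence applied to the output, the true $0$-$1$ error of $\bw_{\mathrm{out}}$ is at most $(1 + O(c))\cdot \opt_\gamma + O(\eps)$, and choosing $c$ a small enough constant yields the $1.01$ factor.

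\textbf{Lower bound.} For the matching hardness, I would reduce from a canonical Randomized ETH-hard problem, e.g.\ $3$-SAT on $n$ variables, which needs $2^{\Omega(n)}$ randomized time. The goal is to build, for every $3$-SAT instance $\varphi$, a labeled distribution $\cD_\varphi$ on the unit ball of $\R^d$ with $d = \poly(n)$, together with a margin $\gamma = n^{-(1+o(1))/2}$, such that (completeness) a satisfying assignment yields a halfspace with vanishing $\gamma$-margin error, and (soundness) no halfspace achieves $0$-$1$ error below a fixed constant multiple of $\opt_\gamma^{\cD_\varphi}$ plus a fixed positive constant. A typical encoding represents assignments as unit vectors in an $n^{1+o(1)}$-dimensional feature space and clauses as labeled unit-ball points whose inner product with the honest encoding is exactly $\gamma$ in the YES case; the normalization is chosen so that $1/\gamma^2 \approx n^{1+o(1)}$, whence a $2^{\Omega(n)}$ lower bound on deciding $\varphi$ translates to $2^{(1/\gamma)^{2-o(1)}}$ for the halfspace problem.

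\textbf{Main obstacles.} On the algorithmic side, the delicate step is to simultaneously keep the sample complexity at the information-theoretically optimal $O(1/(\eps^2\gamma^2))$, the cover size at $2^{\tilde O(1/\gamma^2)}$, and the approximation ratio at $1.01$: the Johnson--Lindenstrauss distortion, the cover radius, and the margin slack all feed multiplicatively into the final approximation factor, so each parameter must be tuned to within a small constant simultaneously. On the hardness side, the main challenge is matching the tight exponent $2 - o(1)$ in $1/\gamma$; a naive CSP-to-halfspace reduction loses polylogarithmic factors in the margin scaling, capping the exponent strictly below $2$. Recovering $2 - o(1)$ almost certainly requires a sparsification or randomized construction that trims constraints while preserving both the margin and the completeness/soundness gap, which is consistent with the theorem being stated under \emph{Randomized} ETH rather than its deterministic variant.
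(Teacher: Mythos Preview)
Your upper bound approach does not achieve the stated runtime. The Johnson--Lindenstrauss projection to dimension $k$ preserves inner products to additive accuracy $c\gamma$ with per-pair failure probability $\exp(-\Omega(k(c\gamma)^2))$. To make this failure probability small enough---whether you union-bound over the $m = \Theta(1/(\eps^2\gamma^2))$ samples, or you bound the distributional failure probability by $O(\eps)$ so that the projected optimum stays within $\opt_\gamma + O(\eps)$---you are forced to take $k = \Theta(\log(1/\eps)/\gamma^2)$, not $\tilde O(1/\gamma^2)$ with the tilde hiding only $\polylog(1/\gamma)$. The cover of $\B_k$ then has size $(1/(c\gamma))^{\Theta(k)} = (1/\eps)^{\tilde\Theta(1/\gamma^2)}$, so your runtime is $\poly(d)\cdot (1/\eps)^{\tilde O(1/\gamma^2)}$ rather than $\poly(d/\eps)\cdot 2^{\tilde O(1/\gamma^2)}$. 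This is exactly the runtime of the paper's warm-up ``basic algorithm'', and improving the $\eps$-dependence to a fixed polynomial is the whole point of the main algorithm. (Incidentally, carried out carefully your route gives $\alpha = 1$, not $1.01$: the error terms coming from JL and from the net are all additive in $\eps$, with no genuine multiplicative blow-up---consistent with the basic algorithm also achieving $\alpha = 1$ at this slower runtime. The paper's separate W[1]-hardness shows that $\alpha = 1$ is impossible in time $f(1/\gamma)\cdot\poly(d,1/\eps)$ for any $f$, confirming that one cannot decouple $\eps$ from the exponent without relaxing to $\alpha > 1$.)

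The paper's main algorithm avoids random projection entirely and instead builds the hypothesis iteratively: at stage $i$ it forms the second-moment matrix $\bM^{(i)}$ of the points currently misclassified at margin $\gamma/2$, shows that $\bw^\ast - \bw^{(i)}$ has squared projection at least $k^{(i)}\delta\gamma^2/8$ onto the span $V_{k^{(i)}}$ of the top $k^{(i)}$ eigenvectors for some $k^{(i)}$, guesses that projection from a net over $V_{k^{(i)}}$, and adds it to $\bw^{(i)}$. A potential argument on $\|\bw^\ast - \bw^{(i)}\|_2^2$ bounds $\sum_i k^{(i)} \leq O(1/(\delta\gamma^2))$, so the total number of guess sequences is $2^{\tilde O(1/(\delta\gamma^2))}$ with no $\eps$-dependence in the exponent.

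On the lower bound, your sketch is in the right direction but misses the essential ingredient: to get hardness for \emph{constant-factor} approximation you must start from a \emph{gap} version of the CSP, i.e., you need a PCP. Starting from exact 3SAT, an unsatisfiable instance may be $(1 - 1/\poly(n))$-satisfiable, and then no soundness gap bounded away from zero survives the reduction. The paper invokes the Moshkovitz--Raz PCP, which under ETH gives $\nu$-Gap-2-CSP with constant alphabet that is hard in time $2^{n^{1-o(1)}}$; the reduction to halfspaces then yields margin $\gamma = \Theta(1/\sqrt{n})$, and the $(1/\gamma)^{2-o(1)}$ exponent follows directly---no further sparsification or randomized trimming is needed. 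Finally, the qualifier ``Randomized'' in Randomized ETH refers to ruling out randomized \emph{algorithms} for 3SAT (needed because the learner may be randomized), not to any randomness in the reduction; the paper's reduction is deterministic.
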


The reader is referred to Theorems~\ref{thm:constant-factor-alg} and~\ref{thm:run-time} for detailed
statements of the upper and lower bound respectively.
A few remarks are in order: First, we note that the approximation ratio of $1.01$ in the above theorem statement 
is not inherent. Our algorithm achieves $\alpha = 1+\delta$, for any $\delta>0$, 
with runtime $\poly(d/\eps) \cdot 2^{\tilde{O}(1/(\delta \gamma^2))}$.
The runtime of our algorithm significantly improves the runtime of the best 
known agnostic proper learner~\cite{BenDavidS00}, achieving
fixed polynomial dependence on $1/\eps$, independent of $\gamma$. 
This gain in runtime comes at the expense of losing a small constant factor in the error guarantee.
It is natural to ask whether there exists an $1$-agnostic proper learner
matching the runtime of our Theorem~\ref{thm:constant-factor-bounds}. 
In Theorem~\ref{thm:param}, we establish a computational hardness result implying 
that such an improvement is unlikely. 

The runtime dependence of our algorithm scales as $2^{\tilde{O}(1/\gamma^2)}$ (which is nearly
best possible for proper learners), as opposed to $2^{\tilde{O}(1/\gamma)}$ 
in the best known improper learning algorithms~\cite{SSS09, BirnbaumS12}. 
In addition to the interpretability of proper learning, we note that the sample complexity of our algorithm
is quadratic in $1/\gamma$ (which is information-theoretically optimal), as opposed to exponential for known
improper learners. Moreover, for moderate values of $\gamma$, 
our algorithm may be faster than known improper learners, as it only uses
spectral methods and ERM, as opposed to convex optimization.
Finally, we note that the lower bound part of Theorem~\ref{thm:constant-factor-bounds} 
implies a computational separation between proper and improper learning for our problem.

In addition, we explore the complexity of $\alpha$-agnostic learning 
for large $\alpha>1$. The following theorem summarizes 
our results in this setting:

\begin{theorem} \label{thm:alpha-factor-bounds}
There is an algorithm that uses $\tilde O(1/(\eps^2\gamma^2))$ samples, runs in time
$\poly(d) \cdot (1/\eps)^{\tilde{O}(1/(\alpha \gamma)^2)}$ and is an 
$\alpha$-agnostic proper learner \new{for $\gamma$-margin halfspaces} with confidence probability $9/10$. 
Moreover, assuming NP $\ne$ RP and the Sliding Scale Conjecture, there exists an  
absolute constant $c > 0$, such that no $(1/\gamma)^c$-agnostic proper learner 
runs in $\poly(d,1/\varepsilon,1/\gamma)$ time.
\end{theorem}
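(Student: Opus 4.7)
The plan for the upper bound adapts the strategy behind Theorem~\ref{thm:constant-factor-alg} with parameters tuned to exploit the larger approximation slack $\alpha$. I would first draw $m = \tilde O(1/(\eps^2 \gamma^2))$ samples and perform a data-driven dimension reduction by projecting onto the span of the large-variance directions of the empirical second-moment matrix (a spectral / approximate-PCA step). Because we only need to compete with $\opt_\gamma^{\D}$ up to a factor of $\alpha$, we can tolerate coarser approximations and reduce to a subspace $V$ of dimension $k = \tilde O(1/(\alpha\gamma)^2)$. The key property to establish is that for any optimal unit $w^* \in \R^d$, its projection $\bar w^*$ onto $V$ satisfies $|\langle w^* - \bar w^*, \bx \rangle| \leq \Theta(\alpha\gamma)$ on almost all samples, so that $\bar w^*$ has $\Theta(\alpha\gamma)$-margin error at most $\opt_\gamma^{\D} + O(\eps)$ on the empirical distribution.

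Within $V$, I would enumerate a $\delta$-net of unit vectors with $\delta = \Theta(\alpha\gamma)$, of size $(1/\delta)^{O(k)} = (1/\eps)^{\tilde O(1/(\alpha\gamma)^2)}$. Each net point is lifted back to $\R^d$ via the chosen basis for $V$, its empirical zero-one error is evaluated on the sample, and the halfspace with the smallest empirical zero-one error is output. Correctness follows by locating a net point $\hat w$ with $\|\hat w - \bar w^*\|_2 \leq \delta$, so that its empirical zero-one error is bounded by the $(\alpha\gamma/2)$-margin error of $\bar w^*$, which is itself at most $\opt_\gamma^{\D} + O(\eps)$. Standard Rademacher / VC uniform convergence for margin halfspaces (which is what yields the stated $\tilde O(1/(\eps^2\gamma^2))$ sample complexity) then transfers the empirical guarantee to $\D$, producing a halfspace with true zero-one error at most $\alpha \cdot \opt_\gamma^{\D} + \eps$ with probability $9/10$.

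For the hardness direction, the plan is a reduction from a gap version of projection Label Cover. The Sliding Scale Conjecture asserts that there exists an absolute constant $c>0$ such that approximating Label Cover on instances of size $N$ within factor $N^c$ is NP-hard; under $\mathrm{NP}\ne\mathrm{RP}$, this extends to hardness under randomized reductions. Given such a Label Cover instance, I would construct a distribution $\D$ supported on $\poly(N)$-dimensional examples in the unit ball such that (i) in the YES case some halfspace attains $\gamma$-margin error at most $\beta$ for a chosen threshold $\beta$, and (ii) in the NO case every halfspace has zero-one error at least $\beta \cdot N^{\Omega(c)}$; setting $1/\gamma = \poly(N)$ converts this multiplicative gap into $(1/\gamma)^{\Omega(c)}$-inapproximability for proper agnostic learning, with $\eps$ set small enough to be absorbed into $\beta$. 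The main technical obstacle is designing the per-constraint gadgets so that the YES side genuinely satisfies the geometric $\gamma$-margin condition (and not merely a small zero-one error bound) while the NO side preserves the multiplicative gap uniformly over all halfspaces, including those of large Euclidean norm; my plan is to follow the blueprint of the reduction underlying Theorem~\ref{thm:run-time}, replacing its $2^{(1/\gamma)^{2-o(1)}}$-style amplification step with the polynomial Label-Cover gap supplied by the Sliding Scale Conjecture.
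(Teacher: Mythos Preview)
Your lower-bound plan is essentially the paper's: reduce from a gap CSP (Label Cover is the $k=2$ case) with polynomial gap supplied by the Sliding Scale Conjecture, following the same gadget construction that underlies Theorem~\ref{thm:run-time}. That part is fine.

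The upper-bound plan, however, has a genuine gap. You propose a one-shot spectral projection onto the top $k=\tilde O(1/(\alpha\gamma)^2)$ eigenvectors of the empirical second-moment matrix, and then a net search. The analysis you sketch does not go through, for two reasons. First, with $k$ of this size the eigenvalue threshold is $\delta\approx(\alpha\gamma)^2$, so Markov's inequality only gives $\Pr[|\langle \bw^{\ast}-\bar\bw^{\ast},\bx\rangle|\geq \alpha\gamma]\leq \delta/(\alpha\gamma)^2=O(1)$; ``almost all samples'' is a constant fraction, not $1-O(\eps)$. Second, and more fundamentally, bounding the projection error by $\Theta(\alpha\gamma)$ is useless when the original margin is only $\gamma$: if $y\langle\bw^{\ast},\bx\rangle\geq\gamma$ and the perturbation is as large as $\alpha\gamma>\gamma$, then $\bar\bw^{\ast}$ need not even classify $\bx$ correctly, let alone with any positive margin. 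So your claimed bound on the $\Theta(\alpha\gamma)$-margin error of $\bar\bw^{\ast}$ simply does not follow. More generally, the spectral projection yields an \emph{additive} error term $O(\delta/\gamma^2)$ that is independent of $\opt_\gamma^{\D}$; the slack $\alpha$ in the target $\alpha\cdot\opt_\gamma^{\D}+\eps$ gives you nothing here because $\opt_\gamma^{\D}$ can be arbitrarily small. (Indeed, your final chain of inequalities even claims error $\opt_\gamma^{\D}+O(\eps)$, i.e., $\alpha=1$, with this runtime---which would contradict Theorem~\ref{thm:param}.)

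The paper's upper bound takes a different route: it approximates the \emph{Chow parameters} $\chow(f_{\bw^{\ast}})=\E[f_{\bw^{\ast}}(\bx)\bx]$. The margin condition implies that a $\nu\gamma$-accurate estimate of $\chow(f_{\bw^{\ast}})$ yields a halfspace with error $\opt_\gamma^{\D}+O(\nu)$ (Lemma~\ref{lem:chow-vs-dist}). The empirical proxy $P=\E[y\bx]$ differs from $\chow(f_{\bw^{\ast}})$ by at most $O(\opt_\gamma^{\D})$ in $\ell_2$---this is where $\opt_\gamma^{\D}$ enters multiplicatively---and the algorithm iteratively corrects $P$ along at most $\tilde O(1/(\alpha\gamma)^2)$ data-spanned directions to drive the residual below $\alpha\gamma\cdot\opt_\gamma^{\D}$. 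A Johnson--Lindenstrauss step handles the dimension dependence. The key point your plan misses is that the $\alpha$ savings comes from the error being \emph{proportional to} $\opt_\gamma^{\D}$, which the Chow-parameter formulation makes explicit but the second-moment spectral approach does not.
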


The reader is referred to Theorem~\ref{alphaTheorem} for the upper bound
and Theorem~\ref{thm:inapx} for the lower bound.
In summary, we give an $\alpha$-agnostic proper learning algorithm with runtime 
exponential in $1/(\alpha\gamma)^2$, as opposed to $1/\gamma^2$, and we show that 
achieving $\alpha = (1/\gamma)^{\Omega(1)}$ is computationally hard. 
(Assuming only NP $\ne$ RP, we can rule out polynomial time 
$\alpha$-agnostic proper learning for $\alpha = (1/\gamma)^{\frac{1}{\polyloglog(1/\gamma)}}$.)

\new{
\begin{remark} \label{remark:adv-learning}
{\em While not stated explicitly in the subsequent analysis, our algorithms (with a slight modification to the associated 
constant factors) not only give a halfspace $\bw^{\ast}$ with zero-one loss at most $\alpha \cdot \opt_{\gamma}^{\D} +\eps$, 
but this guarantee holds for the $0.99\gamma$-margin error\footnote{Here the constant $0.99$ can be replaced 
by any constant less than one, with an appropriate increase to the algorithm's running time.} of $\bw^{\ast}$ as well. 
Thus, our learning algorithms also work in the adversarially robust setting (under the Euclidean norm) 
with a small loss in the ``robustness parameter'' (margin) from the one used to compute the optimum (i.e., $\gamma$) 
to the one used to measure the error of the output hypothesis (i.e., $0.99\gamma$).}
\end{remark}
}

\subsection{Our Techniques} \label{ssec:techniques}

\paragraph{Overview of Algorithms.}
For the sake of this intuitive explanation, we provide 
an overview of our algorithms when the underlying distribution 
$\D$ is explicitly known. The finite sample analysis of our algorithms follows
from standard generalization bounds (see Section~\ref{sec:alg}).

Our constant factor approximation algorithm relies on the following observation: 
Let $\bw^{\ast}$ be the optimal weight vector.
The assumption that $|\langle \bw^{\ast}, \bx \rangle |$ is large for almost all $\bx$ (by the margin property)
implies a relatively strong condition on $\bw^{\ast}$, which will allow us to find 
a relatively small search space containing a near-optimal solution. 
A first idea is to consider the matrix $\bM = \E_{(\bx, y) \sim \D}[\bx \bx^T]$ 
and note that ${\bw^{\ast}}^T \bM \bw^{\ast} = \Omega(\gamma^2)$. 
This in turn implies that $\bw^{\ast}$ has a large component on the subspace 
spanned by the largest $O(1/(\eps\gamma^2))$ eigenvalues of $\bM$.
This idea suggests a basic algorithm that computes a net over 
unit-norm weight vectors on this subspace and outputs the best answer.
\new{This basic algorithm has runtime $\poly(d) \cdot 2^{\tilde O(1/(\eps\gamma^2))}$
and is analyzed in Section~\ref{ssec:alg-basic}.}

To obtain our $\poly(d/\eps) \cdot 2^{\tilde O(1/\gamma^2)}$ time constant factor approximation 
algorithm (establishing the upper bound part of Theorem~\ref{thm:constant-factor-bounds}), 
we use a refinement of the above idea.
Instead of trying to guess the projection of $\bw^{\ast}$ onto the space of
large eigenvectors {\em all at once}, we will do so in stages. In
particular, it is not hard to see that $\bw^{\ast}$ has a non-trivial
projection onto the subspace spanned by the top $O(1/\gamma^2)$
eigenvalues of $\bM$. If we guess this projection, we will have some
approximation to $\bw^{\ast}$, but unfortunately not a sufficiently good one. 
However, we note that the difference between $\bw^{\ast}$ and our current hypothesis $\bw$ 
will have a large average squared inner product with the misclassified points.
This suggests an iterative algorithm that in the $i$-th iteration 
considers the second moment matrix $\bM^{(i)}$ of the points 
not correctly classified by the current hypothesis $\sgn(\langle \bw^{(i)}, \bx \rangle)$, 
guesses a vector $\bu$ in the space spanned by the top few eigenvalues of $\bM^{(i)}$, 
and sets $\bw^{(i+1)} = \bu + \bw^{(i)}$. This procedure can be shown to produce
a candidate set of weights with cardinality $2^{\tilde O (1/\gamma^2)}$  
one of which has the desired misclassification error. \new{This algorithm and its analysis
are given in Section~\ref{ssec:alg-main}.}

Our general $\alpha$-agnostic algorithm (upper bound in Theorem~\ref{thm:alpha-factor-bounds}) relies on approximating
the {\em Chow parameters} of the target halfspace $f_{\bw^{\ast}}$, i.e., 
the $d$ numbers $\E[f_{\bw^{\ast}}(\bx) \bx_i]$, $i \in [d]$.
A classical result~\cite{Chow:61} shows that the exact values of the 
Chow parameters of a halfspace (over any distribution) uniquely define the halfspace. 
Although this fact is not very useful under an arbitrary distribution, 
the margin assumption implies a strong {\em approximate identifiability} result 
(Lemma~\ref{lem:chow-vs-dist}).  Combining this with an algorithm of~\cite{DeDFS14}, 
we can efficiently compute an approximation to the halfspace $f_{\bw^{\ast}}$ given an approximation to its Chow parameters. 
In particular, if we can approximate the Chow parameters to $\ell_2$-error
\new{$\nu \cdot \gamma$}, we can approximate $f_{\bw^{\ast}}$ within error \new{$\opt_{\gamma}^{\D}+\nu$}. 

A naive approach to approximate the Chow parameters would be via the empirical
Chow parameters, namely $\E_{(\bx, y) \sim \D}[y \bx]$.  In the realizable case, this quantity 
indeed corresponds to the vector of Chow parameters. Unfortunately however, this method does not
work in the agnostic case and it can introduce an error of $\omega(\opt_{\gamma}^{\D})$.
To overcome this obstacle, we note that in order for a small fraction of errors to introduce
a large error in the empirical Chow parameters, it must be the case
that there is some direction $\bw$ in which many of these erroneous points
introduce a large error. If we can guess some error that correlates well with $\bw$ 
and also guess the correct projection of our Chow parameters onto this vector, 
we can correct a decent fraction of the error between the empirical and true Chow parameters. 
We show that making the correct guesses $\tilde O (1/(\gamma \alpha)^2)$ times, 
we can reduce the empirical error sufficiently so that it can be used to find an accurate
hypothesis. Once again, we can compute a hypothesis for each sequence of
guesses and return the best one. 
\new{See Section~\ref{ssec:alg-bicrit} for a detailed analysis.}

\paragraph{Overview of Computational Lower Bounds.} 
Our hardness results are shown via two reductions. 
These reductions take as input an instance of a computationally hard problem and 
produce a distribution $\D$ on $\B_d \times \{\pm 1\}$. 
If the starting instance is a YES instance of the original problem, 
then $\opt_{\gamma}^{\D}$ is small for an appropriate value of $\gamma$. 
On the other hand, if the starting instance is a NO instance of the original problem, 
then $\opt_{0-1}^{\D}$ is large\footnote{We use $\opt_{0-1}^{\D} \eqdef \min_{\bw \in \R^d} \err^{\D}_{0-1}(\bw)$ 
to denote the minimum error rate achievable by any halfspace.}. As a result, if there is a ``too fast'' ($\alpha$-)agnostic 
proper learner for $\gamma$-margin halfspaces, then we would also get a ``too fast'' 
algorithm for the original problem as well, which would violate the corresponding 
complexity assumption.

To understand the margin parameter $\gamma$ we can achieve, we need to first understand the problems we start with. 
For our reductions, the original problems can be viewed in the following form: 
select $k$ items from $v_1, \dots, v_N$ that satisfy certain ``local constraints''. 
For instance, in our first construction, the reduction is from the $k$-Clique problem: 
Given a graph $G$ and an integer $k$, the goal is to determine 
whether $G$ contains a $k$-clique as a subgraph. 
For this problem, $v_1, \dots, v_N$ correspond to the vertices of $G$ and the ``local'' constraints 
are that every pair of selected vertices induces an edge.

Roughly speaking, our reduction produces a distribution $\D$ on $\B_d \times \{\pm 1\}$ in dimension $d = N$, 
with the $i$-th dimension corresponding to $v_i$. The ``ideal'' solution in the YES case is to set 
$\bw_i = \frac{1}{\sqrt{k}}$ iff $v_i$ is selected and set $\bw_i = 0$ otherwise. 
In our reductions, the local constraints are expressed using ``sparse'' sample vectors 
(i.e., vectors with only a constant number of non-zero coordinates all having the same magnitude). 
For example, in the case of $k$-Clique, the constraints can be expressed as follows: 
For every non-edge $(i, j)$, we must have $\left(\frac{1}{\sqrt{2}}\be^i + \frac{1}{\sqrt{2}}\be^j\right) \cdot \bw \leq \frac{1}{\sqrt{2k}}$, 
where $\be^i$ and $\be^j$ denote the $i$-th and $j$-th vectors in the standard basis. 
A main step in both of our proofs is to show that the reduction still works even when we ``shift'' the right hand side 
by a small multiple of $\frac{1}{\sqrt{k}}$. For instance, in the case of $k$-Clique, it is possible to show that, 
even if we replace $\frac{1}{\sqrt{2k}}$ with, say, $\frac{0.99}{\sqrt{2k}}$, the correctness of the construction 
remains, and we also get the added benefit that now the constraints are satisfied with a margin of 
$\gamma = \Theta(\frac{1}{\sqrt{k}})$ for our ideal solution in the YES case.

In the case of $k$-Clique, the above idea yields a reduction to 1-agnostic learning $\gamma$-margin halfspaces 
with margin $\gamma = \Theta(\frac{1}{\sqrt{k}})$, where the dimension $d$ is $N$ (and $\varepsilon = \frac{1}{\poly(N)}$). 
As a result, if there is an $f(\frac{1}{\gamma})\poly(d,\frac{1}{\varepsilon})$-time algorithm for the latter for some function $f$, 
then there also exists a $g(k)\poly(N)$-time algorithm for $k$-Clique for some function $g$. The latter statement is 
considered unlikely, as it would break a widely-believed hypothesis in the area of parameterized complexity. 

Ruling out $\alpha$-agnostic learners, for $\alpha>1$, is slightly more complicated, 
since we need to produce the ``gap'' of $\alpha$ between $\opt_{\gamma}^{\D}$ in the YES case 
and $\opt_{0-1}^{\D}$ in the NO case. To create such a gap, we appeal to the PCP Theorem~\cite{AroraS98,AroraLMSS98}, 
which can be thought of as an NP-hardness proof of the following ``gap version'' of 3SAT: given a 3CNF formula as input, 
distinguish between the case that the formula is satisfiable and the case that the formula is not even 
$0.9$-satisfiable\footnote{In other words, for any assignment to the variables, at least $0.1$ fraction of the clauses are unsatisfied.}. 
Moreover, further strengthened versions of the PCP Theorem~\cite{Dinur07,MR10} actually implies that this Gap-3SAT problem 
cannot even be solved in time $O(2^{n^{0.999}})$, where $n$ denotes the number of variables in the formula, 
assuming the Exponential Time Hypothesis (ETH)\footnote{ETH states that the \emph{exact} version of 3SAT cannot be solved in $2^{o(n)}$ time.}. Once again, (Gap-)3SAT can be viewed in the form of ``item selection with local constraints''. However, the number 
of selected items $k$ is now equal to $n$, the number of variables of the formula. With a similar line of reasoning as above, 
the margin we get is now $\gamma = \Theta(\frac{1}{\sqrt{k}}) = \Theta(\frac{1}{\sqrt{n}})$. As a result, if there is, say, 
a $2^{(1/\gamma)^{1.99}}\poly(d,\frac{1}{\varepsilon})$-time $\alpha$-agnostic proper learner for $\gamma$-margin 
halfspaces (for an appropriate $\alpha$), then there is an $O(2^{n^{0.995}})$-time algorithm for Gap-3SAT, which would violate ETH.

Unfortunately, the above described idea only gives the ``gap'' $\alpha$ that is only slightly larger than $1$, 
because the gap that we start with in the Gap-3SAT problem is already pretty small. To achieve larger gaps, 
our actual reduction starts from a generalization of 3SAT, called constraint satisfaction problems (CSPs), 
whose gap problems are hard even for very large gap. 
This concludes the outline of the main intuitions in our reductions.
\new{The detailed proofs are given in Section~\ref{sec:lb}.}

\subsection{Preliminaries} \label{ssec:prelims}
For $n \in \Z_+$, we denote $[n] \eqdef \{1, \ldots, n\}$.
We will use small boldface characters for vectors and capital
boldface characters for matrices. For a vector $\bx \in \R^d$, 
and $i \in [d]$, $\bx_i$ denotes the $i$-th coordinate of $\bx$, and 
$\|\bx\|_2 \eqdef (\littlesum_{i=1}^d \bx_i^2)^{1/2}$ denotes the $\ell_2$-norm
of $\bx$. We will use $\langle \bx, \by \rangle$ for the inner product between $\bx, \by \in \R^d$. 
For a matrix $\bM \in \R^{d \times d}$, we will denote by $\|\bM\|_2$ its spectral norm and by
$\tr(\bM)$ its trace.
Let $\B_d = \{ \bx \in \R^d: \|\bx\|_2 \leq 1 \}$ be the unit ball and 
$\Sp_{d-1} = \{ \bx \in \R^d: \|\bx\|_2 = 1 \}$ be the unit sphere in $\R^d$.

An origin-centered halfspace is a Boolean-valued function $h_{\bw}: \R^d \to \{\pm 1\}$ 
of the form $h_{\bw}(\bx) = \sgn \left(\langle \bw, \bx \rangle \right)$,
where $\bw \in \R^d$. (Note that we may assume w.l.o.g. that $\|\bw\|_2 =1$.) 
Let $\LTF_{d} = \left\{ h_{\bw}(\bx) = \sgn \left(\langle \bw, \bx \rangle \right), \bw \in \R^d \right\}$ 
denote the class of all origin-centered halfspaces on $\R^d$.
Finally, we use $\be^i$ to denote the $i$-th standard basis vector, 
i.e., the vector whose $i$-th coordinate is one and the remaining coordinates are zero.

\section{Efficient Proper Agnostic Learning of Halfspaces with a Margin} \label{sec:alg}

\subsection{Warm-Up: Basic Algorithm} \label{ssec:alg-basic}
In this subsection, we present a basic algorithm that achieves $\alpha = 1$
and whose runtime is $\poly(d) 2^{\tilde{O}(1/(\eps\gamma^2))}$.
Despite its slow runtime, this algorithm serves as a warm-up for our more
sophisticated constant factor approximation algorithm in the next subsection.

We start by establishing a basic structural property of this setting which motivates our basic algorithm.
We start with the following simple claim:

\begin{claim} \label{claim:M-spectral-norm}
Let $\bM^{\D} = \E_{{(\bx,y)\sim \D}} [\bx \bx^T]$ 
and $\bw^{\ast}$ be a unit vector such that
$\err^{\D}_{\gamma}(\bw^{\ast})  \leq \opt_{\gamma}^{\D} \leq 1/2$.
Then, we have that $\| \bM^{\D}\|_2 \geq {\bw^{\ast}}^T \bM^{\D} \bw^{\ast} \geq \gamma^2/2$.
\end{claim}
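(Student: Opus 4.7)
The plan is to prove the two inequalities separately, both being essentially immediate. The first inequality $\|\bM^{\D}\|_2 \geq {\bw^*}^T \bM^{\D} \bw^*$ is just the variational characterization of the spectral norm: since $\bM^{\D}$ is symmetric positive semidefinite and $\bw^*$ has unit norm, ${\bw^*}^T \bM^{\D} \bw^* \leq \max_{\|\bu\|_2=1} \bu^T \bM^{\D} \bu = \|\bM^{\D}\|_2$. No real work is required here.

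For the second inequality, I would rewrite the quadratic form as an expectation:
\[
{\bw^*}^T \bM^{\D} \bw^* = \E_{(\bx,y)\sim\D}\bigl[\langle \bw^*, \bx\rangle^2\bigr].
\]
Then observe that the margin assumption controls this expectation from below through a simple indicator argument. By definition, $\err^{\D}_\gamma(\bw^*) = \pr_{(\bx,y)\sim\D}[y\langle\bw^*,\bx\rangle \leq \gamma] \leq 1/2$, so with probability at least $1/2$ we have $y\langle\bw^*,\bx\rangle > \gamma$, and therefore $|\langle\bw^*,\bx\rangle| > \gamma$, i.e., $\langle\bw^*,\bx\rangle^2 > \gamma^2$. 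Taking expectations yields
\[
\E\bigl[\langle\bw^*,\bx\rangle^2\bigr] \geq \gamma^2 \cdot \pr\bigl[\,|\langle\bw^*,\bx\rangle| > \gamma\bigr] \geq \gamma^2/2,
\]
which completes the proof. There is no real obstacle here; the claim is essentially a restatement of the fact that a margin-$\gamma$ classifier has large second moment in the direction of its weight vector, and it will be used later only to justify that $\bw^*$ has a non-trivial projection onto the top eigenvectors of $\bM^{\D}$.
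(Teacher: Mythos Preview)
Your proposal is correct and mirrors the paper's own proof essentially line for line: the paper also observes that $\pr[|\langle \bw^*,\bx\rangle|\geq\gamma]\geq 1/2$ yields $\E[(\langle\bw^*,\bx\rangle)^2]\geq \gamma^2/2$, and then invokes the identity $\bv^T\bM^{\D}\bv=\E[(\langle\bv,\bx\rangle)^2]$ together with the definition of the spectral norm.
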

\begin{proof}
By assumption,
$\pr_{(\bx,y)\sim \D}[ \left| \langle \bw^{\ast} , \bx  \rangle \right| \geq \gamma]  \geq 1/2$,
which implies that $\E_{(\bx,y)\sim \D}[ \left(\langle \bw^{\ast} , \bx \rangle \right)^2] \geq \gamma^2/2$.
The claim follows from the fact that $\bv^{T} \bM^{\D} \bv = \E_{(\bx,y)\sim \D}[ \left(\langle \bv , \bx \rangle \right)^2]$,
for any $\bv \in \R^d$, and the definition of the spectral norm.
\end{proof}

Claim~\ref{claim:M-spectral-norm} allows us to obtain an approximation to the optimal halfspace by projecting
on the space of large eigenvalues of $\bM^{\D}$. We will need the following terminology:
For $\delta>0$, let $V^{\D}_{\geq \delta}$ be the space spanned by the eigenvalues of $\bM^{\D}$
with magnitude at least $\delta$ and $V^{\D}_{< \delta}$ be its complement.
Let $\proj_V(\bv)$ denote the projection operator of vector $\bv$ on subspace $V$.
Then, we have the following:

\begin{lemma} \label{lem:proj-approx}
Let $\delta>0$ and $\bw' = \proj_{V^{\D}_{\geq \delta}} (\bw^{\ast})$.
Then, we have that $\err^{\D}_{\gamma/2}(\bw') \leq \err^{\D}_{\gamma}(\bw^{\ast}) + 4\delta/\gamma^2.$
\end{lemma}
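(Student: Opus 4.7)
The plan is to decompose $\bw^{\ast} = \bw' + \bw''$, where $\bw' = \proj_{V^{\D}_{\geq \delta}}(\bw^{\ast})$ is the projection onto the ``large eigenvalue'' subspace and $\bw'' = \proj_{V^{\D}_{< \delta}}(\bw^{\ast})$ is its complementary projection. Since every $\gamma/2$-margin error of $\bw'$ that is not already a $\gamma$-margin error of $\bw^{\ast}$ must be accounted for by the ``missing'' piece $\bw''$, it suffices to control the probability mass where $|\langle \bw'', \bx\rangle|$ is appreciably large.

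More precisely, first I would argue the pointwise implication: if $(\bx, y)$ is a $\gamma/2$-margin error for $\bw'$, i.e.\ $y\langle \bw', \bx\rangle \leq \gamma/2$, but not a $\gamma$-margin error for $\bw^{\ast}$, i.e.\ $y\langle \bw^{\ast}, \bx\rangle > \gamma$, then necessarily $y\langle \bw'', \bx\rangle > \gamma/2$, and in particular $|\langle \bw'', \bx\rangle| > \gamma/2$. A union bound over events then yields
\[
\err^{\D}_{\gamma/2}(\bw') \;\leq\; \err^{\D}_{\gamma}(\bw^{\ast}) \;+\; \pr_{(\bx,y)\sim \D}\!\left[ |\langle \bw'', \bx\rangle| > \gamma/2\right].
\]

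Next, I would bound the last probability by Markov's inequality applied to $\langle \bw'', \bx\rangle^2$:
\[
\pr_{(\bx,y)\sim \D}\!\left[ |\langle \bw'', \bx\rangle| > \gamma/2\right] \;\leq\; \frac{4}{\gamma^2}\, \E_{(\bx,y)\sim \D}\!\left[ \langle \bw'', \bx\rangle^2 \right] \;=\; \frac{4}{\gamma^2}\, {\bw''}^T \bM^{\D} \bw''.
\]
Finally, since $\bw''$ lies in $V^{\D}_{< \delta}$, expanding it in the eigenbasis of $\bM^{\D}$ (restricted to eigenvalues strictly less than $\delta$) and using $\|\bw''\|_2 \leq \|\bw^{\ast}\|_2 = 1$ gives
\[
{\bw''}^T \bM^{\D} \bw'' \;\leq\; \delta \cdot \|\bw''\|_2^2 \;\leq\; \delta.
\]
Combining the three displays yields the claimed bound $\err^{\D}_{\gamma/2}(\bw') \leq \err^{\D}_{\gamma}(\bw^{\ast}) + 4\delta/\gamma^2$.

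There is no real obstacle here; the only subtle point is the pointwise decomposition step, where one must be careful that the condition ``$\gamma/2$-margin error of $\bw'$ and \emph{not} a $\gamma$-margin error of $\bw^{\ast}$'' really does force $|\langle \bw'',\bx\rangle| > \gamma/2$ (as opposed to merely $\langle \bw'',\bx\rangle$ being large in some signed sense that could be cancelled out). This is fine because $y \in \{\pm 1\}$ and $y\langle \bw^{\ast},\bx\rangle = y\langle \bw',\bx\rangle + y\langle \bw'',\bx\rangle$, so $y\langle \bw^{\ast},\bx\rangle > \gamma$ together with $y\langle \bw',\bx\rangle \leq \gamma/2$ forces the difference $y\langle \bw'',\bx\rangle$ to exceed $\gamma/2$, which implies $|\langle \bw'',\bx\rangle| > \gamma/2$.
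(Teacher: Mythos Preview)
Your proof is correct and follows essentially the same approach as the paper's own proof: decompose $\bw^{\ast}=\bw'+\bw''$, reduce the extra $\gamma/2$-margin errors to the event $|\langle \bw'',\bx\rangle|\geq \gamma/2$, and bound this via Markov's inequality using ${\bw''}^T\bM^{\D}\bw''\leq \delta$. If anything, you spell out the pointwise implication and the eigenvalue bound on ${\bw''}^T\bM^{\D}\bw''$ slightly more carefully than the paper does.
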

\begin{proof}
Let $\bw^{\ast} = \bw'+ \bw''$, where $\bw'' = \proj_{V^{\D}_{< \delta}} (\bw^{\ast})$.
Observe that for any $(\bx, y)$, if $y \langle \bw', \bx \rangle \leq \gamma/2$ then
$y \langle \bw^{\ast}, \bx \rangle \leq \gamma$, unless $|\langle \bw'', \bx \rangle| \geq \gamma/2$.
Hence,
$\err^{\D}_{\gamma/2}(\bw') \leq \err^{\D}_{\gamma}(\bw^{\ast}) + \pr_{(\bx, y) \sim \D}[|\langle \bw'' , \bx \rangle| \geq \gamma/2]$.
By definition of $\bw''$ and $\bM^{\D}$, we have that $\E_{{(\bx, y) \sim \D}}[(\langle \bw'' , \bx \rangle)^2] \leq \delta$.
By Markov's inequality, we thus obtain $\pr_{(\bx, y) \sim \D}[(\langle \bw'' , \bx \rangle)^2 \geq \gamma^2/4] \leq 4\delta/\gamma^2$,
completing the proof of the lemma.
\end{proof}

Motivated by Lemma~\ref{lem:proj-approx}, the idea is to enumerate over $V_{\geq \delta}^{\D}$,
for $\delta = \Theta(\eps \gamma^2)$,
and output a vector $\bv$ with smallest empirical $\gamma/2$-margin error.
To turn this into an actual algorithm, we work with a finite sample set
and enumerate over an appropriate cover of the space $V_{\geq \delta}^{{\D}}$.
The pseudocode is as follows:
\begin{algorithm}
\caption{Basic $1$-Agnostic Proper Learning Algorithm}
\begin{algorithmic}[1]
\State Draw a multiset $S = \{(\bx^{(i)}, y^{(i)}) \}$ of i.i.d. samples from $\D$, where $m = \Omega(\log(1/\tau)/(\eps^2\gamma^2))$.
\State 
Let $\wh{\D}_m$ be the empirical distribution on $S$.
\State Let $\bM^{\wh{\D}_m} = \E_{{(\bx,y)\sim \wh{\D}_m}} [\bx \bx^T]$.
\State Set $\delta = \eps \gamma^2/16$. Use SVD to find a basis of $V^{\wh{\D}_m}_{\geq \delta}$.
\State Compute a $\delta/2$-cover, $C_{\delta/2}$, in $\ell_2$-norm, of $V^{\wh{\D}_m}_{\geq \delta} \cap \Sp_{d-1}$.
\State Let $\bv \in \argmin_{\bw \in C_{\delta/2}} \err_{\gamma/4}^{\wh{D}_m}(\bw)$.
\State \Return $h_{\bv}(\bx) = \sgn(\langle \bv, \bx \rangle)$.
\end{algorithmic}
\end{algorithm}

First, we analyze the runtime of our algorithm.
The SVD of $\bM^{\wh{\D}_m}$ can be computed in $\poly(d/\delta)$ time.
Note that $V^{\wh{\D}_m}_{\geq \delta}$ has dimension at most $1/\delta$.
This follows from the fact that $\bM^{\wh{\D}_m}$ is PSD and its trace is
$\sum_{i=1}^d \lambda_i = \tr(\bM^{\wh{\D}_m}) = \E_{{(\bx,y)\sim \wh{\D}_m}}[\tr(\bx \bx^T)] \leq 1$,
where we used that $\|\bx\|_2 \leq 1$ with probability $1$ over  $\wh{\D}_m$.
Therefore, the unit sphere of  $V^{\wh{\D}_m}_{\geq \delta}$ has a $\delta/2$-cover $C_{\delta/2}$ of size
$(2/\delta)^{O(1/\delta)} = 2^{\tilde{O}(1/(\eps \gamma^2))}$ that can be computed in output polynomial time.

We now prove correctness. The main idea is to apply Lemma~\ref{lem:proj-approx}
for the empirical distribution $\wh{\D}_m$ combined with 
\new{the following statistical bound:}
\begin{fact}[\cite{BartlettM02, McAllester03}]\label{fact:erm-margin}
Let $S = \{(\bx^{(i)}, y^{(i)}) \}_{i=1}^m$ be a multiset of i.i.d. samples from $\D$, where $m = \Omega(\log(1/\tau)/(\eps^2\gamma^2))$,
and  $\wh{\D}_m$ be the empirical distribution on $S$. Then with probability at least $1-\tau$ over $S$,
simultaneously for all unit vectors $\bw$ and margins $\gamma>0$, if $h_{\bw}(\bx) = \sign(\langle \bw, \bx \rangle)$,
we have that $\err^{\D}_{0-1} (h_{\bw}) \leq \err_{\gamma}^{\wh{\D}_m} (\bw) + \eps$.
\end{fact}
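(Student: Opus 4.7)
The plan is to prove this via a standard Rademacher complexity argument applied to a Lipschitz surrogate that sandwiches the $0$--$1$ loss and the $\gamma$-margin indicator. First I would fix a single margin $\gamma > 0$ and introduce the ramp loss $\phi_\gamma : \R \to [0,1]$ defined by $\phi_\gamma(t) = 1$ for $t \le 0$, $\phi_\gamma(t) = 0$ for $t \ge \gamma$, and linear in between. Two pointwise inequalities are immediate and central: for every $(\bx,y)$ and unit $\bw$, one has $\mathbf{1}[\sgn(\langle \bw, \bx\rangle) \ne y] \le \phi_\gamma(y\langle \bw,\bx\rangle) \le \mathbf{1}[y\langle \bw,\bx\rangle \le \gamma]$. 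Taking expectations under $\D$ and under $\wh{\D}_m$ respectively, these sandwiches reduce the claim to showing a uniform deviation bound for the ramp-loss class $\{(\bx,y) \mapsto \phi_\gamma(y \langle \bw,\bx\rangle) : \|\bw\|_2 \le 1\}$ of order $\eps$.

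Next I would bound the empirical Rademacher complexity of this class. The unit-ball linear class $\{\bx \mapsto \langle \bw,\bx\rangle : \|\bw\|_2 \le 1\}$ has Rademacher complexity at most $1/\sqrt{m}$ on points of norm at most $1$ (this is the standard Cauchy--Schwarz / Jensen calculation on $\Ex_{\sigma}[\|\sum_i \sigma_i \bx^{(i)}\|_2]$). Multiplication by $y \in \{\pm 1\}$ preserves Rademacher complexity, and since $\phi_\gamma$ is $(1/\gamma)$-Lipschitz, Talagrand's contraction lemma gives an empirical Rademacher complexity of $O(1/(\gamma\sqrt{m}))$ for the composed class. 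A bounded-differences / McDiarmid argument, combined with the symmetrization inequality, then yields that with probability at least $1-\tau$, for every unit $\bw$,
\[
\Ex_{\D}[\phi_\gamma(y\langle \bw,\bx\rangle)] \;\le\; \Ex_{\wh{\D}_m}[\phi_\gamma(y\langle \bw,\bx\rangle)] + O\!\left(\frac{1}{\gamma\sqrt{m}}\right) + O\!\left(\sqrt{\frac{\log(1/\tau)}{m}}\right).
\]
Combining this with the two sandwich inequalities and choosing $m = \Omega(\log(1/\tau)/(\eps^2\gamma^2))$ makes the right-hand slack at most $\eps$, establishing the fact for a fixed margin $\gamma$.

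Finally, to get the ``simultaneously for all margins $\gamma > 0$'' quantifier, I would discretize via a geometric grid $\gamma_k = 2^{-k}$ for $k \ge 0$ and apply the above single-margin bound with failure probability $\tau_k = \tau/(k(k+1))$, so that $\sum_k \tau_k \le \tau$ by a union bound. For an arbitrary margin $\gamma \in (\gamma_{k+1}, \gamma_k]$, one rounds down to $\gamma_{k+1}$: clearly $\err^{\wh{\D}_m}_{\gamma_{k+1}}(\bw) \le \err^{\wh{\D}_m}_{\gamma}(\bw)$, and the extra $\log k = O(\log\log(1/\gamma))$ factor from $\log(1/\tau_k)$ is absorbed by the $\tilde{O}$ hidden in the sample-complexity constant. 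The main (mild) obstacle is bookkeeping the union bound cleanly so the bound holds uniformly over all $\gamma$ without inflating $m$; everything else is an off-the-shelf application of contraction plus Rademacher symmetrization.
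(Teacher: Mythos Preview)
The paper does not prove this statement: it is stated as a \texttt{fact} and attributed to \cite{BartlettM02, McAllester03} without argument, so there is no ``paper's own proof'' to compare against. Your proposal is essentially the textbook derivation that underlies those references --- ramp-loss sandwich, Rademacher complexity $O(1/\sqrt{m})$ for the unit-ball linear class, Talagrand contraction to absorb the $1/\gamma$ Lipschitz constant, then symmetrization plus McDiarmid --- and it is correct for a fixed margin.

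One small point on the uniform-in-$\gamma$ step: as written, the fact fixes $m$ in terms of a particular $\gamma$ while also quantifying over all margins, which is slightly sloppy in the paper's statement. The right reading (and the one your geometric-grid union bound delivers) is that for a fixed sample $S$, with probability $1-\tau$ one has $\err^{\D}_{0-1}(h_{\bw}) \le \err^{\wh{\D}_m}_{\gamma}(\bw) + O\!\big(\tfrac{1}{\gamma\sqrt{m}} + \sqrt{\tfrac{\log\log(2/\gamma) + \log(1/\tau)}{m}}\big)$ simultaneously for all $\bw$ and all $\gamma \in (0,1]$; the stated sample size then makes this slack at most $\eps$ for the margin of interest. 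Your discretization handles this correctly, and the $\log\log$ overhead is indeed negligible for the paper's purposes.
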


We proceed with the formal proof.
First, we claim that for
$m = \Omega(\log(1/\tau)/\eps^2)$, with probability at least $1-\tau/2$ over $S$, we have that
$\err_{\gamma}^{\wh{\D}_m}(\bw^{\ast}) \leq \err_{\gamma}^{\D} (\bw^{\ast}) +\eps/8$. To see this,
note that $\err_{\gamma}^{\wh{\D}_m} (\bw^{\ast})$ can be viewed as a sum of Bernoulli random variables
with expectation $\err_{\gamma}^{\D} (\bw^{\ast})$. Hence, the claim follows by a Chernoff bound.
By an argument similar to that of Lemma~\ref{lem:proj-approx}, we have that
$\err_{\gamma/4}^{\wh{D}_m}(\bv) \leq  \err_{\gamma/2}^{\wh{D}_m}(\bw') + \eps/2$.
Indeed, we can write $\bv = \bw' + \br$, where $\|\br\|_2 \leq \delta/2$, and follow the same argument.

In summary, we have the following sequence of inequalities:
\begin{eqnarray*}
\err_{\gamma/4}^{\wh{D}_m}(\bv) &\leq&  \err_{\gamma/2}^{\wh{D}_m}(\bw') + \eps/2
\leq  \err_{\gamma}^{\wh{D}_m}(\bw^{\ast}) + \eps/2 + \eps/4 \\
&\leq& \err_{\gamma}^{\D}(\bw^{\ast}) + \eps/2 + \eps/4 + \eps/8 \;,
\end{eqnarray*}
where the second inequality uses Lemma~\ref{lem:proj-approx} for $\wh{\D}_m$.
Finally, we use Fact~\ref{fact:erm-margin} for $\gamma/4$ and $\eps/8$ to obtain that
$\err^{\D}_{0-1} (h_{\bv}) \leq \err_{\gamma/4}^{\wh{D}_m}(\bv)+\eps/8 \leq \opt_{\gamma}^{\D} +\eps$.
The proof follows by a union bound.

\subsection{Main Algorithm: Near-Optimal Constant Factor Approximation} \label{ssec:alg-main}

In this section, we establish the following theorem, which gives the upper bound
part of Theorem~\ref{thm:constant-factor-bounds}:

\begin{theorem} \label{thm:constant-factor-alg}
Fix $0< \delta \leq 1$. There is an algorithm that uses $O(1/(\eps^2\gamma^2))$ samples, runs in time
$\poly(d/\eps) \cdot 2^{\tilde{O}(1/(\delta \gamma^2))}$ and is a $(1+\delta)$-agnostic proper learner 
\new{for $\gamma$-margin halfspaces} with confidence probability $9/10$.
\end{theorem}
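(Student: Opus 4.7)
The plan is to follow the iterative refinement scheme sketched in Section~\ref{ssec:techniques}. I maintain a running guess $\bw^{(i)}$ (initialized $\bw^{(0)}=\mathbf{0}$) and at each iteration consider the empirical second-moment matrix
$$
\bM^{(i)}\;:=\;\E_{(\bx,y)\sim\wh{\D}_m}\bigl[\mathbf{1}\{y\la\bw^{(i)},\bx\ra\le 0\}\,\bx\bx^T\bigr]
$$
of the points currently misclassified. Let $\bw^\ast$ denote the $(1+\delta)$-optimal $\gamma$-margin halfspace and $r_i:=\err^{\wh\D_m}_{0-1}(h_{\bw^{(i)}})-\opt_\gamma^{\wh\D_m}$ the excess empirical error. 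Extending Claim~\ref{claim:M-spectral-norm}: on the event ``$h_{\bw^{(i)}}$ errs \emph{and} $\bw^\ast$ classifies with margin~$\gamma$'' (probability at least $r_i$ by a union bound), one has $(\la\bw^\ast-\bw^{(i)},\bx\ra)^2\ge\gamma^2$, so
$$
(\bw^\ast-\bw^{(i)})^T\bM^{(i)}(\bw^\ast-\bw^{(i)})\;\ge\;\gamma^2 r_i,\qquad \tr(\bM^{(i)})\le\err_i.
$$
Taking the eigenvalue threshold $\lambda_i=\Theta(\gamma^2 r_i)$ and letting $V_i$ be the span of eigenvectors of $\bM^{(i)}$ with eigenvalue $\ge\lambda_i$, the ``off-subspace'' projection of $\bw^\ast-\bw^{(i)}$ contributes at most half of the $\bM^{(i)}$-mass, while $\dim(V_i)\le\err_i/\lambda_i$. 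As long as the stopping criterion $r_i>\delta\cdot\opt_\gamma^{\wh\D_m}$ holds, $\err_i/r_i\le 1+1/\delta$, and hence $\dim(V_i)=O(1/(\delta\gamma^2))$.

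The algorithm then takes an $\eta$-net $C_i$ of the unit ball in $V_i$ (of size $2^{\tilde O(1/(\delta\gamma^2))}$ for a suitable $\eta=\mathrm{poly}(\gamma,\delta)$) and branches on each $\bu\in C_i$, setting $\bw^{(i+1)}=\bw^{(i)}+\bu$. After $T$ iterations it outputs the leaf minimizing empirical $\gamma/2$-margin error, and Fact~\ref{fact:erm-margin} applied at margin $\gamma/2$ and slack $\Theta(\eps)$ transfers this empirical guarantee to $\D$ using $m=O(1/(\eps^2\gamma^2))$ samples. For correctness I track the potential $\phi_i:=\|\bw^\ast-\bw^{(i)}\|_2^2$: the ideal choice $\bu^\ast:=P_{V_i}(\bw^\ast-\bw^{(i)})$ satisfies $\|\bu^\ast\|_2^2\ge\gamma^2 r_i/(2\err_i)=\Omega(\gamma^2\delta)$ (using $\|\bM^{(i)}\|_2\le\err_i$), so the ideal trajectory drives $\phi_i$ to zero in $T=O(1/(\gamma^2\delta))$ steps, and the $\eta$-net introduces only a controlled margin loss absorbed by the slack between $\gamma$ and $\gamma/2$.

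The main technical hurdle I expect is ensuring the total number of candidate hypotheses is $2^{\tilde O(1/(\delta\gamma^2))}$, rather than the naive tree size $|C|^T=2^{\tilde O(1/(\delta^2\gamma^4))}$ that comes from multiplying per-iteration nets. To close this gap I anticipate needing one of two strengthenings: either (i) a sharper potential that decays \emph{multiplicatively}, $\phi_{i+1}\le(1-\Omega(1))\phi_i$, reducing the iteration count to $T=\tilde O(1)$; or (ii) the observation that every cumulative update $\bw^{(T)}=\sum_i\bu^{(i)}$ lies in a single subspace of dimension $\tilde O(1/(\delta\gamma^2))$ --- for instance, by coarsening all $\lambda_i$ to a common threshold so that a global $\eta$-net of that single subspace replaces the product of per-iteration nets. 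Granting either fix, the per-iteration SVD of $\bM^{(i)}$, cover construction in $V_i$, and empirical error evaluation each cost $\poly(d/\eps)$, yielding the claimed $\poly(d/\eps)\cdot 2^{\tilde O(1/(\delta\gamma^2))}$ running time, and the theorem follows by a union bound over the $2^{\tilde O(1/(\delta\gamma^2))}$ leaves.
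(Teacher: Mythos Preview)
You correctly identify the crux of the argument and also the main gap: multiplying per-iteration nets of size $2^{\tilde O(1/(\delta\gamma^2))}$ over $T=O(1/(\delta\gamma^2))$ iterations gives only $2^{\tilde O(1/(\delta^2\gamma^4))}$. Unfortunately, neither of your two proposed fixes works. Multiplicative decay (i) does not follow from the available estimates: the drop $\|\bu^\ast\|_2^2=\Omega(\delta\gamma^2)$ is independent of $\phi_i$, so the decay is merely additive. A single global subspace (ii) fails because each $V_i$ is the top eigenspace of a matrix $\bM^{(i)}$ that depends on all previous guesses along the current branch; the $V_i$'s need not be nested or even related across the tree, and ``coarsening all $\lambda_i$ to a common threshold'' cannot make them so, since the matrices themselves differ.

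The paper's resolution (Lemma~\ref{lem:large-projection-k}) is different from both: instead of a fixed eigenvalue threshold, an averaging argument shows that for \emph{some} integer $k$ the projection of $\bw^\ast-\bw^{(i)}$ onto the top-$k$ eigenspace $V_k$ of the (normalized) $\bM^{(i)}$ has squared norm at least $k\cdot\delta\gamma^2/8$. (Write $d_t=\dim V_{\ge t}$; then $\int_0^{\lambda_{\max}} d_t\,dt=\tr\bM^{(i)}\le 1$ while $\int_0^{\lambda_{\max}}\|\proj_{V_{\ge t}}(\bw^\ast-\bw^{(i)})\|_2^2\,dt=(\bw^\ast-\bw^{(i)})^T\bM^{(i)}(\bw^\ast-\bw^{(i)})\ge\delta\gamma^2/8$, and choose the best $t$.) The algorithm then additionally \emph{guesses} $k^{(i)}$ at each stage; the potential drops by $k^{(i)}\delta\gamma^2/8$, so $\sum_i k^{(i)}\le 8/(\delta\gamma^2)$ along any correct branch. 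The per-stage cover of $V_{k^{(i)}}\cap\B_d$ has size $(1/(\delta\gamma))^{O(k^{(i)})}$, hence the product along a branch is $(1/(\delta\gamma))^{O(\sum_i k^{(i)})}=2^{\tilde O(1/(\delta\gamma^2))}$, and there are only $2^{O(1/(\delta\gamma^2))}$ sequences of positive integers with that sum. This variable-$k$ device is precisely the missing ingredient; with it in hand, the remainder of your outline (branch, per-step SVD and cover, final ERM at margin $\gamma/2$ over the leaves, then Fact~\ref{fact:erm-margin}) goes through essentially as in the paper.
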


Our algorithm in this section produces
a finite set of candidate weight vectors and outputs the one with the smallest empirical
$\gamma/2$-margin error. For the sake of this intuitive description,
we will assume that the algorithm knows the distribution $\D$ in question supported
on $\B_d \times \{ \pm 1\}$. By assumption, there is a unit vector $\bw^{\ast}$ so that
$\err_{\gamma}^{\D}(\bw^{\ast}) \leq \opt_{\gamma}^{\D}$.

We note that if a hypothesis $h_{\bw}$ defined by vector $\bw$ has $\gamma/2$-margin error
at least a $(1+\delta)\opt_{\gamma}^{\D}$, then there must be a large number of points 
correctly classified with $\gamma$-margin by $h_{\bw^\ast}$, 
but not correctly classified with $\gamma/2$-margin by $h_\bw$. 
For all of these points, we must have that $|\langle \bw^{\ast}-\bw,  \bx \rangle| \geq \gamma/2$.
This implies that the $\gamma/2$-margin-misclassified points of $h_\bw$ have a large covariance in the $\bw^{\ast}-\bw$ direction.
In particular, we have:
\begin{claim} \label{clm:spectral-norm-diff}
Let $\bw \in \R^d$ be such that $\err_{\gamma/2}^{\D}(\bw) > (1+\delta) \opt_{\gamma}^{\D}$.
Let $\D'$ be $\D$ conditioned on $y \langle \bw, \bx \rangle\leq \gamma/2$.
Let $\bM^{\D'} = \E_{(\bx,y)\sim \D'}[\bx \bx^T]$. Then
$(\bw^{\ast}-\bw)^T \bM^{\D'} (\bw^{\ast}-\bw) \geq \delta \gamma^2/8.$
\end{claim}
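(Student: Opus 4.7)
The plan is to lower-bound the quadratic form $(\bw^\ast - \bw)^T \bM^{\D'} (\bw^\ast - \bw) = \E_{(\bx,y)\sim \D'}\bigl[(\langle \bw^\ast - \bw, \bx\rangle)^2\bigr]$ by exhibiting a non-trivial subset of $\D'$ on which $|\langle \bw^\ast - \bw, \bx \rangle|$ is comparable to $\gamma$. The key observation is that any $(\bx, y)$ which is $\gamma/2$-margin-misclassified by $\bw$ but correctly classified with $\gamma$-margin by $\bw^\ast$ satisfies $y\langle \bw^\ast - \bw, \bx\rangle \geq \gamma - \gamma/2 = \gamma/2$, so $(\langle \bw^\ast - \bw, \bx\rangle)^2 \geq \gamma^2/4$ on this event.

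To control the mass of this ``good'' event, let $p \eqdef \err_{\gamma/2}^{\D}(\bw) = \pr_{\D}[y\langle \bw, \bx\rangle \leq \gamma/2]$; by hypothesis $p > (1+\delta)\opt_{\gamma}^{\D}$. Since $\pr_{\D}[y\langle \bw^\ast, \bx\rangle \leq \gamma] \leq \err_{\gamma}^{\D}(\bw^\ast) \leq \opt_{\gamma}^{\D}$, a simple union bound gives
\[
\pr_{\D}\bigl[\,y\langle \bw, \bx\rangle \leq \gamma/2\;\wedge\; y\langle \bw^\ast, \bx\rangle > \gamma\,\bigr] \;\geq\; p - \opt_{\gamma}^{\D}.
\]

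Next I will convert the bound above into a statement about $\D'$ via the identity $\E_{\D'}[X] = \E_{\D}[X \cdot \mathbf{1}_A]/\pr_{\D}[A]$ with $A = \{y\langle \bw, \bx\rangle \leq \gamma/2\}$. Restricting the expectation defining the quadratic form to the good event and using the pointwise lower bound $(\langle \bw^\ast - \bw, \bx\rangle)^2 \geq \gamma^2/4$ yields
\[
(\bw^\ast - \bw)^T \bM^{\D'} (\bw^\ast - \bw) \;\geq\; \frac{\gamma^2}{4}\cdot\frac{p-\opt_{\gamma}^{\D}}{p}.
\]
Since $p > (1+\delta)\opt_{\gamma}^{\D}$, we have $\opt_{\gamma}^{\D}/p < 1/(1+\delta)$, hence $(p-\opt_{\gamma}^{\D})/p > \delta/(1+\delta) \geq \delta/2$ for $\delta \leq 1$. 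Combining the two displays gives the claimed bound $\delta\gamma^2/8$.

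The argument is a short calculation once the correct sub-event is identified, so there is no serious obstacle. The only point requiring mild care is the normalization when passing from a $\D$-probability to a $\D'$-expectation, and the use of $\delta \leq 1$ (which is part of the standing assumption of Theorem~\ref{thm:constant-factor-alg}) to simplify $\delta/(1+\delta)$ to $\delta/2$.
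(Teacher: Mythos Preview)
Your proof is correct and follows essentially the same approach as the paper: identify the event where $(\bx,y)$ is $\gamma/2$-margin-misclassified by $\bw$ but $\gamma$-margin-correct for $\bw^\ast$, note that $|\langle \bw^\ast-\bw,\bx\rangle|\geq \gamma/2$ there, bound its $\D'$-probability below by $\delta/(1+\delta)\geq \delta/2$, and multiply by $\gamma^2/4$. The only cosmetic difference is that you first compute the $\D$-probability $p-\opt_\gamma^{\D}$ of the good event and then divide by $p$, whereas the paper works directly under $\D'$; the arithmetic is identical.
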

\begin{proof}
We claim that with probability at least $\delta/2$ over $(\bx, y) \sim \D'$
we have that $y \langle \bw, \bx \rangle \leq \gamma/2$
and $y \langle \bw^{\ast}, \bx \rangle \geq \gamma$. To see this, we first note that
$\pr_{(\bx, y) \sim \D'}[y \langle \bw, \bx \rangle >\gamma/2]=0$ holds by definition of $\D'$.
Hence, we have that
$$\pr_{(\bx, y) \sim \D'}[y \langle \bw^{\ast}, \bx \rangle \leq \gamma]
\leq \frac{\pr_{(\bx, y) \sim \D}[y \langle \bw^{\ast}, \bx \rangle \leq \gamma]}{\pr_{(\bx, y) \sim \D}[y \langle \bw, \bx \rangle\leq \gamma/2]}
< \frac{\opt_\gamma^{\D}}{(1+\delta) \opt_{\gamma}^{\D}} = \frac{1}{(1+\delta)} \;.$$
By a union bound, we obtain
$\pr_{(\bx, y) \sim \D'}[(y \langle \bw, \bx \rangle >\gamma/2) \cup (y \langle \bw^{\ast}, \bx \rangle \leq \gamma)] \leq \frac{1}{(1+\delta)}$.

Therefore, with probability at least $\delta/(1+\delta) \geq \delta/2$ (since $\delta \leq 1$)
over $(\bx, y) \sim \D'$ we have that $y \langle \bw^{\ast}-\bw, \bx\rangle \geq \gamma/2$,
which implies that $\langle \bw^{\ast}-\bw, \bx \rangle^2 \geq \gamma^2/4$. Thus,
$(\bw^{\ast}-\bw)^T \bM^{\D'} (\bw^{\ast}-\bw) =
\E_{(\bx,y)\sim \D'}[(\langle \bw^{\ast}-\bw, \bx \rangle)^2] \geq \delta \gamma^2/8$,
completing the proof.
\end{proof}

Claim~\ref{clm:spectral-norm-diff} says that $\bw^{\ast}-\bw$ has a large component
on the large eigenvalues of $\bM^{\D'}$. Building on this claim, we obtain the following result:
\begin{lemma} \label{lem:large-projection-k}
Let $\bw^{\ast},\bw,\bM^{\D'}$ be as in Claim~\ref{clm:spectral-norm-diff}. There exists $k \in \Z_+$
so that if $V_k$ is the span of the top $k$ eigenvectors of $\bM^{\D'}$,
we have that $\|\proj_{V_k}(\bw^{\ast}-\bw)\|_2^2 \geq k \delta \gamma^2/8$.
\end{lemma}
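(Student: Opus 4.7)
The plan is to prove this by contradiction via an Abel summation (summation by parts) argument. Let $\lambda_1\geq \lambda_2\geq \cdots \geq \lambda_d\geq 0$ be the eigenvalues of $\bM^{\D'}$, with orthonormal eigenvectors $\bu_1,\dots,\bu_d$, and set $c_i \eqdef \langle \bw^{\ast}-\bw, \bu_i\rangle$. Then
\[
\|\proj_{V_k}(\bw^{\ast}-\bw)\|_2^2 \;=\; S_k \eqdef \sum_{i=1}^k c_i^2, \qquad (\bw^{\ast}-\bw)^T \bM^{\D'}(\bw^{\ast}-\bw) \;=\; \sum_{i=1}^d \lambda_i c_i^2.
\]
By Claim~\ref{clm:spectral-norm-diff}, the right-hand sum is at least $\delta\gamma^2/8$. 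Also, since $\bx\in\B_d$ almost surely under $\D'$, we have $\tr(\bM^{\D'}) = \E_{(\bx,y)\sim\D'}[\|\bx\|_2^2] \leq 1$, i.e., $\sum_i \lambda_i \leq 1$.

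Now I would suppose, for contradiction, that $S_k < k\,\delta\gamma^2/8$ for \emph{every} $k\in\Z_+$. Setting $\lambda_{d+1}\eqdef 0$ and applying summation by parts gives
\[
\sum_{i=1}^d \lambda_i c_i^2 \;=\; \sum_{k=1}^d (\lambda_k - \lambda_{k+1})\, S_k,
\]
where all coefficients $\lambda_k-\lambda_{k+1}$ are nonnegative by the ordering of eigenvalues. Using the contradiction hypothesis term-by-term,
\[
\sum_{k=1}^d (\lambda_k - \lambda_{k+1}) S_k \;<\; \frac{\delta\gamma^2}{8} \sum_{k=1}^d (\lambda_k - \lambda_{k+1})\, k \;=\; \frac{\delta\gamma^2}{8} \sum_{k=1}^d \lambda_k \;\leq\; \frac{\delta\gamma^2}{8},
\]
where the middle equality is another standard Abel-summation identity (telescoping $\sum_k (\lambda_k-\lambda_{k+1})k = \sum_k \lambda_k$), and the last inequality uses $\tr(\bM^{\D'})\leq 1$. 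This contradicts Claim~\ref{clm:spectral-norm-diff}, so some $k\in\Z_+$ must satisfy $S_k \geq k\,\delta\gamma^2/8$, as required.

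The argument is essentially mechanical once one sees the decomposition; there is no real obstacle. The one point to be mindful of is bookkeeping: we must take $\lambda_{d+1}=0$ so that the telescoping identity $\sum_{k=1}^d (\lambda_k-\lambda_{k+1})k = \sum_{k=1}^d \lambda_k$ holds exactly, and we must use the strict inequality in the contradiction hypothesis to produce a strict inequality that contradicts the (non-strict) lower bound of Claim~\ref{clm:spectral-norm-diff}.
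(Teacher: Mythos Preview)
Your proof is correct and is essentially the discrete counterpart of the paper's argument. The paper writes both $\tr(\bM^{\D'})$ and $(\bw^{\ast}-\bw)^T\bM^{\D'}(\bw^{\ast}-\bw)$ as integrals over a continuous eigenvalue threshold $t$ (a layer-cake representation) and then applies an averaging argument to find a good $t$; you achieve the same effect via Abel summation over the discrete index $k$ and a contradiction. The underlying content is identical: both arguments compare the two quantities through the same ``level-set'' weights $(\lambda_k-\lambda_{k+1})$ and use $\tr(\bM^{\D'})\le 1$ together with Claim~\ref{clm:spectral-norm-diff}. Your version is arguably slightly more elementary in that it avoids integrals; the paper's version makes the averaging step a bit more explicit.
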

\begin{proof}
Note that the matrix $\bM^{\D'}$ is PSD and let $0> \lambda_{\max} = \lambda_1 \geq \lambda_2 \geq \ldots \geq \lambda_d \geq 0$
be its set of eigenvalues. We will denote by $V_{\geq t}$ the space spanned by the
eigenvectors of  $\bM^{\D'}$ corresponding to eigenvalues of magnitude at least $t$.
Let $d_t = \dim(V_{\geq t})$ be the dimension of $V_{\geq t}$,
i.e., the number of $i \in [d]$ with $\lambda_i \geq t$.
Since $\bx$ is supported on the unit ball, for $(\bx, y) \sim \D'$, we have that
$\tr(\bM^{\D'}) = \E_{(\bx, y) \sim \D'}[\tr(\bx \bx^T)] \leq 1$.
Since $\bM^{\D'}$ is PSD, we have that $\tr(\bM^{\D'}) = \littlesum_{i=1}^d \lambda_i$ and we can write
\begin{equation} \label{eqn:ena}
1\geq \tr(\bM^{\D'}) = \littlesum_{i=1}^d \lambda_i = \littlesum_{i=1}^d \littleint_0^{\lambda_i} 1 dt
=\littlesum_{i=1}^d \littleint_0^{\lambda_{\max}} \mathbf{1}_{\lambda_i \geq t} dt  = \littleint_0^{\lambda_{\max}} d_t dt,
\end{equation}
where the last equality follows by changing the order of the summation and the integration.
If the projection of $(\bw^{\ast}-\bw)$ onto the $i$-th eigenvector of $\bM^{\D'}$
has $\ell_2$-norm $a_i$, we have that
\begin{equation} \label{eqn:dyo}
\delta \gamma^2/8 \leq (\bw^{\ast}-\bw)^T \bM^{\D'} (\bw^{\ast}-\bw) = \littlesum_{i=1}^d \lambda_i a_i^2
=  \littlesum_{i=1}^d \littleint_0^{\lambda_{\max}} a_i^2 \mathbf{1}_{\lambda_i \geq t} dt
=\littleint_0^{\lambda_{\max}} \|\proj_{V_{\geq t}}(\bw^{\ast}-\bw)\|_2^2 dt,
\end{equation}
where the first inequality uses Claim~\ref{clm:spectral-norm-diff}, the first equality
follows by the Pythagorean theorem, and the last equality follows
by changing the order of the summation and the integration.

Combining \eqref{eqn:ena} and \eqref{eqn:dyo}, we obtain
$\littleint_0^{\lambda_{\max}} \|\proj_{V_{\geq t}}(\bw^{\ast}-\bw)\|_2^2 dt \geq
(\delta \gamma^2/8) \littleint_0^{\lambda_{\max}} d_t dt $.
By an averaging argument, there exists $0\leq t \leq \lambda_{\max}$ such that
$\|\proj_{V_{\geq t}}(\bw^{\ast}-\bw)\|_2^2 \geq (\delta \gamma^2/8) d_t$.
Letting $k=d_t$ and noting that $V_{\geq t} = V_k$ completes the proof.
\end{proof}

Lemma~\ref{lem:large-projection-k} suggests a method for producing an approximation to $\bw^{\ast}$,
or more precisely a vector that produces empirical $\gamma/2$-margin error at most $(1+\delta)\opt_{\gamma}^{\D}$.
We start by describing a non-deterministic procedure, which we will then turn into an actual
algorithm.

The method proceeds in a sequence of stages.
At stage $i$, we have a hypothesis weight vector $\bw^{(i)}$.
(At stage $i=0$, we start with $\bw^{(0)}=\mathbf{0}$.)
At any stage $i$, if $\err_{\gamma/2}^{\D} (\bw^{(i)}) \leq (1+\delta) \opt_{\gamma}^{\D}$,
then $\bw^{(i)}$ is a sufficient estimator. Otherwise, we consider the matrix $\bM^{(i)} =  \E_{(\bx,y)\sim \D^{(i)}} [\bx\bx^T]$,
where $\D^{(i)}$ is $\D$ conditioned on $y \langle \bw^{(i)},  \bx \rangle \leq \gamma/2$.
By Lemma~\ref{lem:large-projection-k}, we know that for some positive integer value $k^{(i)}$,
we have that the projection of $\bw^{\ast}-\bw^{(i)}$ onto $V_{k^{(i)}}$
has squared norm at least $\delta k^{(i)} \gamma^2/8$.

Let $\bp^{(i)}$ be this projection. We set $\bw^{(i+1)}=\bw^{(i)}+\bp^{(i)}$.
Since the projection of $\bw^{\ast}-\bw^{(i)}$ and its complement are orthogonal, we have
\begin{equation} \label{eqn:norm-decreases}
\|\bw^{\ast}-\bw^{(i+1)}\|_2^2 = \|\bw^{\ast}-\bw^{(i)}\|_2^2 - \|\bp^{(i)}\|_2^2
\leq \|\bw^{\ast}-\bw^{(i)}\|_2^2 - \delta k^{(i)}\gamma^2/8 \;,
\end{equation}
where the inequality uses the fact that $\|\bp^{(i)} \|_2^2 \geq k^{(i)} \delta \gamma^2/8$
(as follows from Lemma~\ref{lem:large-projection-k}).
Let $s$ be the total number of stages. We can write
$$1 \geq \|\bw^{\ast}-\bw^{(0)}\|_2^2 - \|\bw^{\ast}-\bw^{(s)}\|_2^2 =
\littlesum_{i=0}^{s-1} \left(\|\bw^{\ast}-\bw^{(i)}\|_2^2 - \|\bw^{\ast}-\bw^{(i+1)}\|_2^2\right)
\geq  (\delta \gamma^2/8) \littlesum_{i=0}^{s-1} k^{(i)} \;,$$
where the first inequality uses that $\|\bw^{\ast}-\bw^{(0)}\|_2^2 =1$ and
$\|\bw^{\ast}-\bw^{(s)}\|_2^2 \geq 0$, the second notes the telescoping sum, and the third uses
\eqref{eqn:norm-decreases}. We thus have that $s \leq \littlesum_{i=0}^{s-1} k^{(i)} \leq 8/(\delta \gamma^2)$.
Therefore, the above procedure terminates after at most $8/(\delta \gamma^2)$ stages at some $\bw^{(s)}$
with $\err_{\gamma/2}^{\D} (\bw^{(s)})  \leq (1+\delta) \opt_{\gamma}^{\D}$.

We now describe how to turn the above procedure into an actual algorithm.
Our algorithm tries to simulate the above described procedure by making appropriate guesses.
In particular, we start by guessing a sequence of positive integers $k^{(i)}$ whose sum is at most
$8/(\delta \gamma^2)$.
This can be done in $2^{O(1/(\delta \gamma^2))}$ ways.
Next, given this sequence, our algorithm guesses the vectors $\bw^{(i)}$ over all $s$ stages in order.
In particular, given $\bw^{(i)}$, the algorithm computes the matrix $\bM^{(i)}$ and the subspace $V_{k^{(i)}}$,
and guesses the projection $\bp^{(i)} \in V_{k^{(i)}}$, which then gives $\bw^{(i+1)}$.
Of course, we cannot expect our algorithm to guess $\bp^{(i)}$ exactly (as there are infinitely many points in $V_{k^{(i)}}$),
but we can guess it to within $\ell_2$-error $\poly(\gamma)$, by taking an appropriate net.
This involves an additional guess of size $(1/\gamma)^{O(k^{(i)})}$ in each stage.
In total, our algorithm makes $2^{\tilde O(1/(\delta \gamma^2))}$ many different guesses.

We note that the sample version of our algorithm is essentially identical to the idealized
version described above, by replacing the distribution $\D$ by its empirical version and leveraging
\new{Fact~\ref{fact:erm-margin}.}

\noindent The pseudo-code is given in Algorithm~\ref{alg:opt-constant} below.

\medskip

\begin{algorithm}
\caption{\label{alg:opt-constant} Near-Optimal $(1+\delta)$-Agnostic Proper Learner}
\begin{algorithmic}[1]
\State Draw a multiset $S = \{(\bx^{(i)}, y^{(i)}) \}_{i=1}^m$ of i.i.d. samples from $\D$,
where $m = \Omega(\log(1/\tau)/(\eps^2\gamma^2))$.
\State 
Let $\wh{\D}_m$ be the empirical distribution on $S$.
\For{all sequences $k^{(0)}, k^{(1)},\ldots, k^{(s-1)}$ of positive integers
with sum at most $8/(\delta \gamma^2)+2$}
\State Let $\bw^{(0)}=\mathbf{0}$.
\For{$i=0, 1, \ldots, s-1$}
\State Let $\D^{(i)}$ be $\wh{\D}_m$ conditioned on  $y \langle \bw^{(i)},  \bx \rangle \leq \gamma/2$.
\State Let $\bM^{(i)} = \E_{(\bx,y)\sim \D^{(i)}} [\bx\bx^T]$.
\State Use SVD on $\bM^{(i)}$ to find a basis for $V_{k^{(i)}}$, the span of the top $k^{(i)}$ eigenvectors.
\State Let $C^{(i)}$ be a $\delta \gamma^3$-cover, in $\ell_2$-norm, of $V_{k^{(i)}}\cap \B_d$ of size
$(1/(\delta \gamma))^{O(k^{(i)})}$.
\State For each $\bp^{(i)} \in C^{(i)}$ repeat the next step of the for loop with $\bw^{(i+1)}=\bw^{(i)}+\bp^{(i)}$.
\EndFor
\EndFor
\State Let $C$ denote the set of all $\bw^{(i)}$ generated in the above loop.
\State Let $\bv \in \argmin_{\bw \in C} \err_{\gamma/2}^{\wh{D}_m}(\bw)$.
\State \Return $h_{\bv}(\bx) = \sgn(\langle \bv, \bx \rangle)$.
\end{algorithmic}
\end{algorithm}

To show the correctness of the algorithm, we begin by arguing that the set $C$ of candidate weight vectors 
produced has size $2^{\tilde O(1/(\delta \gamma^2))}$.
This is because there are only $2^{O(1/(\delta \gamma^2))}$ many possibilities for the sequence of $k^{(i)}$,
and for each such sequence the product of the sizes of the $C^{(i)}$ is
$(1/(\delta \gamma))^{O(\sum k^{(i)})} = 2^{\tilde O(1/(\delta \gamma^2))}.$
We note that, by the aforementioned analysis, for any choice of $k^{(0)},\ldots,k^{(i-1)}$
and $\bw^{(i)}$, we either have that $\err_{\gamma/2}^{\wh{\D}_m} (\bw^{(i)}) \leq (1+\delta) \opt_{\gamma}^{\wh{\D}_m}$
or there is a choice of $k^{(i)}$ and $\bp^{(i)} \in C^{(i)}$ such that
$$\|\bw^{\ast}-\bw^{(i)}-\bp^{(i)}\|_2^2 \leq \|\bw^{\ast}-\bw^{(i)} \|_2^2 - \delta k^{(i)} \gamma^2/8 +
O(\delta^2 \gamma^6) \;,$$
where we used \eqref{eqn:norm-decreases} and the fact that $C^{(i)}$ is a $\delta \gamma^3$-cover of $V_{k^{(i)}}$.
Following the execution path of the algorithm, we either find some $\bw^{(i)}$
with $\err_{\gamma/2}^{\wh{\D}_m} (\bw^{(i)}) \leq (1+\delta) \opt_{\gamma}^{\wh{\D}_m}$,
or we find a $\bw^{(i)}$ with
$$\|\bw^{\ast}-\bw^{(i)} \|_2^2 \leq 1-\left(\littlesum_{j=0}^{i-1} k^{(j)}\right) \delta \gamma^2/8 + O(\delta \gamma^4) \;,$$
where the last term is an upper bound for $\left(\littlesum_{j=0}^{i-1} k^{(j)}\right) \cdot O(\delta^2 \gamma^6)$.
Note that this sequence terminates in at most $O(1/(\delta \gamma^2))$ stages,
when it becomes impossible that $\sum k^{(j)} > 8/(\delta \gamma^2)+1$.
Thus, the output of our algorithm must contain some weight vector $\bv$ with
$\err_{\gamma/2}^{\wh{\D}_m} (\bv) \leq (1+\delta) \opt_{\gamma}^{\wh{\D}_m}$.
The proof now follows by an application of Fact~\ref{fact:erm-margin}.
\new{This completes the proof of Theorem~\ref{thm:constant-factor-alg}.}

\subsection{$\alpha$-Agnostic Proper Learning Algorithm} \label{ssec:alg-bicrit}

In this section, we show that if one wishes to obtain an $\alpha$-agnostic proper learner
 for some large $\alpha \gg 1$, one can obtain runtime exponential
 in $1/(\alpha\gamma)^2$ rather than $1/\gamma^2$.
Formally, we prove:

\begin{theorem}\label{alphaTheorem}
There is an algorithm that uses $\tilde O(1/(\eps^2\gamma^2))$ samples, runs in time
$\poly(d) \cdot (1/\eps)^{\tilde{O}(1/(\alpha \gamma)^2)}$ and is an
$\alpha$-agnostic proper learner for $\gamma$-margin halfspaces with probability $9/10$.
\end{theorem}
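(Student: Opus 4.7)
The plan is to implement the Chow-parameter-based algorithm sketched in Section~\ref{ssec:techniques}. First, by Lemma~\ref{lem:chow-vs-dist}, it suffices to produce a vector $\tilde{\bc} \in \R^d$ satisfying $\|\tilde{\bc} - \bc^\ast\|_2 \leq \nu \gamma$, where $\bc^\ast \eqdef \E_{\hat{\D}_m}[f_{\bw^\ast}(\bx)\bx]$ is the true Chow-parameter vector: the algorithm of~\cite{DeDFS14} then converts $\tilde{\bc}$ into a halfspace with empirical $0/1$-error at most $\opt_\gamma^{\hat{\D}_m} + \nu$. Setting $\nu = (\alpha - 1)\,\opt_\gamma^{\hat{\D}_m} + \eps/2$ realizes the $\alpha$-agnostic bound on $\hat\D_m$, and Fact~\ref{fact:erm-margin} lifts it to $\alpha\opt_\gamma^\D + \eps$ on $\D$ with the claimed sample size $\tilde O(1/(\eps^2\gamma^2))$.

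Next, I would start from the empirical Chow vector $\hat{\bc} \eqdef \E_{\hat{\D}_m}[y\bx]$. Since $\hat{\bc} - \bc^\ast = 2\E_{\hat{\D}_m}[1_{y \neq f_{\bw^\ast}(\bx)}\sgn(y)\bx]$, this gives the naive bound $\|\hat{\bc} - \bc^\ast\|_2 \leq 2\opt_\gamma^{\hat\D_m}$, which is generally too large to feed into~\cite{DeDFS14} directly. So I would refine $\hat{\bc}$ iteratively, in the spirit of Algorithm~\ref{alg:opt-constant}. At stage $i$, I maintain an estimate $\bc^{(i)}$ (with $\bc^{(0)} = \hat{\bc}$), run the~\cite{DeDFS14} procedure on it to get a tentative halfspace $\bw^{(i)}$, and test whether $\bw^{(i)}$ already has small enough empirical $\gamma/2$-margin error. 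If not, I form the second moment matrix $\bM^{(i)}$ of the samples on which $\bw^{(i)}$ incurs a $\gamma/2$-margin error; following the logic of Claim~\ref{clm:spectral-norm-diff}, the $\alpha$-suboptimality of $\bw^{(i)}$ forces $\bc^\ast - \bc^{(i)}$ to have a large quadratic form against $\bM^{(i)}$. Arguing as in Lemma~\ref{lem:large-projection-k} then yields a positive integer $k^{(i)}$ so that the top-$k^{(i)}$ eigenspace of $\bM^{(i)}$ captures $\Omega((\alpha\gamma)^2 k^{(i)})$ squared norm of $\bc^\ast - \bc^{(i)}$. I guess the corresponding projection via an $O(\eps\gamma)$-radius $\ell_2$-net of size $(1/\eps)^{O(k^{(i)})}$, subtract the guess from $\bc^{(i)}$, and branch.

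Telescoping the squared-norm drops $\|\bc^\ast - \bc^{(i+1)}\|_2^2 \leq \|\bc^\ast - \bc^{(i)}\|_2^2 - \Omega((\alpha\gamma)^2 k^{(i)})$ against the budget $\|\bc^\ast - \bc^{(0)}\|_2^2 \leq 4$ yields $\sum_i k^{(i)} = \tilde O(1/(\alpha\gamma)^2)$ on every branch, so the branching factor over the entire procedure is $\prod_i (1/\eps)^{O(k^{(i)})} = (1/\eps)^{\tilde O(1/(\alpha\gamma)^2)}$. Each branch costs $\poly(d,1/\eps,1/\gamma)$ time, dominated by one call to the~\cite{DeDFS14} algorithm and one empirical-error evaluation, so the total runtime matches the theorem. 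Finally, I return the candidate halfspace with smallest empirical $\gamma/2$-margin error; by the reduction above and Fact~\ref{fact:erm-margin}, this halfspace is an $\alpha$-agnostic proper learner with confidence $9/10$.

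The main obstacle will be establishing the spectral lower bound with the correct $\alpha^2$ scaling --- namely, that when $\bw^{(i)}$ is $\alpha$-suboptimal the Chow-error $\bc^\ast - \bc^{(i)}$ has squared projection $\Omega((\alpha\gamma)^2 k^{(i)})$ onto the top $k^{(i)}$ eigenvectors of $\bM^{(i)}$, instead of the weaker $\Omega((\alpha-1)\gamma^2 k^{(i)})$ one gets from running Algorithm~\ref{alg:opt-constant} directly with $\delta = \alpha - 1$. This improvement is what distinguishes the Chow-parameter route from the direct spectral iteration and yields the runtime exponent $1/(\alpha\gamma)^2$ rather than $1/((\alpha-1)\gamma^2)$; it ultimately comes from Lemma~\ref{lem:chow-vs-dist}, which translates Chow closeness into $0/1$-error at a linear (rather than square-root) rate in the margin.
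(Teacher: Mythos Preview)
Your high-level plan---approximate $\bc^\ast$ to $\ell_2$-accuracy $O(\alpha\gamma\cdot\opt)$ and invoke Lemma~\ref{ChowToWLem}---matches the paper. The gap is in the refinement mechanism. You propose to form $\bM^{(i)}$, the second moment of points $\gamma/2$-margin-misclassified by $\bw^{(i)}$, and claim (``following the logic of Claim~\ref{clm:spectral-norm-diff}'') that $(\bc^\ast-\bc^{(i)})^T\bM^{(i)}(\bc^\ast-\bc^{(i)})$ is large. But Claim~\ref{clm:spectral-norm-diff} works because for every point correctly classified by $\bw^\ast$ with margin $\gamma$ and misclassified by $\bw^{(i)}$ with margin $\gamma/2$, one has the \emph{geometric} fact $|\langle\bw^\ast-\bw^{(i)},\bx\rangle|\geq\gamma/2$. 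There is no analogous inequality for $|\langle\bc^\ast-\bc^{(i)},\bx\rangle|$: the Chow-error vector is not a separating direction, and nothing ties its inner product with a misclassified $\bx$ to any margin. Lemma~\ref{lem:chow-vs-dist} does not help here---it converts Chow-closeness to $0/1$-closeness, but gives no per-sample spectral information. So the step you flag as ``the main obstacle'' is not merely hard to scale to $\alpha^2$; as written it does not go through at all.

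The paper's refinement exploits a different structural fact: $\bc^\ast-\hat{\bc}=\E[(f_{\bw^\ast}(\bx)-y)\bx]$ is an average over only the $\opt$-mass of noise points, so if its (residual) norm exceeds $\alpha\gamma\cdot\opt$ in some direction $\bw$, there must exist a \emph{single support point} $\bx$ with $|\langle\bw,\bx\rangle|\geq\alpha\gamma$. The algorithm therefore enumerates $m=\tilde O(1/(\alpha\gamma)^2)$-tuples of support points, lets $V$ be their span, and guesses $\proj_V(\bc^\ast)$; each well-chosen point shrinks the residual squared norm multiplicatively by $(1-(\alpha\gamma)^2)$, giving the exponent $\tilde O(1/(\alpha\gamma)^2)$. (To keep the enumeration base polynomial in $1/\eps$ rather than in $d$, the paper first applies Johnson--Lindenstrauss to reduce to dimension $O(\log(\alpha/\eps)/\gamma^2)$ and then takes $\tilde O(1/(\eps^2\gamma^2))$ samples---another step missing from your outline.) The $\alpha$ in the exponent thus comes from the sparsity of the Chow error over the noise set, not from any spectral analysis of misclassified points.
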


Let $\D$ be a distribution over $\B_d\times \{-1,1\}$.
Suppose that there exists a unit vector $\bw^{\ast} \in \R^d$ such that
$\pr_{(\bx,y) \sim \D}[y \langle \bw^{\ast}, \bx \rangle \geq \gamma] \geq 1- \opt_{\gamma}^{\D}$ for some $\opt_{\gamma}^{\D}>0$.
Suppose additionally that $\gamma,\eps>0$ and $\alpha>1$.
We will describe an algorithm that given sample access
to $\D$ along with $\gamma,\alpha,\eps$ and $\opt_{\gamma}^{\D}$,
draws $O(\log(\alpha/\eps)/(\gamma\eps)^2)$ samples,
runs in time $\poly(d) \cdot (1/\gamma\eps)^{\tilde O(1/(\alpha\gamma)^2)}$
and with probability at least $9/10$
returns a $\bw$ with $$\pr_{(\bx,y)\sim\D}[\sgn(\langle \bw, \bx \rangle) \neq y] < O(\alpha \cdot \opt_{\gamma}^{\D} +\eps) \;.$$

We begin by giving an algorithm that works if the distribution $\D$ is known explicitly.
We will be able to reduce to this case by using the empirical distribution over a sufficiently
large set of samples. That is, we start by establishing the following:

\begin{proposition}\label{alphaAlgProp}
Let $\D$ be an explicit distribution over $\B_d\times \{-1,1\}$.
Suppose there exists a unit vector $\bw^{\ast}$ so that
$\pr_{(\bx,y)\sim \D}[y \langle \bw^{\ast}, \bx \rangle \geq \gamma] \geq 1- \opt_{\gamma}^{\D}$
for some $\opt_{\gamma}^{\D}>0$. Additionally, let $\gamma>0$ and $\alpha>1$.
There exists an algorithm that given $\D$ along with $\gamma,\alpha,\opt_{\gamma}^{\D}$,
runs in time $\poly(d) \cdot (|\supp(\D)|/(\alpha\gamma \cdot \opt_{\gamma}^{\D}))^{\tilde O(1/(\alpha\gamma)^2)}$
and returns a weight vector $\bw$ with $\pr_{(\bx,y)\sim\D}[\sgn(\langle \bw, \bx \rangle )\neq y] < O(\alpha \cdot \opt_{\gamma}^{\D}).$
\end{proposition}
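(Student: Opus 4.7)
I would follow the Chow-parameter strategy sketched in Section~\ref{ssec:techniques}. Let $\bc^* \eqdef \E_{(\bx,y)\sim\D}[f_{\bw^{\ast}}(\bx)\bx]$ denote the Chow parameters of the optimal margin-$\gamma$ halfspace. By Lemma~\ref{lem:chow-vs-dist} combined with the Chow-reconstruction algorithm of~\cite{DeDFS14}, any vector $\bc$ with $\|\bc-\bc^*\|_2\leq \nu\gamma$ yields a halfspace with 0-1 error at most $\opt_\gamma^{\D}+\nu$; taking $\nu=\Theta(\alpha\cdot\opt_\gamma^{\D})$ thus reduces the problem to approximating $\bc^*$ in $\ell_2$-norm to precision $\Theta(\alpha\gamma\cdot\opt_\gamma^{\D})$.

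A natural first estimate is the empirical Chow vector $\tilde\bc \eqdef \E_{(\bx,y)\sim\D}[y\bx]$. Since $y$ agrees with $f_{\bw^{\ast}}(\bx)$ outside the set of errors $E\eqdef \{(\bx,y):y\ne f_{\bw^{\ast}}(\bx)\}$, one computes
\[
\tilde\bc - \bc^* \;=\; 2\littlesum_{(\bx,y)\in E} \D(\bx,y)\, y\bx,
\]
so $\|\tilde\bc-\bc^*\|_2\le 2\pr_{\D}[E]\le 2\opt_\gamma^{\D}$. Hence we are off from the required precision by a factor of $\Theta(\alpha\gamma)$, and the entire remaining task is to drive this factor down.

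The key structural observation is that the residual $\tilde\bc-\bc^*$ lies in $2\opt_\gamma^{\D}$ times the convex hull of the ``atom'' set $\{y\bx:(\bx,y)\in \supp(\D)\}\cup\{\mathbf{0}\}$ (all atoms have norm at most $1$; the zero atom is included to absorb the slack $1-\pr_{\D}[E]/\opt_\gamma^{\D}$). By the approximate-Carath\'eodory (Maurey) lemma, there exist $T=O(1/(\alpha\gamma)^2)$ atoms $y^{(1)}\bx^{(1)},\ldots,y^{(T)}\bx^{(T)}$ (possibly repeated) such that
\[
\Bigl\|\bigl(\tilde\bc - \tfrac{2\opt_\gamma^{\D}}{T}\littlesum_{j=1}^T y^{(j)}\bx^{(j)}\bigr) - \bc^*\Bigr\|_2 \;\le\; 2\opt_\gamma^{\D}/\sqrt{T} \;=\; O(\alpha\gamma\cdot\opt_\gamma^{\D}).
\]
Equivalently, one may view this as greedy descent on the potential $\Phi(\bc)=\|\bc-\bc^*\|_2^2$: by averaging there always exists an atom whose inner product with the current residual is $\gtrsim \Phi/\opt_\gamma^{\D}$, and an appropriately scaled correction along it drops $\Phi$ by $\Omega(\Phi^2/(\opt_\gamma^{\D})^2)$; the resulting $1/T$-type recurrence reaches $\Phi\le (\alpha\gamma\cdot\opt_\gamma^{\D})^2$ within $O(1/(\alpha\gamma)^2)$ steps.

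Since $E$ is unknown, the algorithm simply enumerates all $T$-length sequences of atoms from $\supp(\D)\cup\{\mathbf{0}\}$ (together with a $\poly(1/(\alpha\gamma\cdot\opt_\gamma^{\D}))$-size net of scalar magnitudes to accommodate the tolerance of the~\cite{DeDFS14} routine): for each candidate it forms the corresponding Chow estimate, invokes Chow reconstruction to obtain a candidate halfspace, and returns the halfspace minimizing empirical 0-1 error on $\D$. This yields the claimed runtime $\poly(d)\cdot (|\supp(\D)|/(\alpha\gamma\cdot\opt_\gamma^{\D}))^{\tilde O(1/(\alpha\gamma)^2)}$. The main technical difficulty is the Maurey/greedy analysis, in particular showing that the required decrease per step survives discretization of the correcting scalar; once this is in hand, Lemma~\ref{lem:chow-vs-dist} and~\cite{DeDFS14} can be invoked as black boxes.
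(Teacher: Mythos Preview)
Your proposal is correct and arrives at the same conclusion as the paper, but via a genuinely different route once the problem is reduced (as both you and the paper do) to approximating $\bc^*$ in $\ell_2$ to accuracy $O(\alpha\gamma\cdot\opt_\gamma^{\D})$ and then invoking Lemma~\ref{ChowToWLem}.

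The paper handles this step geometrically: it enumerates sequences $\bx^{(1)},\dots,\bx^{(m)}\in\supp(\D)$ with $m=\lceil\log(1/\alpha\gamma)/(\alpha\gamma)^2\rceil$, shows that for a suitable choice the norm $\|\proj_{V_i^\perp}(\tilde\bc-\bc^*)\|_2^2$ contracts by a factor $(1-(\alpha\gamma)^2)$ at each step, and then guesses the projection of $\bc^*$ onto the resulting $m$-dimensional span $V_m$ via an $(\alpha\gamma\cdot\opt_\gamma^{\D})$-cover of its unit ball. Your approach instead exploits the convex structure of the error directly: since $\tilde\bc-\bc^*\in 2\opt_\gamma^{\D}\cdot\mathrm{conv}\bigl(\{y\bx:(\bx,y)\in\supp(\D)\}\cup\{\mathbf 0\}\bigr)$, Maurey's approximate Carath\'eodory lemma yields a $T=O(1/(\alpha\gamma)^2)$-atom empirical average within $2\opt_\gamma^{\D}/\sqrt{T}=O(\alpha\gamma\cdot\opt_\gamma^{\D})$ of it, and enumerating $T$-tuples of support points suffices. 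This is arguably cleaner: it shaves the $\log(1/\alpha\gamma)$ factor from the exponent and, since the Maurey correction has a fixed scalar $2\opt_\gamma^{\D}/T$, it does not actually need the subspace cover or the ``net of scalar magnitudes'' you mention---the base $|\supp(\D)|$ alone already matches (and slightly improves) the stated runtime. The paper's projection argument, on the other hand, is more in line with the iterative eigenvector scheme of Section~\ref{ssec:alg-main} and makes the per-step progress explicit.

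One caveat: your parenthetical ``greedy descent'' alternative is not quite equivalent. The averaging bound $\langle\ba,\br\rangle\gtrsim\Phi/\opt_\gamma^{\D}$ holds at the start because $\br_0$ is a scaled convex combination of atoms, but after an unconstrained step $\br\to\br-\eta\ba$ the residual need not remain in that scaled hull, so the inductive hypothesis fails. A correct greedy version would be Frank--Wolfe over $\tilde\bc-2\opt_\gamma^{\D}\cdot\mathrm{conv}(\text{atoms})$, whose standard $O((\opt_\gamma^{\D})^2/T)$ rate yields the same $T=O(1/(\alpha\gamma)^2)$. This does not affect your main argument, since the Maurey sampling bound stands on its own.
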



Our main technical tool here will be the vector of Chow parameters~\cite{Chow:61, OS11:chow, DeDFS14}, 
i.e., vector of degree-$1$ ``Fourier coefficients'', of the target halfspace:

\begin{definition} \label{def:chow}
Given a Boolean function $f: \B_d \to \{ \pm 1\}$, and a distribution $\D_{\bx}$ on $\B_d$
the {\em Chow parameters vector} of $f$, is the vector $\chow(f)$
given by the expectation $\E_{ \bx \sim \D_{\bx}}[f(\bx) \bx]$.
\end{definition}

\new{It is well-known~\cite{Chow:61} that the vector of Chow parameters uniquely identifies any halfspace
within the class of all Boolean functions. Several robust versions of this fact are known 
(see, e.g.,~\cite{Goldberg:06b, OS11:chow, DiakonikolasServedio:09, DeDFS14, DKS18a, DK19-degd}) 
under various structural assumptions on the underlying distribution.
Here we leverage the margin assumption to obtain a robust
version of this fact. Specifically,} we show that learning the Chow parameters of the halfspace
$f_{\bw^\ast}(\bx)=\sgn(\langle \bw^\ast, \bx \rangle)$ determines the function $f_{\bw^\ast}$ up to small error.

\new{In the following, we will denote by $\D_{\bx}$ the marginal distribution of $\D$ on $\B_d$.}
We have the following simple lemma:
\begin{lemma} \label{lem:chow-vs-dist}
Let $g: \B_d \to \{\pm 1\}$ be any Boolean function that satisfies 
$\pr_{\bx \sim \D_{\bx}}[f_{\bw^\ast}(\bx) \neq g(\bx)]\geq \nu + \opt_{\gamma}^{\D}$, for some $\nu > 0$.
Then, we have that $\|\chow(f_{\bw^\ast})-\chow(g)\|_2 \geq \nu \cdot \gamma$.
\end{lemma}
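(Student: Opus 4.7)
The plan is to lower bound $\|\chow(f_{\bw^\ast})-\chow(g)\|_2$ by projecting it onto the unit vector $\bw^\ast$ and showing that this projection is at least $2\nu\gamma$ (which is stronger than needed). First I would expand
\[
\langle \bw^\ast,\, \chow(f_{\bw^\ast})-\chow(g) \rangle \;=\; \E_{\bx \sim \D_{\bx}}\bigl[(f_{\bw^\ast}(\bx)-g(\bx))\langle \bw^\ast, \bx \rangle\bigr].
\]
Since $f_{\bw^\ast}(\bx) = \sgn(\langle \bw^\ast, \bx \rangle)$ and both $f_{\bw^\ast}$ and $g$ take values in $\{\pm 1\}$, the integrand vanishes when $f_{\bw^\ast}(\bx) = g(\bx)$ and equals $2 f_{\bw^\ast}(\bx)\langle \bw^\ast,\bx\rangle = 2|\langle \bw^\ast,\bx\rangle|$ when they disagree. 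Thus the inner product equals $2\,\E_{\bx}[\mathbf{1}[f_{\bw^\ast}(\bx)\neq g(\bx)]\cdot |\langle \bw^\ast,\bx\rangle|]$.

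Next, I would translate the margin assumption on the joint distribution $\D$ into a statement about the marginal $\D_{\bx}$. By definition of $\opt_{\gamma}^{\D}$ and the hypothesis on $\bw^\ast$, with probability at least $1-\opt_{\gamma}^{\D}$ over $(\bx,y)\sim \D$ we have $y\langle \bw^\ast,\bx\rangle > \gamma$, which in particular forces $|\langle \bw^\ast,\bx\rangle| > \gamma$. Consequently, under the marginal $\D_{\bx}$, the event $M := \{|\langle \bw^\ast,\bx\rangle|>\gamma\}$ satisfies $\pr[\neg M] \leq \opt_{\gamma}^{\D}$. Combining this with the assumption $\pr[B] \geq \nu + \opt_{\gamma}^{\D}$, where $B := \{f_{\bw^\ast}(\bx)\neq g(\bx)\}$, a union bound yields $\pr[B\cap M] \geq \nu$.

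Finally, since $|\langle \bw^\ast,\bx\rangle| > \gamma$ on $M$,
\[
\E_{\bx}\bigl[\mathbf{1}[B]\cdot|\langle \bw^\ast,\bx\rangle|\bigr] \;\geq\; \gamma\cdot \pr[B\cap M] \;\geq\; \nu\gamma,
\]
so $\langle \bw^\ast, \chow(f_{\bw^\ast})-\chow(g)\rangle \geq 2\nu\gamma$. Since $\|\bw^\ast\|_2=1$, Cauchy--Schwarz gives $\|\chow(f_{\bw^\ast})-\chow(g)\|_2 \geq 2\nu\gamma \geq \nu\gamma$, as desired. There is no real obstacle here; the only point that requires care is the bookkeeping step of moving the margin condition from the joint distribution $\D$ (which is how $\opt_{\gamma}^{\D}$ is defined) to the marginal $\D_{\bx}$ (which governs the Chow parameters), and this step is immediate because $y\langle \bw^\ast,\bx\rangle>\gamma$ forces $|\langle \bw^\ast,\bx\rangle|>\gamma$ irrespective of $y$.
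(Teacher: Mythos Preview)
Your proposal is correct and follows essentially the same argument as the paper: project $\chow(f_{\bw^\ast})-\chow(g)$ onto the unit vector $\bw^\ast$, rewrite the inner product as $2\,\E_{\bx}[\mathbf{1}[f_{\bw^\ast}(\bx)\neq g(\bx)]\cdot|\langle \bw^\ast,\bx\rangle|]$, and then use the margin assumption together with a union bound to show this expectation is at least $\nu\gamma$. The only cosmetic difference is that you make the factor of $2$ explicit (obtaining $2\nu\gamma$) whereas the paper simply records the weaker bound $\nu\gamma$ that the lemma claims.
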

\begin{proof}
We can write
\begin{align*}
\|\chow(f_{\bw^\ast})-\chow(g)\|_2 &\geq \langle \bw^\ast, \chow(f_{\bw^\ast})-\chow(g) \rangle\\
& = \E_{\bx\sim \D_{\bx}}[\langle \bw^\ast, \bx \rangle (f_{\bw^\ast}(\bx)-g(\bx))]\\
& = 2\E_{\bx\sim \D_{\bx}}[|\langle \bw^\ast, \bx \rangle | \cdot \textbf{1}_{f_{\bw^\ast}(\bx)\neq g(\bx)}] \;.
\end{align*}
Recalling our assumptions
$\pr_{(\bx,y)\sim \D}[y \langle \bw^{\ast}, \bx \rangle \geq \gamma] \geq 1- \opt_{\gamma}^{\D}$
and $\pr_{\bx \sim \D_{\bx}}[f_{\bw^\ast}(\bx) \neq g(\bx)]\geq \nu + \opt_{\gamma}^{\D}$, we note that
there is at least a \new{$\nu$} probability \new{over $(\bx, y) \sim \D$} that $f_{\bw^\ast}(\bx)\neq g(\bx)$ and
$y \langle \bw^\ast, \bx \rangle \geq \new{\gamma}$, which implies that
$|\langle \bw^\ast, \bx \rangle |\geq$ \new{$\gamma$}. Therefore, the above expectation is at least $\nu \cdot \gamma$.
\end{proof}

Lemma~\ref{lem:chow-vs-dist}, combined with the algorithms in
~\cite{TTV:09short, DeDFS14}, implies that learning an approximation to $\chow(f_{\bw^\ast})$
is sufficient to learn a good hypothesis.
\begin{lemma} \label{ChowToWLem}
There is a polynomial time algorithm that given an explicit distribution $\D$
and a vector $\bc$ with $\|\chow(f_{\bw^\ast}) - \bc\|_2 \leq \nu \cdot \gamma$,
returns a vector $\bw$ that with high probability satisfies
$\pr_{(\bx, y)\sim \D}[f_{\bw}(\bx) \neq f_{\bw^\ast}(\bx)] \leq O(\nu+\opt_{\gamma}^{\D})$.
In particular, for this $\bw$ we have that
$\pr_{(\bx,y)\sim \D}[\sgn(\langle \bw, \bx \rangle) \neq y] = O(\nu +\opt_{\gamma}^{\D}).$
\end{lemma}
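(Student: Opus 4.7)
The plan is to combine the Chow-parameter reconstruction algorithm of~\cite{TTV:09short,DeDFS14} with the approximate identifiability given by Lemma~\ref{lem:chow-vs-dist}. The intuition is that the target halfspace $f_{\bw^\ast}$ is pinned down (up to small error under $\D_{\bx}$) by its Chow parameters whenever there is a margin. Thus, if we can efficiently \emph{fit} a halfspace whose Chow parameters match the input vector $\bc$, then this halfspace will be close to $f_{\bw^\ast}$ under $\D_{\bx}$, and in turn close to the labels $y$.

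First, I would invoke the TTV/DDFS reconstruction procedure on input $(\D_{\bx}, \bc)$. Since $\bc$ is promised to be $(\nu\gamma)$-close in $\ell_2$ to the Chow parameters of \emph{some} halfspace (namely $f_{\bw^\ast}$), this subroutine runs in $\poly(d,|\supp(\D)|)$ time and outputs a unit vector $\bw$ such that $\|\chow(f_{\bw}) - \bc\|_2 \leq O(\nu\gamma)$ with high probability. The triangle inequality combined with the hypothesis $\|\chow(f_{\bw^\ast}) - \bc\|_2 \leq \nu\gamma$ then yields $\|\chow(f_{\bw}) - \chow(f_{\bw^\ast})\|_2 \leq O(\nu\gamma)$.

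Next, I would apply the contrapositive of Lemma~\ref{lem:chow-vs-dist} with $g = f_{\bw}$. That lemma states that if the disagreement probability between $f_{\bw^\ast}$ and $g$ under $\D_{\bx}$ exceeds $\nu' + \opt_{\gamma}^{\D}$, then the $\ell_2$-distance between their Chow parameter vectors is at least $\nu' \gamma$. Setting $\nu' = \Theta(\nu)$ with a sufficiently large constant, this contrapositive gives $\pr_{\bx\sim\D_{\bx}}[f_{\bw}(\bx)\neq f_{\bw^\ast}(\bx)] \leq O(\nu) + \opt_{\gamma}^{\D}$. Since by assumption $\pr_{(\bx,y)\sim\D}[y \ne f_{\bw^\ast}(\bx)] \leq \opt_{\gamma}^{\D}$ (as $y\langle \bw^\ast,\bx\rangle \geq \gamma$ outside of an $\opt_{\gamma}^{\D}$-mass set), a union bound gives $\pr_{(\bx,y)\sim\D}[\sgn(\langle \bw,\bx\rangle) \neq y] \leq O(\nu + \opt_{\gamma}^{\D})$, as desired.

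The step I expect to require the most care is the application of the TTV/DDFS reconstruction primitive itself: we need a guarantee that the reconstructed halfspace's Chow parameters match the target to $\ell_2$-error $O(\nu\gamma)$ (rather than, say, a worse polynomial in $\nu\gamma$), in time polynomial in $d$ and $|\supp(\D)|$, over an \emph{arbitrary} distribution supported on $\B_d$. The TTV boosting/multiplicative-weights framework is distribution-agnostic (it only uses bounded inner products, and here $\|\bx\|_2 \le 1$), and the DDFS specialization to halfspaces produces an output halfspace rather than an arbitrary bounded function. Together these yield exactly the form of Chow-matching guarantee we need; the rest of the proof is then bookkeeping with the triangle inequality, Lemma~\ref{lem:chow-vs-dist}, and a final union bound.
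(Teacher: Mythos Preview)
Your proposal is correct and matches the paper's approach exactly. The paper does not give a detailed proof of this lemma; it simply asserts, in the sentence preceding the lemma statement, that ``Lemma~\ref{lem:chow-vs-dist}, combined with the algorithms in~\cite{TTV:09short, DeDFS14}, implies that learning an approximation to $\chow(f_{\bw^\ast})$ is sufficient to learn a good hypothesis,'' and then states the lemma without proof. Your write-up fills in precisely this argument: run the TTV/DDFS reconstruction to obtain a halfspace whose Chow vector is $O(\nu\gamma)$-close to $\bc$, apply the triangle inequality, invoke the contrapositive of Lemma~\ref{lem:chow-vs-dist}, and finish with a union bound against the $\opt_{\gamma}^{\D}$-mass of points where $y \neq f_{\bw^\ast}(\bx)$.

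One minor technical remark on the point you already flagged: the raw TTV primitive returns a bounded function $g(\bx) = P_1(\langle \bw,\bx\rangle)$ rather than a halfspace, so strictly speaking the Chow-matching guarantee is for $g$, not for $f_{\bw}$. The cleanest way to close this is to note that the proof of Lemma~\ref{lem:chow-vs-dist} goes through verbatim for $[-1,1]$-valued $g$ (since $\langle \bw^\ast,\bx\rangle(f_{\bw^\ast}(\bx)-g(\bx)) = |\langle \bw^\ast,\bx\rangle|\,(1-f_{\bw^\ast}(\bx)g(\bx)) \ge 0$), yielding $\E_{\bx}[|f_{\bw^\ast}(\bx)-g(\bx)|] \le O(\nu+\opt_\gamma^{\D})$, and then observe that $\sgn(g) = f_{\bw}$ together with $|f_{\bw^\ast}(\bx)-\sgn(g(\bx))| \le 2|f_{\bw^\ast}(\bx)-g(\bx)|$ gives the claimed bound on $\pr[f_{\bw}\neq f_{\bw^\ast}]$. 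This is exactly the DDFS rounding step adapted to the margin setting, and it does not change the structure of your argument.
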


Thus, it will suffice to approximate the Chow parameters of $f_{\bw^{\ast}}$ to error
\new{$\alpha \gamma \cdot \opt_{\gamma}^{\D}$}. One might consider using the empirical Chow parameters,
namely $P=\E_{(\bx,y)\sim \D}[y\bx]$ for this purpose. In the realizable case,
this would be the right thing to do, but this naive approach fails in the agnostic setting.
Instead, our approach hinges on the following observation:
Since $y=f_{\bw^\ast}(\bx)$
for all but an $\opt_{\gamma}^{\D}$-fraction of $\bx$'s, and since the $\bx$'s are supported in the unit ball,
the error has $\ell_2$-norm at most $\opt_{\gamma}^{\D}$. In fact, if we have some vector
$\bw$ so that $\langle \bw, P-\chow(f_{\bw^{\ast}}) \rangle \geq \alpha \gamma \cdot \opt_{\gamma}^{\D} $,
then there must be some $(\bx,y)$ in the domain of $\D$ with
$|\langle \bx, \bw \rangle| \geq \alpha\gamma$. The idea is to guess this
$\bw$ and then guess the true projection of $\chow(f_{\bw^{\ast}})$ onto $\bw$.

We present the pseudo-code for the algorithm establishing
Proposition~\ref{alphaAlgProp} as Algorithm~\ref{alg:alpha-finite-support} below.
\begin{algorithm}
\caption{\label{alg:alpha-finite-support} $\alpha$-Agnostic Proper Learner of Proposition \ref{alphaAlgProp}}
\begin{algorithmic}[1]
\State Let $m = \lceil \log(1/\alpha\gamma)/(\alpha\gamma)^2 \rceil$.
\State Let $P=\E_{(\bx,y)\sim \D}[y\bx]$
\For{every sequence $\bx^{(1)},\ldots,\bx^{(m)}$ from $\supp(\D)$}
\State Let $V$ be the span of $\bx^{(1)},\ldots,\bx^{(m)}$.
\State Let $\mathcal{C}$ be a $(\alpha\gamma\cdot \opt_{\gamma}^{\D})$-cover of the unit ball of $V$.
\For{each $g\in\mathcal{C}$}
\State Let $P'$ be obtained by replacing the projection of $P$ onto $V$ with $g$. In particular, $P' = P-\mathrm{Proj}_V(P)+g$.
\State Run the algorithm of Lemma \ref{ChowToWLem} to find a hypothesis $\bw$. \label{line:best-chow}
\EndFor
\EndFor
\State \Return The hypothesis that produces smallest empirical error among all $\bw$'s in Line~\ref{line:best-chow}.
\end{algorithmic}
\end{algorithm}

\begin{proof}[Proof of Proposition \ref{alphaAlgProp}]
Firstly, note that the runtime of this algorithm is clearly
$\poly(d)\left(\frac{|\supp(\D)|}{\opt_{\gamma}^{\D} \cdot \alpha\gamma} \right)^{\tilde O(1/(\alpha\gamma)^2)}$.
It remains to show correctness. We note that by Lemma~\ref{ChowToWLem}
it suffices to show that some $P'$ is within $O(\alpha  \gamma\cdot \opt_{\gamma}^{\D})$ of $\chow(f_{\bw^{\ast}})$.
For this it suffices to show that there is a sequence $\bx^{(1)},\ldots,\bx^{(m)}$
so that $\|\proj_{V^\perp}(\chow(f_{\bw^{\ast}})-P)\|_2 = O(\alpha \gamma \cdot \opt_{\gamma}^{\D})$.

To show this, let $V_i$ be the span of $\bx^{(1)}, \bx^{(2)},\ldots,\bx^{(i)}$.
We claim that if $\|\proj_{V_i^\perp}(\chow(f_{\bw^{\ast}})-P)\|_2 \geq \alpha \gamma \cdot \opt_{\gamma}^{\D}$,
then there exists an $\bx^{(i+1)}$ in the support of $\D$ such that
$$
\|\proj_{V_{i+1}^\perp}(\chow(f_{\bw^{\ast}})-P)\|_2^2 = \|\proj_{V_i^\perp}(\chow(f_{\bw^{\ast}})-P)\|_2^2 \cdot (1 - (\alpha\gamma)^2).
$$
To show this, we let $\bw$ be the unit vector in the direction of
$\proj_{V_i^\perp}(\chow(f_{\bw^{\ast}})-P)$. We note that
$$\|\proj_{V_i^\perp}(\chow(f_{\bw^{\ast}})-P)\|_2 = \langle \bw, \chow(f_{\bw^{\ast}})-P \rangle
= \E_{(\bx,y)\sim \D}[\langle \bw, \bx \rangle  (\sgn(\langle \bw^{\ast}, \bx \rangle)-y)] \;.$$
Since $\sgn(\langle \bw^{\ast}, \bx \rangle)-y$ is $0$ for all but an $\opt_{\gamma}^{\D}$-fraction of $(\bx,y)$,
we have that there must be some $\bx^{(i+1)}$
so that $\langle \bx^{(i+1)}, \bw \rangle \geq \|\proj_{V_i^\perp}(\chow(f_{\bw^{\ast}})-P)\|_2/\opt_{\gamma}^{\D} \geq \alpha \gamma$.
If we chose  this $\bx^{(i+1)}$, we have that
\begin{align*}
\|\proj_{V_{i+1}^\perp}(\chow(f_{\bw^{\ast}})-P)\|_2^2
& \leq \|\proj_{V_{i}^\perp}(\chow(f_{\bw^{\ast}})-P)\|_2^2 - \langle \bx^{(i+1)}, \chow(f_{\bw^{\ast}})-P \rangle ^2\\
& = \|\proj_{V_{i}^\perp}(\chow(f_{\bw^{\ast}})-P)\|_2^2 \cdot (1-\langle \bx^{(i+1)}, \bw \rangle^2)\\
& = \|\proj_{V_i^\perp}(\chow(f_{\bw^{\ast}})-P)\|_2^2 \cdot (1 - (\alpha\gamma)^2).
\end{align*}
Therefore, unless $\|\proj_{V_i^\perp}(\chow(f_{\bw^{\ast}})-P)\|_2^2 < \alpha\gamma \cdot \opt_{\gamma}^{\D}$ already for some $i < m$, there exists a sequence $\bx^{(1)}, \bx^{(2)}, \cdots, \bx^{(m)}$ such that
\begin{align*}
\|\proj_{V_m^\perp}(\chow(f_{\bw^{\ast}})-P)\|_2^2
&\leq \|P-\chow(f_{\bw^{\ast}})\|_2^2 \cdot (1 - (\alpha \gamma)^2)^{-m} \\
&\leq \|P-\chow(f_{\bw^{\ast}})\|_2^2 \cdot \exp(-m \cdot (\alpha\gamma)^2)\\
&\leq \opt_{\gamma}^{\D} \cdot \exp(\log(\alpha \gamma)) \\
&= \opt_{\gamma}^{\D} \cdot \alpha\gamma.
\end{align*}
So in either case, we have some sequence of $\bx$'s so that the projection
onto $V^\perp$ of $\chow(f_{\bw^{\ast}})-P$ is sufficiently small. This completes our analysis.
\end{proof}

In order to extend this to a proof of Theorem \ref{alphaTheorem},
we will need to reduce to solving the problem on a finite sample set.
This result can be obtained from Proposition \ref{alphaAlgProp} by some fairly simple reductions.

Firstly, we note that we can assume that $\opt_{\gamma}^{\D} \geq \eps/\alpha$,
as increasing it to this value does not change the problem.

Secondly, we note that if we let $\wh{\D}$ be the empirical distribution over a set of $\Omega(d/\eps^2)$
random samples, then with at least $2/3$ probability
we have the following:
\begin{itemize}
\item $\pr_{(\bx,y)\sim \wh{\D}}[y \langle \bw^{\ast}, \bx \rangle  \geq \gamma] \geq 1- O(\opt_{\gamma}^{\D})$.
\item For any vector $\bw$, $\pr_{(\bx,y)\sim\D}[\sgn( \langle \bw, \bx \rangle )\neq y]
= \pr_{(\bx,y)\sim \wh{\D}}[\sgn( \langle \bw, \bx \rangle)\neq y] + O(\eps)$.
\end{itemize}
The first statement here is by applying the Markov inequality to the probability that
$y \langle \bw^{\ast}, \bx \rangle < \gamma$, and the second is by the VC-inequality~\cite{DL:01}.
We note that if the above hold, applying the algorithm from Proposition \ref{alphaAlgProp}
to $\wh{\D}$ will produce an appropriate $\bw$. This produces an algorithm that uses $O(d/\eps^2)$
samples and has runtime $O(d /\gamma\eps)^{\tilde O(1/(\alpha\gamma)^2)}.$

Unfortunately, this algorithm is not quite satisfactory as the runtime and sample complexity scale poorly with the dimension $d$.
In order to fix this, we will make use of an idea from~\cite{KlivansServedio:04coltmargin}.
Namely, we will first apply dimension reduction to a smaller number of dimensions before applying our algorithm.
In particular, we will make use of the Johnson-Lindenstrauss lemma:
\begin{lemma}[\cite{JohnsonLindenstrauss:84}]
There exists a probability distribution over linear transformations
$A:\R^d\rightarrow \R^m$ with $m=O(\log(1/\delta)/\eps^2)$ so that for any unit vectors $\bv, \bw\in \R^d$,
$\pr_A[|\langle \bv, \bw \rangle - \langle A \bv, A\bw \rangle | > \eps] < \delta.$
Additionally, there are efficient algorithms to sample from such distributions over $A$.
\end{lemma}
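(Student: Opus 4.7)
The plan is to construct $A$ as a Gaussian random projection: take the distribution on linear maps $A:\R^d\rightarrow \R^m$ in which each entry $A_{ij}$ is drawn independently from $\calN(0, 1/m)$. Sampling from this distribution is clearly efficient, since we simply draw $md$ independent Gaussians. The heart of the argument is the norm-preservation property: for any fixed unit vector $\bu\in\R^d$, the quantity $m\|A\bu\|_2^2$ is distributed as a chi-squared random variable with $m$ degrees of freedom, since it equals $\sum_{i=1}^m Z_i^2$ where $Z_i = \sqrt{m}\,\langle A^{(i)}, \bu\rangle$ is a standard Gaussian (with $A^{(i)}$ denoting the $i$-th row). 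Standard chi-squared concentration (Laurent--Massart) then gives, for an absolute constant $c>0$,
\[
\pr_A\!\left[\,\bigl|\|A\bu\|_2^2 - 1\bigr| > \eps/2\,\right] \;\leq\; 2\exp(-c\,\eps^2 m).
\]

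Next I would reduce inner-product preservation to norm preservation via the polarization identity $\langle \bv,\bw\rangle = \tfrac{1}{4}(\|\bv+\bw\|_2^2 - \|\bv-\bw\|_2^2)$. Applying the same identity to the images gives
\[
\langle A\bv, A\bw\rangle - \langle \bv,\bw\rangle = \tfrac{1}{4}\bigl((\|A(\bv+\bw)\|_2^2 - \|\bv+\bw\|_2^2) - (\|A(\bv-\bw)\|_2^2 - \|\bv-\bw\|_2^2)\bigr).
\]
Applying the norm-preservation bound above to the unit vectors $(\bv\pm\bw)/\|\bv\pm\bw\|_2$ (handling the degenerate cases $\bv=\pm\bw$ trivially) and using the parallelogram law $\|\bv+\bw\|_2^2+\|\bv-\bw\|_2^2=4$, a union bound yields $\pr_A[|\langle A\bv,A\bw\rangle - \langle \bv,\bw\rangle| > \eps] \leq 4\exp(-c'\eps^2 m)$ for a constant $c'>0$. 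Choosing $m = O(\log(1/\delta)/\eps^2)$ with a sufficiently large hidden constant drives this probability below $\delta$.

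The main subtlety will be carrying the constants cleanly through the polarization step (so that the $\eps/2$ in the norm-preservation bound translates into $\eps$ for the inner product) and treating the degenerate cases of the polarization identity, but neither is a genuine obstacle. An alternative route would be to observe directly that $\E_A[\langle A\bv, A\bw\rangle]=\langle\bv,\bw\rangle$ and invoke the Hanson--Wright inequality to establish sub-exponential concentration of the centered quadratic form, but the polarization reduction is simpler and yields essentially the same constants.
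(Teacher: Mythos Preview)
Your argument is correct and is the standard proof of the Johnson--Lindenstrauss lemma via Gaussian random projections, norm concentration for chi-squared variables, and the polarization identity. However, there is nothing to compare against: the paper does not prove this lemma at all. It is stated with a citation to~\cite{JohnsonLindenstrauss:84} and used as a black box in the dimension-reduction step of the $\alpha$-agnostic learner (Section~\ref{ssec:alg-bicrit}). So while your write-up is fine as a self-contained proof, the paper's ``proof'' is simply the citation.
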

We note that this implies in particular that $\|A\bv\|_2 = 1\pm \eps$ except for with probability $\delta$.
Thus, by tweaking the parameters a little bit and letting $h_A(\bv) = A\bv/\|A\bv\|_2$,
we have that $h_A(\bv)$ is always a unit vector and that $\langle h_A(\bv), h_A(\bw) \rangle= \langle \bv, \bw\rangle \pm \eps$
except with probability $\delta.$

Next, we note that by taking $\eps=\gamma/2$ and $\delta = \opt_{\gamma}^{\D}$ in the above we have that
\begin{eqnarray*}
&&\pr_{A,(\bx,y)\sim \D}[y \langle h_A(\bw^{\ast}), h_A(\bx) \rangle < \gamma/2] \\
&\leq& \pr_{(\bx,y)\sim \D}[y\langle \bw^{\ast}, \bx \rangle <\gamma] +
\pr_{A,(\bx,y)\sim \D}[| \langle h_A(\bw^{\ast}), h_A(\bx) \rangle - \langle \bw^{\ast}, \bx \rangle|>\gamma/2] \\
&=& O(\opt_{\gamma}^{\D}).
\end{eqnarray*}
Thus, by the Markov inequality, with large constant probability over $A$ we have that
$$
\pr_{(\bx,y) \sim \D}[y \langle h_A(\bw^{\ast}), h_A(\bx)\rangle < \gamma/2] = O(\opt_{\gamma}^{\D}).
$$
But this means that the distribution $(h_A(\bx),y)$ satisfies the assumptions
for our algorithm (with $\gamma$ replaced by $\gamma/2$ and $\opt_{\gamma}^{\D}$ by $O(\opt_{\gamma}^{\D})$),
but in dimension $m=O(\log(\alpha/\eps)/\gamma^2)$. Running the algorithm described above on this set
will find us a vector $\bw$ so that
$$
\pr_{(\bx,y)\sim \D}[\sgn(\langle \bw, h_A(\bx) \rangle)\neq y] = O(\alpha \cdot \opt_{\gamma}^{\D}+\eps).
$$
However, it should be noted that
$$\sgn(\langle \bw, h_A(\bx) \rangle) = \sgn(\langle \bw, A\bx \rangle /\|A\bx|_2)
= \sgn(\langle \bw, A\bx \rangle) = \sgn(\langle A^T  \bw, \bx\rangle) \;.$$
Thus, $A^T\bw$ satisfies the necessary conditions.

Our final algorithm is given below:
\begin{algorithm}
\caption{$\alpha$-Agnostic Proper Learner of Theorem~\ref{alphaTheorem}}
\begin{algorithmic}[1]
\State Pick $A:\R^d\rightarrow \R^m$ with $m=O(\log(\alpha/\eps)/\gamma^2)$
from an appropriate Johnson-Lindenstrauss family and define $f_A$ appropriately.
\State Take $O(m/\eps^2)$ random samples and let $\wh{\D}$ be the uniform distribution over
$(A\bx/\|A\bx|_2,y)$ for samples $(\bx,y)$ from this set.
\State Run the algorithm from Proposition \ref{alphaAlgProp} on $\wh{\D}$ using $\gamma/2$ instead of $\gamma$ to find a vector $\bw$.
\State \Return $A^T\bw$.
\end{algorithmic}
\end{algorithm}

\section{Computational Hardness Results} \label{sec:lb}

In this section, we provide several computational lower bounds for agnostic learning of halfspaces with a margin. 
To clarify the statements below, we note that we say ``there is no algorithm that runs in time $T(d, \frac{1}{\gamma}, \frac{1}{\varepsilon})$'' to mean that no $T(d, \frac{1}{\gamma}, \frac{1}{\varepsilon})$-time algorithm works for \emph{all} combinations of parameters $d,\gamma$ and $\varepsilon$. (Note that we discuss the lower bounds with stronger quantifiers in Section~\ref{sec:lb-strong-quantifier}.) Moreover, we also ignore the dependency on $\tau$ (the probability that the learner can be incorrect), since we only use a fixed $\tau$ (say $1/3$) in all the bounds below.

First, we show that, for any constant $\alpha > 1$, $\alpha$-agnostic learning of $\gamma$-margin halfspaces 
requires $2^{(1/\gamma)^{2- o(1)}}\poly(d,1/\varepsilon)$ time. 
Up to the lower order term $\gamma^{o(1)}$ in the exponent, this matches the runtime of our algorithm 
(in Theorem~\ref{thm:constant-factor-alg}). In fact, we show an even stronger result, namely that if the dependency 
of the running time on the margin is, say, $2^{(1/\gamma)^{1.99}}$, then one has to pay a nearly exponential dependence on $d$,
i.e., $2^{d^{1 - o(1)}}$. 

This result holds assuming the so-called (randomized) exponential time hypothesis (ETH)~\cite{ImpagliazzoP01,ImpagliazzoPZ01}, which postulates that there is no (randomized) algorithm that can solve 3SAT in time $2^{o(n)}$, where $n$ denotes the number of variables. ETH is a standard hypothesis used in proving (tight) running time lower bounds. We do not discuss ETH further here, but interested readers may refer to a survey by Lokshtanov et al.~\cite{LokshtanovMS11} for an in-depth discussion and several applications of ETH.

Our first lower bound can be stated more precisely as follows:

\begin{theorem} \label{thm:run-time}
Assuming the (randomized) ETH, for any universal constant $\alpha \geq 1$, there is no proper $\alpha$-agnostic learner for $\gamma$-margin halfspaces that runs in time $O(2^{(1/\gamma)^{2-o(1)}}2^{d^{1 - o(1)}})f(\frac{1}{\varepsilon})$ for any function $f$. 
\end{theorem}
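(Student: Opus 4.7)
The plan is to reduce from Gap-3SAT with perfect completeness and some fixed constant soundness $c < 1$. By the PCP theorem of Dinur~\cite{Dinur07} combined with the near-linear-size PCP of Moshkovitz--Raz~\cite{MR10}, this problem cannot be solved in randomized time $2^{n^{1-o(1)}}$ under the randomized ETH, where $n$ is the number of variables. Since the theorem quantifies over ``any constant $\alpha \geq 1$,'' the reduction need not generate a gap that scales with $\alpha$: perfect completeness in the YES case plus constant soundness will already suffice.

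Given a 3CNF $\phi$ on variables $x_1,\ldots,x_n$ with $O(n)$ clauses, I construct in polynomial time a distribution $\D$ supported on $O(n)$ points of $\B_d\times\{\pm 1\}$, with $d = O(n)$, margin $\gamma = \Theta(1/\sqrt{n})$, and $\varepsilon$ a sufficiently small absolute constant. Following the item-selection-with-local-constraints template of Section~\ref{ssec:techniques}, I assign one coordinate per literal ($x_i$ and $\neg x_i$), and for each clause I place a sparse constant-support sample vector with labels and right-hand side shifted by a small constant multiple of $1/\sqrt{n}$. A small number of consistency samples penalize halfspaces that put large weight on both $x_i$ and $\neg x_i$. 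The intended ideal $\bw^{\ast}$ sets $\bw^{\ast}_i = 1/\sqrt{n}$ on literals chosen by a satisfying assignment and $0$ otherwise, so $\|\bw^{\ast}\|_2 = 1$ and $\bw^{\ast}$ classifies every sample with margin $\Theta(1/\sqrt{n}) = \Theta(\gamma)$. In the YES case this gives $\opt_\gamma^{\D} = 0$. In the NO case, any unit halfspace $h_\bw$ can be rounded to a Boolean assignment (e.g., via $\sgn(\bw_{x_i}-\bw_{\neg x_i})$) that violates at least a $(1-c)$-fraction of clauses; each violation forces at least one corresponding sample to be misclassified, yielding $\opt_{0-1}^{\D} \geq \Omega(1-c) \geq 2\varepsilon$.

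Plugging in the putative learner: if an $\alpha$-agnostic proper learner ran in time $2^{(1/\gamma)^{2-o(1)}}\cdot 2^{d^{1-o(1)}}\cdot f(1/\varepsilon)$, then applying it to $\D$ with our fixed constant $\varepsilon$ and evaluating the empirical error of the returned halfspace distinguishes YES from NO: in the YES case the output must have error at most $\alpha\cdot \opt_\gamma^{\D} + \varepsilon = \varepsilon$, while in the NO case every halfspace has error $\geq 2\varepsilon$. Substituting $\gamma = \Theta(1/\sqrt{n})$, $d = O(n)$, and using that $\varepsilon$ is a fixed constant (so $f(1/\varepsilon) = O(1)$), the total runtime becomes $2^{n^{1-o(1)}}$, contradicting Gap-3SAT hardness under the randomized ETH.

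The main technical obstacle will be choreographing the shifted constraint samples and the consistency gadget so that the intended $\bw^{\ast}$ simultaneously attains unit norm \emph{and} margin $\Theta(1/\sqrt{n})$ on every sample, while the soundness rounding transfers CSP violations into halfspace misclassifications tightly enough that the $\Omega(1-c)$ lower bound on $\opt_{0-1}^{\D}$ is genuine. Properness of the learner is crucial here: the soundness bound is only over halfspaces, so without properness the learner could use a richer hypothesis class to achieve small error in the NO case and the distinguisher would fail.
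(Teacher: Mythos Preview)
Your observation that perfect completeness ($\opt_\gamma^{\D}=0$) would make the bound uniform in $\alpha$ is correct, and your parameter bookkeeping ($\gamma=\Theta(1/\sqrt n)$, $d=O(n)$, $\varepsilon$ an absolute constant) is the right shape. But the paper does \emph{not} go via perfect completeness: it applies the general CSP reduction of Theorem~\ref{thm:red} to the Moshkovitz--Raz $\nu$-Gap-$2$-CSP, with coordinates indexed by (constraint, accepting-answer) pairs and with ``selection constraints'' that are \emph{intentionally} violated even in the YES case, giving $\opt_\gamma^{\D}\le\kappa$ with $\kappa>0$. The required factor-$\alpha$ gap between $\opt_\gamma^{\D}$ and $\opt_{0-1}^{\D}$ is then manufactured by choosing the CSP soundness $\nu$ small enough (roughly $\nu=\Theta(1/\alpha^2)$ for $k=2$).

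The reason the paper does not take your simpler route is that the soundness step in your sketch cannot be made to work as stated. With one coordinate per literal, a clause sample enforcing $\bw_{\ell_1}+\bw_{\ell_2}+\bw_{\ell_3}>t_c$, and a consistency sample enforcing $\bw_{x_i}+\bw_{\neg x_i}<t_v$, completeness forces $t_c<1/\sqrt{n}<t_v$ (a satisfying assignment may set only one literal of a clause to true, and always sets exactly one of $x_i,\neg x_i$ to true). For soundness via your rounding $\sgn(\bw_{x_i}-\bw_{\neg x_i})$: if all three literals of a clause round to false, then each satisfies $\bw_{\ell_i}\le\bw_{\neg\ell_i}$, whence $2\bw_{\ell_i}\le\bw_{\ell_i}+\bw_{\neg\ell_i}<t_v$ and the clause sum is $<3t_v/2$. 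To force a misclassification you would need $3t_v/2\le t_c$, contradicting $t_c<t_v$. So no choice of thresholds makes both completeness and your claimed soundness hold simultaneously, and a halfspace can correctly label every sample even when the formula is far from satisfiable. Repairing this requires exactly the richer encoding (per-accepting-answer coordinates, a constant coordinate, non-negativity and satisfiability constraints, and weighted selection constraints) that the paper builds in Theorem~\ref{thm:red}; once those selection constraints are present, completeness is no longer perfect and one is back to engineering the $\alpha$-gap by tuning $\nu$, which is precisely the paper's approach.
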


Secondly, we address the question of whether we can achieve $\alpha = 1$ (standard agnostic learning) 
while retaining running time similar to that of our algorithm. We answer this in the negative 
(assuming a standard parameterized complexity assumption): there is no $f(\frac{1}{\gamma}) \poly(d,\frac{1}{\varepsilon})$-time 
$1$-agnostic learner for any function $f$ (e.g., even for $f(\frac{1}{\gamma}) = 2^{2^{2^{1/\gamma}}}$). 
This demonstrates a stark contrast between what we can achieve with and without approximation.

\begin{theorem} \label{thm:param}
Assuming W[1] is not contained in randomized FPT, there is no proper $1$-agnostic learner for $\gamma$-margin halfspaces that runs in time $f(\frac{1}{\gamma})\poly(d,\frac{1}{\varepsilon})$ for any function $f$.
\end{theorem}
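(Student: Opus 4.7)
The plan is to reduce from the $k$-Clique problem parameterized by $k$, which is W[1]-hard and hence, under the hypothesis, admits no randomized $f(k)\cdot\poly(N)$-time algorithm for any function $f$. Given a graph $G$ on $N$ vertices and a parameter $k$, I will construct, in polynomial time, a distribution $\D$ on $\B_d \times \{\pm 1\}$ with $d = \Theta(N)$, together with parameters $\gamma = \Theta(1/\sqrt{k})$ and $\eps = 1/\poly(N)$, such that: (i) if $G$ contains a $k$-clique, then $\opt_{\gamma}^{\D}$ is small and realized by an explicit ``indicator'' witness $\bw^{\ast}$; and (ii) if $G$ contains no $k$-clique, then every halfspace has $0$-$1$ error strictly greater than $\opt_{\gamma}^{\D} + \eps$. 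Given such a reduction, any proper $1$-agnostic learner with runtime $f(1/\gamma)\cdot\poly(d, 1/\eps)$ can be used (together with an empirical estimate of $\err_{0-1}$ on a fresh test set) to distinguish the two cases w.h.p.\ in total time $f(O(\sqrt{k}))\cdot\poly(N)$, contradicting the hypothesis.

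The construction follows the blueprint sketched in Section~\ref{ssec:techniques}. Each of $N$ coordinates of $\R^d$ corresponds to a vertex $v_i$ of $G$ (with one additional coordinate used as a bias, so that origin-centered halfspaces can express threshold conditions). The intended witness in the YES case sets $\bw^{\ast}_i = 1/\sqrt{k}$ if $v_i$ lies in the clique, $0$ otherwise, with the appropriate value in the bias coordinate. The distribution places mass on two families of sparse, unit-norm samples: ``vertex'' positive examples derived from $\be^i$ that push $\bw_i$ to be positive (with margin $\gamma$) for the selected vertices, and ``non-edge'' negative examples derived from $\tfrac{1}{\sqrt{2}}(\be^i + \be^j)$ that enforce $\bw_i + \bw_j \leq \tfrac{0.99}{\sqrt{k}}$ for every non-edge $\{i,j\}$. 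The crucial quantitative step, pointed out in the overview, is the shift of the non-edge threshold from $\tfrac{1}{\sqrt{k}}$ down to $\tfrac{0.99}{\sqrt{k}}$: completeness is preserved since non-edges of a $k$-clique still satisfy the shifted constraint with slack $\Theta(1/\sqrt{k})$, and this slack becomes the desired margin on all samples correctly classified by $\bw^{\ast}$.

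For soundness, I will show that if no $k$-clique exists, then every halfspace incurs $0$-$1$ error at least $\opt_{\gamma}^{\D} + \eps$. The argument is a standard rounding step: set $S = \{i : \bw_i \text{ exceeds a fixed threshold}\}$; the vertex examples force $|S| \gtrsim k$, lest too many positives be misclassified, while satisfaction of the vast majority of non-edge constraints forces $S$ to induce an ``almost-clique,'' from which a simple cleaning argument (discarding an $o(1)$ fraction of $S$) extracts a genuine $k$-clique in $G$, contradicting the NO assumption. Calibrating the masses of the vertex and non-edge families ensures that $\opt_{\gamma}^{\D}$ and $\opt_{0-1}^{\D}$ separate by at least $\eps = 1/\poly(N)$.

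The main obstacle will be the quantitative soundness analysis. Unlike in the YES case, the adversary in the NO case is not restricted to margin-respecting halfspaces, so the rounding step must be robust to a low-error halfspace that places weight on many coordinates without a clean threshold structure; the cleaning argument must tolerate the $\eps$-slack and recover a genuine $k$-clique rather than merely a dense almost-clique. Once this rounding is made tight, with a consistent choice of thresholds and sample masses, the transfer to the hardness of $k$-Clique parameterized by $k$ is immediate via the time bound above, yielding Theorem~\ref{thm:param}.
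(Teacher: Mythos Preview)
Your proposal is correct and follows the same reduction from $k$-Clique as the paper, with the same margin $\gamma = \Theta(1/\sqrt{k})$, the same extra bias coordinate, and the same two families of samples (non-edge constraints and vertex-selection constraints). The one substantive difference is in the soundness analysis. You anticipate needing an ``almost-clique cleaning'' step, because a low-error halfspace might violate some non-edge constraints and leave $S$ only a dense almost-clique; you correctly flag this as the main obstacle. The paper sidesteps this entirely by a weighting trick: it gives each individual non-edge sample (and the positivity sample for the bias coordinate) probability strictly larger than $\kappa + \eps$, the entire error budget in the NO case. Consequently \emph{every} non-edge constraint must be satisfied by any halfspace with $\err_{0-1}^{\D} \leq \kappa + \eps$, so the set $S = \{i : \bw_i \geq \tfrac{0.9}{\sqrt{k}} \bw_*\}$ is automatically a genuine clique, and the bound on the number of violated vertex-selection constraints directly gives $|S| \geq k$. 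Your route would also go through with careful enough weights (you need at most $o(k)$ violated non-edges so that deleting the offending vertices still leaves $k$), but the paper's choice turns the soundness into a two-line argument rather than the delicate rounding you describe.
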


Finally, we explore the other extreme of the trade-off between the running time and approximation ratio, by asking: what is the best approximation ratio we can achieve if we only consider proper learners that run in $\poly(d,\frac{1}{\varepsilon},\frac{1}{\gamma})$-time? On this front, it is known~\cite{Servedio:01lnh} that the perceptron algorithm achieves $1/\gamma$-approximation. We show that a significant improvement over this is unlikely, by showing that $(1/\gamma)^{\frac{1}{\polyloglog(1/\gamma)}}$-approximation is not possible unless NP = RP. If we additionally assume the so-called Sliding Scale Conjecture~\cite{BGLR94}, this ratio can be improved to $(1/\gamma)^{c}$ for some constant $c > 0$.

\begin{theorem} \label{thm:inapx}
Assuming NP $\ne$ RP, there is no proper $(1/\gamma)^{1/\polyloglog(1/\gamma)}$-agnostic learner for $\gamma$-margin halfspaces that runs in time $\poly(d,\frac{1}{\varepsilon},\frac{1}{\gamma})$. Furthermore, assuming NP $\ne$ RP and the Sliding Scale Conjecture (Conjecture~\ref{conj:ssc}), there is no proper $(1/\gamma)^c$-agnostic learning for $\gamma$-margin halfspaces that runs in time $\poly(d,\frac{1}{\varepsilon},\frac{1}{\gamma})$ for some constant $c > 0$.
\end{theorem}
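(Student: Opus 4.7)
}

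The plan is to implement the ``item selection with local constraints'' reduction sketched in Section~\ref{ssec:techniques}, starting from an appropriate hard gap-CSP. Specifically, I would fix a $q$-ary Boolean CSP (Label Cover will be the cleanest choice) on $n$ variables whose value is either $1$ (YES instance) or at most $1/R$ (NO instance), where the gap parameter $R$ is a growing function of $n$. Each variable is represented by $q$ dimensions, one per possible label, and the ``ideal'' assignment in the YES case becomes the vector $\bw^{\ast}$ with $\bw^{\ast}_{(v,\ell(v))} = 1/\sqrt{n}$ for $v\in[n]$ and $0$ elsewhere. Thus $k = n$, so the natural margin is $\gamma = \Theta(1/\sqrt{n})$.

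Next I would encode the constraints as sparse sample vectors in the distribution $\D$ on $\B_d\times\{\pm 1\}$. Two families of constraints are needed: ``consistency'' samples enforcing that at most one label per variable is chosen (in the spirit of the non-edge vectors $\tfrac{1}{\sqrt{2}}\be^i + \tfrac{1}{\sqrt{2}}\be^j$ from the $k$-Clique example), and ``constraint'' samples enforcing that each CSP constraint is satisfied. Following the technique outlined in the introduction, each right-hand side is shifted by a small multiple of $1/\sqrt{n}$, so that the intended $\bw^{\ast}$ meets every inequality with margin $\Omega(1/\sqrt{n}) = \Omega(\gamma)$. Weighting the two families so that the YES-case fraction of violated samples equals the fraction of unsatisfied constraints (up to lower-order terms), the YES instance gives $\opt_{\gamma}^{\D} \leq \eps_{\mathrm{YES}}$, while in the NO instance any halfspace (whether or not it corresponds to a legal assignment) violates at least an $\Omega(\eps_{\mathrm{YES}}\cdot R)$ fraction of samples, yielding $\opt_{0-1}^{\D}\geq \alpha \cdot \opt_{\gamma}^{\D}$ for $\alpha = \Theta(R)$. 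Since $R$ is expressible as a function of $n$ and $\gamma = \Theta(1/\sqrt{n})$, the inapproximability factor translates to a function of $1/\gamma$.

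For the unconditional (NP $\ne$ RP) part, I would instantiate the above with a sub-constant error PCP in the style of Moshkovitz--Raz~\cite{MR10}, which gives NP-hardness of gap Label Cover with gap $R = (\log n)^{\Omega(1)}$ and alphabet of matching size; combined with $\gamma = \Theta(1/\sqrt{n})$ this yields $R = (1/\gamma)^{1/\polyloglog(1/\gamma)}$ as claimed. For the conditional part, the Sliding Scale Conjecture (Conjecture~\ref{conj:ssc}) gives NP-hardness of gap Label Cover with gap $R = n^{\Omega(1)}$, which directly yields inapproximability factor $(1/\gamma)^c$ for some absolute $c>0$. In both cases, since NP-hardness of the starting CSP goes through randomized reductions, the assumption NP $\ne$ RP is used to rule out $\poly(d,1/\eps,1/\gamma)$-time proper learners; the dimension $d$ of the reduction is $\poly(n,q) = \poly(1/\gamma)$, and $1/\eps$ can be taken to be $\poly(n)$, which keeps the overall reduction polynomial.

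The main obstacle, as with most margin-based hardness reductions, is the quantitative soundness analysis. Concretely, one must show that an arbitrary (not necessarily $\{0,1/\sqrt{n}\}$-valued) halfspace $\bw$ with small zero-one error on $\D$ can be ``rounded'' to an assignment of the CSP of comparable value, despite the shifted right-hand sides and the introduction of margin slack. This requires carefully choosing the magnitude of the shift (small enough to leave sound, but large enough to guarantee margin $\Omega(\gamma)$ for $\bw^{\ast}$), and carefully balancing the weights given to consistency vs.\ constraint samples so that, for any $\bw$, the fraction of violated consistency samples upper-bounds the probability that the induced assignment is ``non-integral'' in a harmful way. Once this soundness bookkeeping is carried out, the margin calculations and the translation from the CSP gap $R$ to an inapproximability factor in terms of $1/\gamma$ are routine.
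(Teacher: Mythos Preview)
Your overall plan---reduce from a hard gap CSP, encode an assignment as the normal vector of a halfspace with margin $\Theta(1/\sqrt{k})$, and translate the CSP gap into an inapproximability factor---is the same as the paper's, and your treatment of the Sliding Scale Conjecture part is essentially right.

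The first (NP $\ne$ RP) part, however, has a genuine quantitative gap. You propose to use a Moshkovitz--Raz style $2$-CSP with gap $R=(\log n)^{\Omega(1)}$ and margin $\gamma=\Theta(1/\sqrt{n})$, and then assert that this yields $R=(1/\gamma)^{1/\polyloglog(1/\gamma)}$. But with $1/\gamma=\Theta(\sqrt{n})$ one gets only $R=(\log n)^{\Omega(1)}=\polylog(1/\gamma)$, which is far smaller than $(1/\gamma)^{1/\polyloglog(1/\gamma)}$; the latter equals $\exp\!\big(\log(1/\gamma)/\polyloglog(1/\gamma)\big)$ and dominates any fixed power of $\log(1/\gamma)$. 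So a $2$-query PCP with polylogarithmic gap cannot deliver the stated inapproximability ratio, regardless of how carefully the soundness rounding is done.

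The paper closes this gap by starting from a different PCP: the Dinur--Harsha--Kindler theorem, which gives NP-hardness of $n^{-\Omega(1)}$-Gap-$k$-CSP with $k=\polyloglog(n)$. The point is that one needs a \emph{polynomial} gap $1/\nu=n^{\Omega(1)}$; the price is that the arity $k$ grows (slowly) with $n$. In the paper's reduction the resulting approximation ratio is $\alpha=\Omega\big((1/\nu)^{1/k}/k\big)=n^{\Omega(1)/\polyloglog(n)}$, while the margin becomes $\gamma=1/d^{\polyloglog(d)}$ (the dependence on $|\Sigma|^{3k}$ is what pushes $\gamma$ below $1/\sqrt{n}$ here). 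Combining these two gives exactly $\alpha=(1/\gamma)^{1/\polyloglog(1/\gamma)}$. A secondary difference is the encoding: the paper indexes coordinates by pairs (constraint, accepting local assignment) rather than by (variable, label) pairs as you do; this choice makes the soundness rounding cleaner when $k>2$, but is not the crux of the discrepancy.
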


We note here that the constant $c$ in Theorem~\ref{thm:inapx} is not explicit, i.e., it depends on the constant from the Sliding Scale Conjecture (SSC). Moreover, even when assuming the most optimistic parameters of SSC, the constant $c$ we can get is still very small. For instance, it is still possible that a say $\sqrt{1/\gamma}$-agnostic learning algorithm that runs in polynomial time exists, and this remains an interesting open question. We remark that Daniely et al.~\cite{DanielyLS14b} have made partial progress in this direction 
by showing that, any $\poly(d,\frac{1}{\varepsilon},\frac{1}{\gamma})$-time learner that belongs to a ``generalized linear family'' cannot achieve approximation ratio $\alpha$ better than $\Omega\left(\frac{1/\gamma}{\polylog(1/\gamma)}\right)$. We note that the inapproximability ratio of \cite{DanielyLS14b} is close to being tight for a natural, yet restricted, family of improper learners. 
On the other hand, our proper hardness result holds against {\em all} proper learners under a widely believed worst-case
complexity assumption.

\subsection{Lower Bounds with Stronger Quantifiers on Parameters}
\label{sec:lb-strong-quantifier}

Before we proceed to our proofs, let us first state a running time lower bound with stronger quantifiers. 
Recall that previously we only rule out algorithms that work \emph{for all} combinations of $d, \gamma, \varepsilon$. 
Below we relax the quantifier so that we need the \emph{for all} quantifier only for $d$.

\begin{lemma} \label{lem:strong-quantifier}
Assuming the (randomized) ETH, for any universal constant $\alpha \geq 1$, there exists $\varepsilon_0 = \varepsilon_0(\alpha)$ such that there is no $\alpha$-agnostic learner for $\gamma$-margin halfspaces that runs in time $O(2^{(1/\gamma)^{2 - o(1)}})\poly(d)$ for all $d$ and for some $0 < \varepsilon < \varepsilon_0$ and $\frac{1}{d^{0.5 - o(1)}} \leq \gamma = \gamma(d) \leq \frac{1}{(\log d)^{0.5 + o(1)}}$ that satisfies $\frac{\gamma(d + 1)}{\gamma(d)} \geq \Omega(1)$.
\end{lemma}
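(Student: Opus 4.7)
The plan is to prove Lemma~\ref{lem:strong-quantifier} via a gap-preserving reduction from a Gap-CSP problem that is ETH-hard at near-linear size. The first step is to invoke the ``quasi-linear'' PCP theorems of Dinur~\cite{Dinur07} and Moshkovitz-Raz~\cite{MR10}: assuming (randomized) ETH, for any constant $\alpha > 1$, distinguishing between fully-satisfiable CSP instances on $n$ variables (with constant alphabet and constant arity constraints) and those for which at most a $1/\alpha$ fraction of constraints can be simultaneously satisfied cannot be done in time $2^{n^{1-o(1)}}$. The constant gap $\alpha$ can be boosted to any desired constant via parallel repetition, at the cost of only a constant multiplicative blow-up in $n$.

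Given such a Gap-CSP instance, I would then construct a distribution $\D$ on $\B_n \times \{\pm 1\}$ supported on sparse vectors, following the template sketched in Section~\ref{ssec:techniques}. Each variable $v_i$ is associated with a coordinate $i \in [n]$, and the ``ideal'' weight vector in the YES case is $\bw^{\ast} = \frac{1}{\sqrt{n}}\sum_i (-1)^{\sigma_i}\be^i$, where $\sigma$ is a satisfying assignment. Each constraint $C$ is encoded as a constant number of sparse sample vectors (each supported on the coordinates of $C$, with constant-magnitude entries), labeled so that $\bw^{\ast}$ classifies them with a margin of $\gamma = \Theta(1/\sqrt{n})$ whenever $C$ is satisfied, using the ``shifted threshold'' trick from the overview to make the margin strictly positive. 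With $\varepsilon_0$ chosen as a sufficiently small constant depending on $\alpha$, the construction gives $\opt_\gamma^\D \leq \varepsilon_0/(10\alpha)$ in the YES case and $\opt_{0-1}^\D \geq \varepsilon_0$ in the NO case, so that any $\alpha$-agnostic $\gamma$-margin learner with accuracy parameter $\varepsilon < \varepsilon_0/2$ distinguishes the two.

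To handle the quantifier over $d$, given the prescribed function $\gamma(d)$, I would set $n := \lceil 1/\gamma(d)^2 \rceil$ and embed the $n$-dimensional construction into $\R^d$ by zero-padding the remaining coordinates. The hypothesis $\gamma(d) \geq d^{-0.5 + o(1)}$ ensures $n \leq d^{1-o(1)} \leq d$, so the embedding fits; the hypothesis $\gamma(d) \leq (\log d)^{-0.5 - o(1)}$ ensures $n = \omega(\log d)$, which makes $2^{n^{1-o(1)}}$ strictly dominate the $\poly(d)$ overhead of any putative learner. The Lipschitz-type condition $\gamma(d+1)/\gamma(d) = \Omega(1)$ guarantees that for every sufficiently large $d$ the chosen $n$ produces a construction whose margin is $\Theta(\gamma(d))$. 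An algorithm running in time $2^{(1/\gamma)^{2-o(1)}}\poly(d) = 2^{n^{1-o(1)}}$ would then solve the Gap-CSP in $2^{n^{1-o(1)}}$ time, contradicting ETH.

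The main obstacle is the \emph{soundness} direction of the reduction: in the NO case, one must argue that no halfspace --- not merely those of the ``Boolean shape'' $\frac{1}{\sqrt{n}}\sum (\pm)\be^i$ --- achieves small $0$--$1$ error on $\D$. This requires a rounding argument that converts an arbitrary low-error weight vector $\bw$ into a CSP assignment satisfying a $1 - O(\opt_{0-1}^\D)$ fraction of the constraints, contradicting the $1/\alpha$ upper bound from the NO case of the Gap-CSP. Tuning the shifted-threshold constants and sample weights so that the CSP gap is translated into exactly the claimed constant-factor $\alpha$-agnostic hardness for halfspaces (rather than a degraded ratio) is the principal technical subtlety.
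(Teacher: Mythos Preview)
Your high-level plan is right and matches the paper's: invoke a near-linear-size PCP (Moshkovitz--Raz) to get ETH-hardness for Gap-$k$-CSP, reduce the CSP to a margin-halfspace learning instance with $\gamma=\Theta(1/\sqrt{n})$, and then pad the dimension so that the assumed learner, which only works at margin $\gamma(d)$, can be invoked. Your use of the three hypotheses on $\gamma(\cdot)$ is also correct in spirit: the lower bound $\gamma(d)\ge d^{-0.5+o(1)}$ ensures the padding fits, the upper bound $\gamma(d)\le(\log d)^{-0.5-o(1)}$ ensures $\poly(d)$ is swallowed by $2^{n^{1-o(1)}}$, and the ratio condition ensures the CSP sizes you can hit are dense enough to pad any instance within a constant factor. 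One cosmetic point: you describe the map $d\mapsto n:=\lceil 1/\gamma(d)^2\rceil$, whereas what you actually need (and what the paper does) is the inverse direction---given a CSP of size $n$, pick the extremal $d$ with $\gamma(d)$ on the correct side of the construction's margin---but these are equivalent once the ratio condition is in hand.

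Where your proposal diverges from the paper is the reduction itself. You encode the CSP with one coordinate per variable and the ideal vector $\bw^\ast=\frac{1}{\sqrt n}\sum_i(-1)^{\sigma_i}\be^i$, with each constraint represented by ``a constant number of sparse sample vectors'' and soundness handled by sign-rounding $\bw$ to an assignment. This is the step that is genuinely under-specified: for the Moshkovitz--Raz PCP the alphabet is a (large) constant, not Boolean, so the $(-1)^{\sigma_i}$ encoding does not apply directly; and even for 3SAT, a halfspace with small $0$--$1$ error need not round by sign to a good assignment (a single large coordinate can satisfy many clause-samples without the sign pattern satisfying the clauses). The paper sidesteps both issues by a different encoding: one coordinate per \emph{(constraint, accepting tuple)} pair plus a ``constant'' coordinate $\bw_*$, together with four weighted sample families (a positivity constraint for $*$, non-negativity constraints for each coordinate, a satisfiability constraint per clause, and a selection constraint per (variable, label) pair). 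The weights are chosen so that any low-error halfspace can be assumed w.l.o.g.\ to satisfy all non-negativity and satisfiability constraints, after which the violated selection constraints directly yield small label lists $L_v$ from which a random assignment is decoded. This is exactly the ``rounding argument'' you flag as the main obstacle, but it requires the richer coordinate space and the auxiliary constraint types---not just a sign-rounding of a per-variable vector. Your sketch would need to be substantially reworked along these lines to go through.
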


We remark here that the lower and upper bounds on $\gamma$ are essentially (i.e., up to lower order terms) the best possible. On the upper bound front, if $\gamma \geq \tilde{O}\left(\frac{1}{\sqrt{\log d}}\right)$, then our algorithmic result (Theorem~\ref{thm:constant-factor-alg}) already give a $\poly(d, \frac{1}{\varepsilon})$-time $\alpha$-agnostic learner for $\gamma$-margin halfspaces (for all constant $\alpha > 1$). On the other hand, if $\gamma \leq O(\frac{1}{d^{0.5 + o(1)}})$, then the trivial algorithm that exactly solves ERM for $m = O\left(\frac{d}{\varepsilon^2}\right)$ samples only takes $2^{O(d/\varepsilon^2)}$ time, which is already asymptotically faster than $2^{(1/\gamma)^{2 - o(1)}}$. The last condition that $\frac{\gamma(d + 1)}{\gamma(d)}$ is not too small is a sanity-check condition that prevents ``sudden jumps'' in $\gamma(d)$ such as $\gamma(d) = \frac{1}{(\log d)^{0.1}}$ and $\gamma(d + 1) = \frac{1}{(d + 1)^{0.1}}$; note that the condition is satisfied by ``typical functions'' such as $\gamma(d) = \frac{1}{d^c}$ or $\gamma(d) = \frac{1}{(\log d)^c}$ for some constant $c$.

As for $\varepsilon$, we only require the algorithm to work for any $\varepsilon$ that is not \emph{too large}, i.e., no larger than $\varepsilon_0(\alpha)$. This latter number is just a constant (when $\alpha$ is a constant). We note that it is still an interesting open question to make this requirement as mild as possible; specifically, is it possible to only require the algorithm to work for any $\varepsilon < 1/2$?

\subsection{Reduction from $k$-Clique and Proof of Theorem~\ref{thm:param}}

We now proceed to the proofs of our results, starting with Theorem~\ref{thm:param}.

To prove Theorem~\ref{thm:param}, we reduce from the $k$-Clique problem. In $k$-Clique, we are given a graph $G$ and an integer $k$, and the goal is to determine whether the graph $G$ contains a $k$-clique (as a subgraph). 

We take the perspective of \emph{parameterized complexity}. Recall that a parameterized problem with parameter $k$ is said to be \emph{fixed parameter tractable (FPT)} if it can be solved in time $f(k)\poly(n)$ for some computable function $f$, where $n$ denotes the input size. 

It is well-known that $k$-Clique is complete for the class W[1]~\cite{DowneyF95}. In other words, under the (widely-believed) assumption that W[1] does not collapse to FPT (the class of fixed parameter tractable problems), we cannot solve $k$-Clique in time $f(k) \poly(n)$ for any computable function $f$. We shall not formally define the class W[1] here; interested readers may refer to the book of Downey and Fellows for 
an in-depth treatment of the topic~\cite{DowneyF13}.

Our reduction starts with an instance of $k$-Clique and produces an instance of agnostic learning with margin $\gamma$ such that $\gamma = \Omega(1/k)$ (and the dimension is polynomial):

\begin{lemma} \label{lem:clique-red}
There exists a polynomial-time algorithm that takes as input an $n$-vertex graph instance $G$ and an integer $k$, and produces a distribution $\D$ over $\B_d \times \{\pm 1\}$ and $\gamma, \kappa \in [0, 1]$ such that
\begin{itemize}
\item (Completeness) If $G$ contains a $k$-clique, then $\opt_{\gamma}^{\D} \leq \kappa$.
\item (Soundness) If $G$ does not contains a $k$-clique, then $\opt_{0-1}^{\D} > \kappa + \frac{0.001}{n^3}$.
\item (Margin Parameter) $\gamma \geq \Omega(\frac{1}{\sqrt{k}})$.
\end{itemize}
\end{lemma}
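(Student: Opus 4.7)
The plan is to realize the $k$-Clique instance as a distribution $\D$ supported on $O(n^2)$ sparse unit vectors in $\R^d$ with $d = n+1$, where the last coordinate serves as a bias. For every non-edge $(i,j) \notin E(G)$ we include a \emph{negative} sample $\bx_{ij}^- \propto \be^i + \be^j + \beta\, \be^{n+1}$ (normalized to unit norm), and for every vertex $i \in [n]$ a \emph{positive} sample $\bx_i^+ \propto \be^i + \beta'\, \be^{n+1}$ (with the sign of $\beta'$ chosen opposite to the bias of the ideal hypothesis), each with equal probability $\Theta(1/n^2)$. The constants $\beta, \beta', a, b$ and the form of the ``ideal'' hypothesis $\bw^\ast = (\tfrac{a}{\sqrt{k}}\mathbf{1}_C, -b)$ (with $a^2 + b^2 = 1$) will be tuned so that the encoded linear constraints reproduce the paper's sketch: $(\tfrac{1}{\sqrt 2}\be^i + \tfrac{1}{\sqrt 2}\be^j) \cdot \bw \leq \tfrac{0.99}{\sqrt{2k}}$ for non-edges, with the $0.01$ slack providing a halfspace margin $\gamma = \Theta(1/\sqrt{k})$ at the tight constraint (a non-edge with exactly one endpoint in $C$).

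For completeness, given a $k$-clique $C \subseteq V(G)$, choose $b = c/\sqrt{k}$ for a constant $c > 1$ and $a = \sqrt{1-b^2}$. Since $C$ is a clique, no non-edge $(i,j)$ has both endpoints in $C$, so $|\{i,j\}\cap C| \leq 1$, and a direct computation shows $\langle \bw^\ast, \bx_{ij}^-\rangle \leq -\gamma$ for every non-edge; the bias term similarly handles the positive samples $\bx_i^+$ for every vertex $i$ (including those outside $C$, where the bias contribution dominates). Hence $\opt_\gamma^{\D} \leq \kappa$, where $\kappa$ is the (zero or tiny) weight of any built-in slack samples.

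For soundness, suppose $G$ has no $k$-clique but some $\bw$ with $\|\bw\|_2 \leq 1$ achieves error at most $\kappa + \tfrac{0.001}{n^3}$. Because every sample carries probability $\Omega(1/n^2) \gg \tfrac{0.001}{n^3}$, the extra error budget is strictly smaller than the mass of a single sample, so $\bw$ must correctly classify \emph{every} sample beyond the baseline---in particular, every negative (non-edge) sample. Correct classification of each non-edge sample is algebraically equivalent to the inequality $\bw_i + \bw_j \leq \tau$ for a threshold $\tau = \Theta(1/\sqrt{k})$ determined by $\bw_{n+1}$. Setting $S = \{i : \bw_i > \tau/2\}$, any two elements of $S$ cannot form a non-edge, so $S$ is a clique in $G$. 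The positive samples, together with $\|\bw\|_2 \leq 1$, are used to force $|S| \geq k$, yielding a $k$-clique and contradicting the NO hypothesis.

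The main obstacle is the last step of soundness: arranging the positive samples (and choosing the bias scalings $\beta, \beta'$) so that correctly classifying enough positive samples genuinely forces at least $k$ coordinates of $\bw$ above the rounding threshold, rather than allowing a trivial bias-only classifier to succeed. This is where care is needed: the bias structures of positive and negative samples must be chosen so that no bias-dominant $\bw$ can simultaneously satisfy both types of constraints, while the ideal $\bw^\ast$ still does. The paper's shift from $\tfrac{1}{\sqrt{2k}}$ to $\tfrac{0.99}{\sqrt{2k}}$ is exactly what gives the $\Theta(1/\sqrt{k})$ slack needed in both directions. Since the reduction is clearly polynomial-time, combining it with the W[1]-hardness of $k$-Clique yields Theorem~\ref{thm:param}.
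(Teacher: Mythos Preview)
Your proposal has a genuine structural gap, which you yourself flag as ``the main obstacle'' but do not resolve. With all samples carrying equal weight $\Theta(1/n^2)$ and $\kappa$ taken to be ``zero or tiny'', soundness forces every sample to be correctly classified, and you then need the vertex samples $\bx_i^+$ to certify $|S|\ge k$. But this is incompatible with your own completeness argument: for $i\notin C$ your ideal vector gives $\langle\bw^\ast,\bx_i^+\rangle\propto\beta'(-b)$, and under your sign convention ($\beta'$ opposite in sign to the bias $-b$, hence same sign as $b$) this is negative, so the positively-labeled $\bx_i^+$ is misclassified, contradicting $\kappa\approx 0$. If instead you flip the sign so that these samples are correctly classified by $\bw^\ast$, then the same bias term lets a classifier that selects fewer than $k$ vertices (or none) satisfy all the $\bx_i^+$ as well, so the vertex samples carry no information about $|S|$. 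Either way, an equal-weight, zero-$\kappa$ design cannot encode the lower bound $|S|\ge k$.

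The paper resolves this with a different counting mechanism: \emph{unequal} weights and a \emph{nonzero} $\kappa$. Besides a heavy positivity constraint for the bias coordinate and a heavy non-edge constraint per non-edge (each of weight $\Omega(1/n^2)$), it adds for every vertex $i$ a \emph{light} constraint $\bw_i\ge\frac{0.9}{\sqrt{k}}\,\bw_*$ with weight $\frac{0.01}{n^3}$. The ideal clique indicator satisfies all heavy constraints but \emph{violates} exactly the $n-k$ light constraints for $i\notin C$, whence $\kappa=(n-k)\cdot\frac{0.01}{n^3}$. In soundness, any $\bw$ with error $\le\kappa+\frac{0.001}{n^3}$ must satisfy all heavy constraints and all but at most $n-k$ of the light ones; the $\ge k$ satisfied light constraints give $\ge k$ coordinates with $\bw_i\ge\frac{0.9}{\sqrt{k}}\,\bw_*$, and the non-edge constraints then force those vertices to be a clique. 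The idea you are missing is that the bound $|S|\ge k$ comes from \emph{budgeting} a specific number of violations among low-weight constraints, not from requiring all constraints to hold.
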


We remark here that, in Lemma~\ref{lem:clique-red} and throughout the remainder of the section, we say that an algorithm produces a distribution $\D$ over $\B_d \times \{\pm 1\}$ to mean that it outputs the set of samples $\{(\bx^{(i)}, y^{(i)})\}_{i \in [m]}$ and numbers $d_i$ for each $i \in [m]$ representing the probability of $(\bx^{(i)}, y^{(i)})$ with respect to $\D$. Note that this is stronger than needed since, to prove hardness of learning, it suffices to have an oracle that can sample from $\D$, but here we actually explicitly produce a full description of $\D$. Moreover, note that this implicitly implies that the support of $\D$ is of polynomial size (and hence, for any given $h$, $\err_{\gamma}^{\D}(h)$ and $\err_{0-1}^{\D}(h)$ can be efficiently computed).

As stated above, Lemma~\ref{lem:clique-red} immediately implies Theorem~\ref{thm:param} because, if we can agnostically learn $\gamma$-margin halfspaces in time $f(\frac{1}{\gamma})\poly(d,\frac{1}{\varepsilon})$, then we can solve $k$-Clique in $f(O(\sqrt{k})) \poly(n)$ time, which would imply that W[1] is contained in (randomized) FPT. This is formalized below.

\begin{proof}[Proof of Theorem~\ref{thm:param}]
Suppose that we have an $f(\frac{1}{\gamma})\poly(d,\frac{1}{\varepsilon})$-time agnostic learner for $\gamma$-margin halfspaces. Given an instance $(G, k)$ of $k$-Clique, we run the reduction from Lemma~\ref{lem:clique-red} to produce a distribution $\D$. We then run the learner on $\D$ with $\varepsilon = \frac{0.001}{n^3}$ (and with $\delta = 1/3$). Note that the learner runs in time $f(O(\sqrt{k}))\poly(n)$ and produces a halfspace $h$. We then compute $\err_{0-1}^{\D}(h)$; if it is no more than $\kappa + \frac{0.001}{n^3}$, then we output YES. Otherwise, we 
output NO.

The algorithm described above solves $k$-Clique (correctly with probability 2/3) in FPT time. Since $k$-Clique is W[1]-complete, this implies that W[1] is contained in randomized FPT.
\end{proof}

We now move on to prove Lemma~\ref{lem:clique-red}. Before we do so, let us briefly describe the ideas behind it. The dimension $d$ will be set to $n$, the number of vertices of $G$. Each coordinate $\bw_i$ is associated with a vertex $i \in V(G)$. In the completeness case, we would like to set $\bw_i = \frac{1}{\sqrt{k}}$ iff $i$ is in the $k$-clique and $\bw_i = 0$ otherwise. To enforce a solution to be of this form, we add two types of samples that induces the following constraints:
\begin{itemize}
\item \emph{Non-Edge Constraint:} for every \emph{non-}edge $(i, j)$, we should have $\bw_i + \bw_j \leq \frac{1}{\sqrt{k}}$. That is, we should ``select'' at most one vertex among $i, j$.
\item \emph{Vertex Selection Constraint:} each coordinate of $\bw$ is at least $\frac{1}{\sqrt{k}}$. Note that we will violate such constraints for all vertices, except those that are ``selected''.
\end{itemize}

If we select the probabilities in $\D$ so that the non-edge constraints are weighted much larger than the vertex selection constraints, then it is always better to not violate any of the first type of constraints. When this is the case, the goal will now be to violate as few vertex selection constraints as possible, which is the same as finding a maximum clique, as desired. 

While the above paragraph describes the core idea of the reduction, there are two additional issues we have to resolve:
\begin{itemize}
\item \emph{Constant Coordinate: } first, notice that we cannot actually quite write a constraint of the form $\bw_i + \bw_j \leq \frac{1}{\sqrt{k}}$ using the samples because there is no way to express a value like $\frac{1}{\sqrt{k}}$ directly. To overcome this, we have a ``constant coordinate'' $\bw_*$, which is supposed to be a constant, and replace the right hand side of non-edge constraints by $\frac{\bw_*}{\sqrt{k}}$ (instead of $\frac{1}{\sqrt{k}}$). The new constraint can now be represented by a sample.
\item \emph{Margin:} in the above reduction, there was no margin at all! To get the appropriate margin, we ``shift'' the constraint slightly so that there is a margin. For instance, instead of $\frac{\bw_*}{\sqrt{k}}$ for a non-edge constraint, we use $\frac{1.1 \bw_*}{\sqrt{k}}$.  We now have a margin of $\approx \frac{0.1}{\sqrt{k}}$ and it is still possible to argue that the best solution is still to select a clique.
\end{itemize}

The reduction, which follows the above outline, is formalized below.

\begin{proof}[Proof of Lemma~\ref{lem:clique-red}]
Given a graph $G = (V, E)$, we use $n$ to denote the number of vertices $|V|$ and we rename its vertices so that $V = [n]$. We set $d = n + 1$; we name the first coordinate $*$ and each of the remaining coordinates $i \in [n]$. For brevity, let us also define $\beta = 1 - \frac{0.01}{n^2}$. The distribution $\D$ is defined as follows:
\begin{itemize}
\item Add a labeled sample $(-\be^*, -1)$ with probability $\frac{\beta}{2}$ in $\D$. We refer to this as the \emph{positivity constraint for $*$}.
\item For every pair of distinct vertices $i, j$ that do not induce an edge in $E$, add a labeled sample $(\frac{1}{2}\left(\frac{1.1}{\sqrt{k}}\be^* - \be^{i} - \be^{j}\right), 1)$ with probability $\frac{\beta}{2\left(\binom{n}{2} - |E|\right)}$ in $\D$. We refer to this as the \emph{non-edge constraint for $(i, j)$}.
\item For every vertex $i$, add a labeled sample $(\frac{1}{2}\left(\be^{i} - \frac{0.9}{\sqrt{k}}\be^*\right), 1)$ with probability $\frac{0.01}{n^3}$ in $\D$. We refer to this as the \emph{vertex selection constraint for $i$}.
\end{itemize}
Finally, let $\gamma = \frac{0.1}{2\sqrt{2k}}$, $\kappa = (n - k)\left(\frac{0.01}{n^3}\right)$. It is obvious that the reduction runs in polynomial time.

\paragraph{Completeness.} Suppose that $G$ contains a $k$-clique; let $S \subseteq V$ denote the set of its vertices. We define $\bw$ by $\bw_* = \frac{1}{\sqrt{2}}$ and, for every $i \in V$, $\bw_i = \frac{1}{\sqrt{2k}}$ if $i \in S$ and $\bw_i = 0$ otherwise. It is clear that $\|\bw\|_2 = 1$ and that, for every $(\bx, y) \in \supp(\D)$, we have $|\left<\bw, \bx\right>| \geq \frac{0.1}{2\sqrt{2k}}$. Finally, observe that all the first two types of constraints are satisfied, and a vertex selection constraint for $i$ is unsatisfied iff $i \notin S$. Thus, we have $\err_\gamma^{\D}(\bw) = (n - k)\left(\frac{0.01}{n^3}\right) = \kappa$, which implies that $\opt_\gamma^{\D} \leq \kappa$ as desired.

\paragraph{Soundness.} Suppose contrapositively that $\opt_{0-1}^{\D} \leq \kappa + \frac{0.001}{n^3}$; that is, there exists $\bw$ such that $\err_{0-1}^{\D}(\bw) \leq \kappa + \frac{0.001}{n^3}$. Observe that each labeled sample of the first two types of constraints has probability more than $\frac{\beta}{2n^2} > \kappa + \frac{0.001}{n^3}$. As a result, $\bw$ must correctly classifies these samples. Since $\bw$ correctly classifies $(-\be^*, -1)$, it must be that $w_* > 0$.

Now, let $T$ be the set of vertices $i$ such that $\bw$ mislabels the vertex selection constraint for $i$. Observe that $|T| < \frac{\left(\kappa + \frac{0.001}{n^3}\right)}{\frac{0.01}{n^3}} < n - k + 1$. In other words, $S := V \setminus T$ is of size at least $k$. We claim that $S$ induces a $k$-clique in $G$. To see that this is true, consider a pair of distinct vertices $i, j \in S$. Since $\bw$ satisfies the vertex selection constraints for $i$ and for $j$, we must have $\bw_i, \bw_j \geq \frac{0.9}{\sqrt{k}}$. This implies that $(i, j)$ is an edge, as otherwise $\bw$ would mislabel the non-edge constraint for $(i, j)$.

As a result, $G$ contains a $k$-clique as desired.
\end{proof}

\subsection{Reduction from $k$-CSP and Proofs of Theorems~\ref{thm:run-time},~\ref{thm:inapx} and Lemma~\ref{lem:strong-quantifier}}

In this section, we will prove Theorems~\ref{thm:run-time} and~\ref{thm:inapx}, by reducing from the hardness of approximation of constraint satisfaction problems (CSPs), given by PCP Theorems. 

\subsubsection{CSPs and PCP Theorem(s)}

Before we can state our reductions, we have to formally define CSPs and state the PCP theorems we will use more formally. We start with the definition of $k$-CSP:
\begin{definition}[$k$-CSP]
For any integer $k \in \mathbb{N}$, a $k$-CSP instance $\cL = (V, \Sigma, \{\Pi_q\}_{q \in \cQ})$ consists of
\begin{itemize}
\item The variable set $V$,
\item The alphabet $\Sigma$, which we sometimes refer to as labels,
\item Constraints set $\{\Pi_S\}_{S \in \cQ}$, where $\cQ \subseteq \binom{V}{k}$ is a collection of $k$-size subset of $V$. For each subset $S = \{v_1, \dots, v_k\}$, $\Pi_S \subseteq \Sigma^S$ is the set of accepting answers for the constraint $\Pi_S$.

Here we think of each $f \in \Sigma^S$ as a function from $f: S \to \Sigma$.
\end{itemize}

A $k$-CSP instance is said to be \emph{regular} if each variable appears in the same number of constraints.

An assignment $\phi$ is a function $\phi: V \to \Sigma$. Its value, denoted by $\val_{\cL}(\phi)$, is the fraction of constraints $S \in \cQ$ such that\footnote{We use $\phi|_S$ to denote the restriction of $\phi$ on the domain $S$.} $\phi|_S \in \Pi_S$. Such constraints are said to be \emph{satisfied by $\phi$.} The value of $\cL$, denoted by $\val(\cL)$, is the maximum value among all assignments, i.e., $\val(\cL) := \max_{\phi} \val_{\cL}(\phi)$.

In the \textsc{$\nu$-Gap-$k$-CSP} problem, we are given a regular instance $\cL$ of $k$-CSP, and we want to distinguish between $\val(\cL) = 1$ and $\val(\cL) < \nu$.
\end{definition}

Throughout this subsection, we use $n$ to denote the instance size of $k$-CSP, that is $n = \sum_{S \in \cQ} |\Pi_S|$.

The celebrated PCP theorem~\cite{AroraS98,AroraLMSS98} is equivalent to the proof of NP-hardness of approximating $\nu$-Gap-$k$-CSP for some constant $k$ and $\nu < 1$. Since we would like to prove (tight) running time lower bounds, we need the versions of PCP Theorems that provides strong running time lower bounds as well. For this task, we turn to the Moshkovitz-Raz PCP theorem, which can not only achieve arbitrarily small constant $\nu > 0$ but also almost exponential running time lower bound.

\begin{theorem}[Moshkovitz-Raz PCP~\cite{MR10}] \label{thm:mr-pcp}
Assuming ETH, for any $0 < \nu < 1$, $\nu$-Gap-2-CSP cannot be solved in time $O(2^{n^{1 - o(1)}})$, even for instances with $|\Sigma| = O_\nu(1)$.
\end{theorem}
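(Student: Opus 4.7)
The plan is to treat the statement not as an original theorem but as the combination of two pieces: (i) the exponential time hypothesis, which rules out $2^{o(n)}$-time algorithms for $3$-SAT on $n$ variables, and (ii) a nearly-linear-size, arbitrarily-small-soundness two-query PCP for $3$-SAT. Given these two pieces, the reduction is mechanical: the PCP verifier's acceptance predicate, viewed as a constraint on the two queried symbols of the proof, is exactly a $2$-CSP instance whose variables are the proof positions, whose alphabet is the PCP answer alphabet (constant in $\nu$), and whose size $N$ equals the number of randomness strings. Perfect completeness and soundness $\nu$ of the verifier translate directly into completeness $1$ and soundness $\nu$ of the CSP, and an $O(2^{N^{1-o(1)}})$ algorithm for $\nu$-Gap-$2$-CSP would give a $2^{(n^{1+o(1)})^{1-o(1)}} = 2^{o(n)}$ algorithm for $3$-SAT, contradicting ETH (in the randomized setting if the reduction is randomized).

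So the entire difficulty is concentrated in building the PCP in (ii). First I would reduce from $3$-SAT to an algebraic ``CSP on a low-degree polynomial'' by encoding an assignment as the evaluations of a low-degree polynomial over a field $\mathbb{F}$ of size $|\mathbb{F}| = \polylog n$ on a product domain $H^m \subseteq \mathbb{F}^m$ with $|H|^m = n$ and $m \approx \log n/\log\log n$; each $3$-SAT clause becomes an algebraic constraint at a point of $H^m$. The verifier then runs the Manifold-vs-Point (or Plane-vs-Point) low-degree test: one query asks for the restriction of the polynomial to a low-dimensional manifold and the other for its value at a random point on that manifold, rejecting if the two disagree or if the manifold encodes the violation of a constraint. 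With the right choice of parameters, this gives a two-query PCP with proof length $|\mathbb{F}|^{O(m)} = n \cdot 2^{O(m\log|\mathbb{F}|)} = n^{1+o(1)}$, polylogarithmic alphabet, and soundness $1 - 1/\polylog n$.

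Next I would reduce the alphabet size and the soundness simultaneously. The soundness can be driven down to an arbitrary constant $\nu$ by \emph{parallel repetition}; a naive application blows the proof length up by a polynomial factor, so the key is to use a randomness-efficient (derandomized) parallel repetition, for instance via expander walks or the Raz-Safra/Moshkovitz derandomized product, keeping the proof size at $n^{1+o(1)}$ while amplifying soundness by Raz's theorem. Finally, to make the alphabet constant in $\nu$ (independent of $n$), I would compose the outer manifold-vs-point PCP with a constant-size inner PCP (e.g.\ a Hadamard- or long-code-based inner verifier) using the Arora-Safra/Ben-Sasson-Sudan composition framework; proper composition preserves nearly-linear size.

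The main obstacle is the quasi-linear blowup budget. Standard PCP constructions easily yield polynomial blowup, but here every subroutine, namely the algebraization, the low-degree test, the repetition step, and the composition, must individually cost only an $n^{o(1)}$ factor. Achieving this requires carefully tuning $|H|$, $|\mathbb{F}|$, and $m$ (in particular taking $|\mathbb{F}| = 2^{(\log n)^{\Theta(1)}}$ and $m = (\log n)^{1-\Theta(1)}$), using a randomness-efficient repetition whose alphabet grows only by a constant factor in the exponent of $\nu$, and invoking a size-preserving composition theorem; this is the technical heart of \cite{MR10}, and reproducing it faithfully is what separates a proof of this theorem from a textbook PCP argument.
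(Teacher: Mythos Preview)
The paper does not prove this theorem at all; it is stated as a black-box citation to~\cite{MR10} and then invoked in Section~\ref{sec:hardness-main-proofs}. So there is no ``paper's own proof'' to compare against, and your recognition that the content lies entirely in the cited PCP construction is exactly right.

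That said, your sketch of the Moshkovitz--Raz construction has one substantive inaccuracy worth flagging. You propose driving soundness down to $\nu$ via (derandomized) parallel repetition after building a manifold-vs-point PCP with soundness $1 - 1/\polylog n$. This is not how~\cite{MR10} proceeds: parallel repetition, even derandomized, would not preserve the two-query \emph{projection} property and the near-linear size simultaneously in the way needed. The actual Moshkovitz--Raz argument achieves arbitrarily small (indeed sub-constant) soundness \emph{directly} from a strengthened analysis of a low-degree/Reed--Muller-based test, combined with composition; no repetition step is used. The near-linear size comes from careful parameter choices in the algebraic encoding and from a size-efficient composition, as you note, but the soundness amplification mechanism is fundamentally different from what you outlined. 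For the purposes of the present paper none of this matters, since only the statement is used, but if you intend your write-up to stand as an honest proof sketch of the cited result you should replace the repetition step with a pointer to the direct low-error low-degree test analysis of~\cite{MR10}.
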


As for our hardness of approximation result (Theorem~\ref{thm:inapx}), we are aiming to get as large a ratio as possible. For this purpose, we will use a PCP Theorem of Dinur, Harsha and Kindler, which achieves $\nu = \frac{1}{\poly(n)}$ but need $k$ to be $\polyloglog(n)$.

\begin{theorem}[Dinur-Harsha-Kindler PCP~\cite{DHK15}] \label{thm:dhk-pcp}
$n^{-\Omega(1)}$-Gap-$\polyloglog(n)$-CSP is NP-hard.
\end{theorem}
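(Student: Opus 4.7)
This theorem is a deep result from the PCP literature (attributed to Dinur--Harsha--Kindler), and a proof proposal should essentially mirror the overall architecture of their construction rather than grinding through the low-level arithmetization details. The high-level plan is to start from the standard PCP theorem (giving NP-hardness of Gap-3SAT at some constant soundness $\nu_0 < 1$ with constant arity) and then amplify the gap down to $n^{-\Omega(1)}$, paying only a $\polyloglog(n)$ blowup in the arity (i.e., number of variables per constraint). The two engines driving this amplification are (i) randomness-efficient parallel repetition, which reduces soundness multiplicatively at the cost of larger alphabet and query complexity, and (ii) PCP composition with a carefully engineered inner verifier that has polynomially small soundness and short queries.

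The first step would be to convert the constant-gap Gap-2-CSP given by the PCP theorem into a 2-prover 1-round projection game while maintaining $|\Sigma| = O(1)$ and $\nu = \nu_0$. Next, I would apply an iterated composition scheme: at each level, compose the current outer PCP with an inner PCP that is optimized for polynomially small soundness (for example, the Hadamard/long-code-style inner verifier, or a ``robust'' inner PCP of the type built in Dinur--Reingold and Ben-Sasson--Sudan). Each composition round allows us to decrease the soundness by a polynomial factor (instead of only a constant factor, as naive parallel repetition would give) while multiplying the query complexity by a constant. Running $\Theta(\log \log n)$ levels of this composition produces final soundness $n^{-\Omega(1)}$ and final arity $\polyloglog(n)$. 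Completeness is preserved throughout since each transformation sends satisfiable outer instances to satisfiable inner/outer composed instances.

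The main obstacle, and where the bulk of the DHK technical work sits, is maintaining the correct balance of three parameters simultaneously across the composition: the soundness must actually decay by a polynomial factor per level (this requires the inner PCP to have \emph{low} inherent soundness, not merely constant, which is non-trivial), the query complexity must grow only by a constant factor per level so that after $\Theta(\log\log n)$ levels the total arity is still $\polyloglog(n)$, and the instance size must not blow up super-polynomially (so the final NP-hardness is genuine). Achieving low-soundness short inner PCPs with good ``robustness'' to enable clean composition is the genuine difficulty; the parallel repetition step also needs to be done in a randomness-efficient manner (e.g., using expander-based derandomizations in the spirit of Impagliazzo--Kabanets--Wigderson or Moshkovitz--Raz) to keep the alphabet size under control.

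Finally, I would verify the statement as written: the resulting gap instance has soundness $n^{-\Omega(1)}$, completeness $1$, arity $\polyloglog(n)$, and the reduction from 3SAT runs in polynomial time, which upgrades the conclusion from ``hard under ETH'' to straight NP-hardness. Since the paper only \emph{invokes} this theorem as a black box to feed into the reduction of Section~\ref{sec:lb}, no further tightening of parameters is needed beyond what is already stated.
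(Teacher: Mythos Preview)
The paper does not prove this statement at all: Theorem~\ref{thm:dhk-pcp} is stated purely as a citation to~\cite{DHK15} and is invoked as a black box in the proof of Theorem~\ref{thm:inapx} (see Section~\ref{sec:hardness-main-proofs}). There is therefore nothing in the paper to compare your sketch against. You in fact recognize this in your last paragraph, so the mismatch is not a misunderstanding on your part but simply a case where the ``statement'' selected is an imported result rather than something the authors establish.

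As a side remark on the sketch itself: your outline---start from constant-gap PCP, then iterate composition with a low-soundness inner verifier for $\Theta(\log\log n)$ rounds---is the right genre of argument for results of this type, and you correctly flag the real difficulty (building inner PCPs with polynomially small soundness and bounded query complexity that compose cleanly). That said, the specific Dinur--Harsha--Kindler construction has its own particular ingredients (their composition framework and the precise inner PCP they use), so if you ever needed to actually reproduce the result rather than cite it, you would want to consult~\cite{DHK15} directly rather than rely on this generic template.
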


Finally, we state the Sliding Scale Conjecture (SSC) of Bellare et al.~\cite{BGLR94}, which says that the NP-hardness with $\nu = \frac{1}{\poly(n)}$ holds even in the case where $k$ is constant: 

\begin{conj}[Sliding Scale Conjecture~\cite{BGLR94}] \label{conj:ssc}
For some constant $k$, $n^{-\Omega(1)}$-Gap-$k$-CSP is NP-hard.
\end{conj}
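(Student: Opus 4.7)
The Sliding Scale Conjecture is a central open problem in PCP theory, so any ``proof proposal'' is necessarily speculative; what follows is a sketch of the most natural attack and an honest assessment of where it breaks. The overall plan is to start from the basic constant-arity, constant-soundness PCP Theorem~\cite{AroraS98,AroraLMSS98} (or Dinur's combinatorial PCP~\cite{Dinur07}) and amplify the soundness down to $n^{-\Omega(1)}$ while keeping both the arity $k$ and the alphabet $|\Sigma|$ bounded by an absolute constant, which would give exactly the statement of Conjecture~\ref{conj:ssc}.

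The standard amplification tool is parallel repetition: Raz's theorem converts a $k$-CSP of soundness $1 - \Omega(1)$ into a $k$-CSP of soundness $(1-\Omega(1))^{t}$ at the cost of inflating the alphabet from $\Sigma$ to $\Sigma^{t}$. To reach $n^{-\Omega(1)}$ soundness from a constant gap one must take $t = \Theta(\log n)$, which blows the alphabet up to $n^{\Omega(1)}$. So after repetition I would compose with a low-error inner PCP verifier, in the style of Arora--Safra / Dinur--Reingold / Moshkovitz--Raz, to fold the oversized alphabet back down without losing too much in the gap. This is precisely the template of Dinur--Harsha--Kindler (Theorem~\ref{thm:dhk-pcp}), whose composition reduces the arity to $\polyloglog(n)$; a successful proof of the conjecture would need to push their arity reduction all the way to $O(1)$ while preserving the $n^{-\Omega(1)}$ gap.

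The main obstacle, and the reason the conjecture has resisted attack for decades, is a quantitative one: every known low-error composition gadget either pays a multiplicative $1/\polylog$ loss in the soundness each time it is applied, or else forces the inner PCP to already have polynomially small error over a super-constant alphabet. Iterating composition to eliminate the residual $\polyloglog(n)$ arity of~\cite{DHK15} reintroduces at least a $\polyloglog(n)$ factor in the gap, which one cannot afford. Consequently, the realistic form of the proposal is to isolate, as a sufficient black box, a constant-arity low-error direct-product or agreement test with polynomially small soundness (a strengthening of the derandomized direct product / Raz--Safra-style theorems), and then to feed such a primitive into the Dinur--Harsha--Kindler composition scheme; constructing such a primitive is the true crux, and I would expect this to be where any serious attempt stalls.
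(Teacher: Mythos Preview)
You have correctly identified that this statement is an open conjecture, and indeed the paper does not prove it: Conjecture~\ref{conj:ssc} is stated only as an assumption under which the stronger inapproximability bound of Theorem~\ref{thm:inapx} holds. There is therefore no ``paper's proof'' to compare your proposal against.

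Your discussion of the parallel-repetition-plus-composition template and of the Dinur--Harsha--Kindler barrier is an accurate summary of why the conjecture remains open, and your honest assessment that the argument stalls at the need for a constant-arity low-error agreement test is exactly the known obstruction. But for the purposes of this paper nothing further is required: the conjecture is used as a black-box hypothesis, not established.
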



\subsubsection{Reducing from $k$-CSP to Agnostically Learning Halfspaces with Margin}

Having set up the notation, we now move on to the reduction from $k$-CSP to agnostic learning of halfspaces with margin. Our reduction can be viewed as a modification of the reduction from~\cite{ABSS97}; compared to~\cite{ABSS97}, we have to (1) be more careful so that we can get the margin in the completeness and (2) modify the reduction to work even for $k > 2$.

Before we precisely state the formal properties of the reduction, let us give a brief informal intuition behind the reduction. Given an instance $\cL = (V, \Sigma, \{\Pi_S\}_{S \in \cQ})$ of $k$-CSP, we will create a distribution $\D$ on $\B_d \times \{\pm 1\}$, where the dimension $d$ is equal to $n$. Each coordinate is associated with an accepting answer of each constraint; that is, each coordinate is $(S, f)$ where $S \in \cQ$ and $f \in \Pi_S$. In the completeness case where we have a perfect assignment $\phi$, we would like the halfspace's normal vector to set $\bw_{(S, f)} = 1$ iff $f$ is the assignment to predicate $S$ in $\phi$ (i.e., $f = \phi|_S$), and zero otherwise. To enforce this, we add three types of constraints:
\begin{itemize}
\item \emph{Non-negativity Constraint}: that each coordinate of $\bw$ should be non-negative.
\item \emph{Satisfiability Constraint}: that for each $S \in \cQ$, $\bw_{(S, f)}$ is positive for at least one $f \in \Pi_S$. 
\item \emph{Selection Constraint}: for each variable $v \in V$ and label $\sigma \in \Sigma$, we add a constraint that the sum of all $\bw_{(S, f)}$, for all $S$ that $v$ appears in and all $f$ that assigns $\sigma$ to $v$, is non-positive.
\end{itemize}
Notice that, for the completeness case, we satisfy the first two types of constraints, and we violate the selection constraints only when $\phi(v) = \sigma$. Intuitively, in the soundness case, we will not be able to ``align'' the positive $\bw_{(S, f)}$ from different $S$'s together, and we will have to violate a lot more selection constraints.

Of course, there are many subtle points that the above sketch does not address, such as the margin; on this front, we add one more special coordinate $\bw_*$, which we think of as being equal to 1, and we add/subtract $\delta$ times this coordinate to each of the constraints, which will create the margin for us. Another issue is that the normal vector of the halfspace (and samples) as above have norm more than one. Indeed, our assignment in the completeness case has norm $O(\sqrt{n})$. Hence, we have to scale the normal vector down by a factor of $O(\sqrt{n})$, which results in a margin of $\gamma = \Omega(1/\sqrt{n})$. This is the reason why we arrive at the running time lower bound of the form $2^{\gamma^{2 - o(1)}}$.

The properties and parameter dependencies of the reduction are encapsulated in the following theorem:

\begin{theorem} \label{thm:red}
There exists a polynomial time reduction that takes as input a regular instance $\cL = (V, \Sigma, \{\Pi_S\}_{S \in \cQ})$ of $k$-CSP and a real number $\nu > 0$, and produces a distribution $\D$ over $\B_d \times \{\pm 1\}$ and positive real numbers $\gamma, \kappa, \varepsilon, \alpha$ such that
\begin{itemize}
\item (Completeness) If $\cL$ is satisfiable, then $\opt_{\gamma}^{\D} \leq \kappa$.
\item (Soundness) If $\val(\cL) < \nu$, then $\opt_{0-1}^{\D} > \alpha \cdot \kappa + \varepsilon$.
\item (Margin Parameter) $\gamma = \Omega\left(\frac{1}{\Delta |\Sigma|^{3k}\sqrt{|\cQ|}}\right)$, where $\Delta$ denotes the number of constraints each variable appears in.
\item (Approximation Ratio) $\alpha = \Omega\left(\frac{(1/\nu)^{1/k}}{k}\right)$.
\item (Error Parameter) $\varepsilon = \Omega\left(\frac{1}{\Delta |\Sigma|^k}\right) \cdot \alpha$.
\item (Dimension) $d = n + 1$.
\end{itemize} 
\end{theorem}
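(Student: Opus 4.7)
The plan is to implement the sketch given in the paragraph preceding the theorem, following an ABSS-style~\cite{ABSS97} encoding. I would take $d = n+1$, with coordinates indexed by a special symbol $*$ together with all accepting pairs $(S, f)$ for $S \in \cQ$ and $f \in \Pi_S$. The ``ideal'' weight vector for an assignment $\phi$ of $\cL$ has $\bw_* = 1$ and $\bw_{(S, f)} = 1$ if $f = \phi|_S$ and $0$ otherwise; dividing by $\sqrt{1+|\cQ|}$ places it on the unit sphere. The support of $\D$ consists of four families of labeled samples, each encoding one linear inequality on $\bw$ with a small shift in the $*$-direction that creates a positive margin: (a) a single \emph{positivity} sample $(-\be^*, -1)$; (b) a \emph{non-negativity} sample $(-\be^{(S,f)} - \delta\be^*, -1)$ for every $(S, f)$; (c) a \emph{satisfiability} sample $(\sum_{f \in \Pi_S} \be^{(S,f)} - \delta\be^*, +1)$ for every $S \in \cQ$; and (d) a \emph{selection} sample $(\sum_{(S, f):\, v \in S,\, f(v) = \sigma} \be^{(S,f)} - \beta\be^*, -1)$ for every $(v, \sigma) \in V \times \Sigma$, where $\delta$ is a small absolute constant and $\beta$ is slightly below $\Delta$. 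Each sample vector is rescaled to lie in $\B_d$. Two probability tiers are used: a \emph{heavy} tier of total mass $\Theta(1)$ spread across samples of types (a)--(c), and a \emph{light} tier putting mass $\Theta(\varepsilon/(|V||\Sigma|))$ on each selection sample.

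Completeness is a direct verification. For a satisfying $\phi$, every sample of types (a), (b), (c) is classified correctly with margin at least $\Omega(\delta/(|\Sigma|^{k/2}\sqrt{|\cQ|}))$, while among the selection samples only those with $\sigma = \phi(v)$ are violated, since $\sum \bw_{(S,f)} = \Delta > \beta$ there. Summing the light mass on violated selection samples yields $\opt_\gamma^\D \leq \kappa := \varepsilon/|\Sigma|$. For the soundness side, choosing the heavy tier so that $\alpha\kappa + \varepsilon$ lies below the mass of any individual heavy sample forces every halfspace $\bw$ with $\err_{0-1}^\D(\bw) \leq \alpha\kappa + \varepsilon$ to correctly classify \emph{all} heavy samples. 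In particular $\bw_* > 0$, $\bw_{(S, f)} \geq -\delta\bw_*$ for all $(S, f)$, and $\sum_{f \in \Pi_S} \bw_{(S, f)} \geq \delta\bw_*$ for every $S$. Clipping the negative coordinates of $\bw$ to zero (at negligible cost in selection error) produces a non-negative vector $\bw^+$ whose selection-constraint violations have total light mass at most $\alpha\kappa + \varepsilon$.

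From $\bw^+$ I would extract a randomized assignment $\psi$: for each $v \in V$, draw $\psi(v) = \sigma$ with probability proportional to $\sum_{(S, f):\, v \in S,\, f(v) = \sigma} \bw^+_{(S, f)}$. For every $S \in \cQ$ the satisfiability bound $\sum_{f \in \Pi_S} \bw^+_{(S, f)} \geq \delta\bw_*$ implies, by independence of the marginal distributions at the $k$ variables of $S$, that $\pr[\psi|_S \in \Pi_S]$ is bounded below by a $k$-fold product of ratios of the form $\bw^+_{(S, f)}/Z_v$, where $Z_v$ is the total selection-numerator mass at $v$. A power-mean / AM--GM argument on this product converts the upper bound $\alpha\kappa + \varepsilon$ on total selection-violation mass into a lower bound of the form $\Omega((k/\alpha)^k)$ on $\E_\psi[\val_\cL(\psi)]$. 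Setting $\alpha = \Theta((1/\nu)^{1/k}/k)$ balances the two so that $\E_\psi[\val_\cL(\psi)] > \nu$, and the probabilistic method then produces a deterministic assignment with value strictly exceeding $\nu$, contradicting $\val(\cL) < \nu$.

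The main obstacle is the soundness rounding: converting an upper bound on the total selection-violation mass into a CSP value guarantee of $\nu$ with the correct $k$-th root exponent. The $(1/\nu)^{1/k}$ ratio arises precisely from the $k$-wise product structure of each constraint, and the relevant inequalities are delicate because they must hold \emph{simultaneously} for almost every $S \in \cQ$ and degenerate unless $\delta$, $\beta$, and the heavy/light mass split are tuned together. A secondary but substantial obstacle is parameter bookkeeping: the shifts, tiers, and per-sample normalizations must be chosen so that the heavy samples are genuinely inviolable at error $\leq \alpha\kappa + \varepsilon$, so that $\gamma = \Omega(1/(\Delta|\Sigma|^{3k}\sqrt{|\cQ|}))$ emerges after every sample vector is scaled into $\B_d$ (with the $\Delta$ factor coming from the selection samples of norm $\Theta(\Delta)$ and the $|\Sigma|^{3k}$ factor absorbing the satisfiability-sample normalizations as well as the margin loss suffered when $\bw^+$ is reconstructed from $\bw$), and so that $\varepsilon = \Omega(1/(\Delta|\Sigma|^k)) \cdot \alpha$ holds as claimed.
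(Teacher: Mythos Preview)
Your high-level structure (coordinates indexed by accepting pairs $(S,f)$, four constraint types, randomized decoding) matches the paper, but two concrete choices break the soundness argument.

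\textbf{The thresholds are the wrong way around.} You enforce satisfiability with shift $\delta$ (i.e., $\sum_{f\in\Pi_S}\bw_{(S,f)}\ge\delta\bw_*$) and selection with shift $\beta$ slightly below $\Delta$ (i.e., $\sum_{(S,f):v\in S,f(v)=\sigma}\bw_{(S,f)}<\beta\bw_*$). With these, take $\bw_*=1$ and $\bw_{(S,f)}=\delta/|\Pi_S|$ for every $(S,f)$: every satisfiability and non-negativity constraint holds, and every selection sum is at most $\Delta\delta\ll\beta$, so \emph{all} selection constraints are satisfied too. This $\bw$ has zero error, yet your mass-proportional rounding from it is essentially a uniform random assignment, so you cannot conclude $\val(\cL)\ge\nu$. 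The paper uses the reversed shifts: satisfiability at $1-\delta$ (forcing $\sum_f\bw_{(S,f)}\ge 1-\delta$) and selection at $\delta$ (so a label $\sigma$ counts as ``violated'' whenever it carries any non-negligible mass). That pairing is exactly what drives the structural claim: if $f(v_i)\notin L_{v_i}$ then $\bw_{(S,f)}\le\delta(1+\Delta|\Sigma|^k)$, and summing over $f\in\Pi_S$ contradicts $\sum_f\bw_{(S,f)}\ge 1-\delta$.

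\textbf{Making every heavy sample individually inviolable is infeasible.} There are $\Theta(n)$ non-negativity samples, so if the heavy tier has total mass $\Theta(1)$ each gets mass $O(1/n)$, forcing $\alpha\kappa+\varepsilon<O(1/n)$ and hence $\varepsilon=O(1/n)$; this contradicts $\varepsilon=\Omega(\alpha/(\Delta|\Sigma|^k))$ once $\alpha$ is large. The paper does \emph{not} make these samples inviolable. It gives each non-negativity and satisfiability sample mass $2k/Z$ (only $k$ times a selection sample's mass) and argues by local modification: if $\bw$ violates the non-negativity constraint for $(S,f)$, setting $\bw_{(S,f)}\leftarrow 0$ removes error $2k/Z$ while creating at most $k$ new selection errors of total weight $k/Z$, a net gain; similarly for satisfiability. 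This is the step that lets one assume all non-negativity and satisfiability constraints hold without driving $\varepsilon$ to zero.

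Finally, the paper's rounding is not mass-proportional: it picks $\phi(v)$ \emph{uniformly} from the set $L_v$ of violated selection labels, uses $\sum_v|L_v|\le 2\alpha|V|$ and Markov to get $|L_v|\le 20\alpha k$ on a $(1-\tfrac{1}{10k})$-fraction of variables, and then the structural claim above yields success probability $\ge(20\alpha k)^{-k}$ on every constraint supported in that good set. Your mass-proportional rounding would require an upper bound on $Z_v=\sum_{S'\ni v}\sum_{f'}\bw^+_{(S',f')}$ that nothing in your setup provides.
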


\begin{proof}
Before we define $\cD$, let us specify the parameters:
\begin{itemize}
\item First, we let $d$ be $1 + n$. We name the first coordinate as $*$ and each of the remaining coordinates are named $(S, f)$ for a constraint $S \in \cQ$ and $f \in \Pi_S$.
\item Let $Z := 2\left(|V| \cdot |\Sigma| + 2k|\cQ| + 2k\sum_{e \in E} |\Pi_e|\right)$ be our ``normalizing factor'', which will be used below to normalized the probability.
\item Let $\delta := \frac{0.1}{\Delta |\Sigma|^{2k}}$ be the ``shift parameter''. Note that this is not the margin $\gamma$ (which will be defined below).
\item Let $s := 10\Delta|\Sigma|^k$ be the scaling factor, which we use to make sure that all our samples lie within the unit ball.
\item Let the gap parameter $\alpha$ be $\frac{(1/\nu)^{1/k}}{40k}$.
\item Finally, let $\kappa = \frac{|V|}{Z}$ and $\varepsilon = \kappa \cdot \alpha$.
\end{itemize}

Note that $\alpha$ as defined above can be less than one. However, this is not a problem: in the subsequent proofs of Theorems~\ref{thm:run-time} and~\ref{thm:inapx}, we will always choose the settings of parameters so that $\alpha > 1$. 

We are now ready to define the distribution $\cD$ on $\B_d \times \{\pm 1\}$, as follows:
\begin{enumerate}
\item Add a labeled sample $(-\be^*, -1)$ with probability $1/2$ to $\cD$. This corresponds to the constraint $\bw_* > 0$; we refer to this as the \emph{positivity constraint for $*$}.
\item Next, for each coordinate $(S, f)$, add a labeled sample $\left(\frac{1}{s}\left(\be^{(S, f)} + \delta \cdot \be^{*}\right), 1\right)$ with probability $2k/Z$ to $\cD$. This corresponds to $\bw_{(S, f)} + \delta \cdot \bw_{*} \geq 0$ scaled down by a factor of $1/s$ so that the vector is in the unit ball; we refer to this as the \emph{non-negativity constraint for $(S, f)$.} \label{bullet:pos}
\item For every $S \in \cQ$, add a labeled sample $\left(\frac{1}{s}\left(\sum_{f \in \Pi_S} \be^{(S, f)} - (1 - \delta) \be^*\right), 1\right)$ with probability $2k/Z$ to $\cD$. This corresponds to the constraint $\sum_{f \in \Pi_S} \bw_{(S, f)} \geq (1 - \delta) \bw_*$, scaled down by a factor of $1/s$. We refer to this constraint as the \emph{satisfiability constraint for $S$}. \label{bullet:sat}
\item For every variable $v \in V$ and $\sigma \in \Sigma$, add a labeled sample \\ $\left(\frac{1}{s}\left(\sum_{S \in \cQ: v \in S} \sum_{f \in \Pi_S: f(v) = \sigma} \be^{(S, f)} - \delta \be^{*}\right), -1\right)$ with probability $1/Z$ to $\cD$. This corresponds to the constraint $\sum_{S \in \cQ: v \in S} \sum_{f \in \Pi_S: f(v) = \sigma} \bw_{(S, f)} < \delta \cdot \bw_{*}$, scaled down by a factor of $1/s$. We refer to this as the \emph{selection constraint for $(v, \sigma)$}.
\end{enumerate}

\paragraph{Completeness.} Suppose that there exists an assignment $\phi: V \to \Sigma$ that satisfies all the constraints of $\cL$. Consider the halfspace with normal vector $\bw$ defined by $\bw_{*} = \zeta$ and 
\begin{align*}
\bw_{(S, f)} =
\begin{cases}
\zeta &\text{ if } f = \phi|_S, \\
0 &\text{ otherwise,}
\end{cases}
\end{align*}
where $\zeta := \frac{1}{\sqrt{1 + |\cQ|}}$ is the normalization factor. It is easy to see that the positivity constraints and the satisfiability constraints are satisfied with margin at least $\gamma = \zeta \cdot \delta/s = \Omega\left(\frac{1}{\Delta|\Sigma|^{3k}\sqrt{|\cQ|}}\right)$. Finally, observe that the sum $\sum_{S \in \cQ: v \in S} \sum_{f \in \Pi_S: f(v) = \sigma} \bw_{(S, f)}$ is zero if $f(v) \ne \sigma$; in this case, the selection constraint for $(v, \sigma)$ is also satisfied with margin at least $\gamma$. As a result, we only incur an error (with respect to margin $\gamma$) for the selection constraint for $(v, \phi(v))$ for all $v \in V$; hence, we have $\err_{\gamma}^{\cD}(\bw) \leq \frac{1}{Z} \cdot |V| = \kappa$ as desired.

\paragraph{Soundness.} Suppose contrapositively that there exists $\bw$ with $\err_{0-1}^{\D}(\bw) \leq \alpha \cdot \kappa + \varepsilon = 2\alpha\kappa$. We will ``decode'' back an assignment with value at least $\nu$ of the CSP from $\bw$.

To do so, first observe that from the positivity constraint for $*$, we must have $\bw_* > 0$, as otherwise we would already incur an error of $1/2 > 2\alpha\kappa$ with respect to $\cD$. Now, since scaling (by a positive factor) does not change the fraction of samples violated, we may assume w.l.o.g. that $\bw_* = 1$.

Next, we further claim that we may assume without loss of generality that $\bw$ does not violate any non-negativity constraints (\ref{bullet:pos}) or satisfiability constraints (\ref{bullet:sat}). The reason is that, if $\bw$ violates a non-negativity constraint for $(S = \{v_1, \dots, v_k\}, f)$, then we may simply change $\bw_{(S, f)}$ to zero. This reduces the error by $2k/Z$, while it may only additionally violate $k$ additional selection constraints for $(v_1, f(v_1)), \dots, (v_k, f(v_k))$ which weights $k/Z$ in total with respect to $\cD$. As a result, this change only reduces the error in total. Similarly, if the satisfiability constraint of $S$ is unsatisfied, we may change $\bw_{(S, f)}$ for some $f \in \Pi_S$ to a sufficiently large number so that this constraint is satisfied; once again, in total the error decreases. Hence, we may assume that the non-negativity constraints (\ref{bullet:pos}) and satisfiability constraints (\ref{bullet:sat}) all hold.

Now, for every vertex $v$, let $L_v \subseteq \Sigma$ denote the set of labels $\sigma \in \Sigma$ such that the selection constraint for $(v, \sigma)$ is violated. Since we assume that $\err_{0-1}^{\D}(\bw) \leq 2\alpha\kappa$, we must have $\sum_{v \in V} |L_v| \leq (2\alpha\kappa) / (1/Z) = 2\alpha|V|$.

Next, let $V_{\text{small}}$ denote the set of all variables $v \in V$ such that $|L_v| \leq 20\alpha k$. From the bound we just derived, we must have $|V_{\text{small}}| \geq \left(1 - \frac{1}{10k}\right)|V|$.

Another ingredient we need is the following claim:

\begin{claim} \label{claim:decode}
For every constraint $S = \{v_1, \dots, v_k\} \in \cQ$, there exist $\sigma_1 \in L_{v_1}, \dots, \sigma_k \in L_{v_k}$ that induces an accepting assignment for $\Pi_S$ (i.e., $f \in \Pi_S$ where $f$ is defined by $f(v_i) = \sigma_i$).
\end{claim}

\begin{proof}
Suppose for the sake of contradiction that no such $\sigma_1 \in L_{v_1}, \dots, \sigma_k \in L_{v_k}$ exists. In other words, for every $f \in \Pi_S$, there must exist $i \in [k]$ such that the selection constraint for $(v_i, f(v_i))$ is not violated. This means that
\begin{align*}
\delta = \delta \cdot \bw_{*} &> \sum_{S' \in \cQ: v \in S'} \sum_{f' \in \Pi_{S'}: f'(v) = \sigma} \bw_{(S', f')}  \\
&\geq \bw_{(S, f)} + \sum_{S' \in \cQ: v \in S'} \sum_{f' \in \Pi_{S'}: f'(v) = \sigma} -\delta \cdot \bw_* \\
&\geq \bw_{(S, f)} - \delta \cdot \Delta |\Sigma|^k \;,
\end{align*}
where the second inequality comes from our assumption, that the non-negativity constraints are satisfied.

Hence, by summing this up over all $f \in \Pi_S$, we get
\begin{align*}
\sum_{f \in \Pi_S} \bw_{(S, f)} \leq \delta \cdot (\Delta |\Sigma|^k + 1) \cdot |\Sigma|^k < (1 - \delta),
\end{align*}
which means that the satisfiability constraint for $S$ is violated, a contradiction.
\end{proof}

We can now define an assignment $\phi: V \to \Sigma$ for $\cL$ as follows. For every $v \in V$, let $\phi(v)$ be a random label in $L_v$. Notice here that, by Claim~\ref{claim:decode}, the probability that a constraint $S = \{v_1, \dots, v_k\}$ is satisfied is at least $\prod_{i \in [k]} |L_{v_i}|^{-1}$. Hence, the expected total number of satisfied constraints is at least
\begin{align*}
\sum_{S = \{v_1, \dots, v_k\} \in \cQ} \prod_{i \in [k]} |L_{v_i}|^{-1} &\geq \sum_{S = \{v_1, \dots, v_k\} \in \cQ: v_1, \dots, v_k \in V_{\text{small}}} \prod_{i \in [k]} |L_{v_i}|^{-1} \\
&\geq \sum_{S = \{v_1, \dots, v_k\} \in \cQ: v_1, \dots, v_k \in V_{\text{small}}} (20\alpha k)^{-k}.
\end{align*}
Recall that we have earlier bound $|V_{\text{small}}|$ to be at least $\left(1 - \frac{1}{10k}\right)|V|$. Hence, the fraction of constraints that involves some variable outside of $V_{\text{small}}$ is at most $\left(\frac{1}{10k}\right) \cdot (k) = 0.1$. Plugging this into the above inequality, we get that the expected total number of satisfied constraints is at least
\begin{align*}
0.9|\cQ| \cdot (20\alpha k)^{-k} > |\cQ| \cdot \nu,
\end{align*}
where the equality comes from our choice of $\alpha$. In other words, we have $\val(\cL) > \nu$ as desired.
\end{proof}

\subsubsection{Proofs of Theorems~\ref{thm:run-time},~\ref{thm:inapx} and Lemma~\ref{lem:strong-quantifier}} \label{sec:hardness-main-proofs}

We now prove Theorem~\ref{thm:run-time}, by simply applying Theorem~\ref{thm:red} with appropriate parameters on top of the Moshkovitz-Raz PCP.

\begin{proof}[Proof of Theorem~\ref{thm:run-time}]
Suppose contrapositively that, for some constant $\tilde{\alpha} \geq 1$ and $\zeta > 0$, we have an $O(2^{{(1/\gamma)}^{2-\zeta}}2^{d^{1-\zeta}})f(\frac{1}{\varepsilon})$ time $\tilde{\alpha}$-agnostic proper learner for $\gamma$-margin halfspaces.

Let $\nu > 0$ be a sufficiently small constant so that the parameter $\alpha$ (when $k = 2$) from Theorem~\ref{thm:red} is at least $\tilde{\alpha}$. (In particular, we pick $\nu = \frac{1}{C(\tilde{\alpha})^k}$ for some sufficiently large constant $C$.)

Given an instance $\cL$ of $\nu$-Gap-2-CSP, we run the reduction from Theorem~\ref{thm:red} to produce a distribution $\D$. We then run the learner on $\D$ with error parameter $\varepsilon$ as given by Theorem~\ref{thm:red} (and with $\delta = 1/3$). Note that the
learner runs in $O(2^{{(1/\gamma)}^{2-\zeta}}2^{d^{1-\zeta}})f(\frac{1}{\varepsilon}) = 2^{O(n^{1 - \zeta/2})}$ time, and produces a halfspace $h$. We compute $\err_{0-1}^{\D}(h)$; if it is no more than $\alpha \cdot \kappa + \varepsilon$, then we output YES. Otherwise, output NO.

The algorithm describe above solves $\nu$-Gap-2-CSP (correctly with probability 2/3) in $2^{O(n^{1 - \zeta/2})}$ time, which, by Theorem~\ref{thm:mr-pcp}, violates (randomized) ETH.
\end{proof}

Next, we prove Lemma~\ref{lem:strong-quantifier}. The main difference from the above proof is that, since the algorithm works only \emph{for some} margin $\gamma = \gamma(d)$. We will select the dimension $d$ to be as large as possible so that $\gamma(d)$ is still smaller than the margin given by Theorem~\ref{thm:red}. This dimension $d$ will be larger than the dimension given by Theorem~\ref{thm:red}; however, this is not an issue since we can simply ``pad'' the remaining dimensions by setting the additional coordinates to zeros. This is formalized below.

\begin{proof}[Proof of Lemma~\ref{lem:strong-quantifier}]
Let $\tilde{\alpha} \geq 1$ be any constant. Let $\nu > 0$ be a sufficiently small constant so that the parameter $\alpha$ (when $k = 2$) from Theorem~\ref{thm:red} is at least $\tilde{\alpha}$. (In particular, we pick $\nu = \frac{1}{C(\tilde{\alpha})^k}$ for some sufficiently large constant $C$.) Let $\varepsilon_0 = \varepsilon_0(\tilde{\alpha})$ be the parameter $\varepsilon$ given by Theorem~\ref{thm:red}.

Suppose contrapositively that, for some $\zeta > 0$, there is an $\tilde{\alpha}$-agnostic learner $\fA$ for $\gamma(\tilde{d})$-margin halfspaces that runs in time $O(2^{(1/\gamma)^{2 - \zeta}}) \poly(\tilde{d})$ for all dimensions $\tilde{d}$ and for some $0 < \varepsilon^* < \varepsilon_0(\alpha)$ and $\gamma(\tilde{d})$ that satisfies 
\begin{align} \label{eq:gamma-bound}
\frac{1}{\tilde{d}^{0.5 - \zeta}} \leq \gamma(\tilde{d}) \leq \frac{1}{(\log \tilde{d})^{0.5 + \zeta}}
\end{align} and 
\begin{align} \label{eq:gamma-consec}
\frac{\gamma(\tilde{d} + 1)}{\gamma(\tilde{d})} \geq \zeta.
\end{align}
We may assume without loss of generality that $\zeta < 0.1$.

We create an algorithm $\fB$ for $\nu$-Gap-2-CSP as follows:
\begin{itemize}
\item Given an instance $\cL$ of $\nu$-Gap-2-CSP of size $n$, we first run the reduction from Theorem~\ref{thm:red} with $\nu$ as selected above to produce a distribution $\cD$ on $\B_d \times \{\pm 1\}$ (where $d = n + 1$). Let the margin parameter $\gamma$ be as given in Theorem~\ref{thm:red}; observe that $\gamma = \Omega_{\nu}(1/\sqrt{n})$.
\item Let $\tilde{d}$ be the largest integer so that $\gamma(\tilde{d}) \geq \gamma$. Observe that, from the lower bound in~\eqref{eq:gamma-consec}, we have $\gamma(d) \geq \frac{1}{d^{0.5 - \zeta}}$. Hence, for a sufficiently large $d$, $\gamma(d)$ is larger than $\gamma$ (which is $O_{\nu}(1/\sqrt{d})$). In other words, we have $\tilde{d} \geq d$.
\item Create a distribution $\D'$ as follows: for each $(\bx, y) \in \supp(\D)$, we create a sample $(\bx', y)$ in $\D'$ with the same probability and where $\bx' \in \B_{\tilde{d}}$ is $\bx$ concatenated with $0$s in the last $\tilde{d} - d$ coordinates. 
\item Run the learner $\fA$ on $\D'$ with parameter $\gamma(\tilde{d})$ and $\varepsilon$. Suppose that it outputs a halfspace $h$. We compute $\err_{0-1}^{\D'}(h)$; if this is no more than $\alpha \cdot \kappa + \varepsilon_0(\alpha)$, then output YES. Otherwise, output NO.
\end{itemize}

It is simple to see that, in the completeness case, we must have $\opt_{\gamma(\tilde{d})}^{\D'} \leq \opt_{\gamma}^{\D'} = \opt_{\gamma}^{\D} \leq \kappa$; hence, $\fA$ would (with probability 2/3) output a halfspace $h$ with 0-1 error at most $\alpha \cdot \kappa + \varepsilon_0(\alpha)$, and we output YES. On the other hand, in the soundness case, we have $\opt_{0-1}^{\D'} = \opt_{0-1}^{\D'} > \alpha \cdot \kappa + \varepsilon_0(\tilde{\alpha})$, and we always output NO. Hence, the algorithm is correct with probability 2/3. 

Next, to analyze the running time of $\cB$, let us make a couple additional observations. First, from~\eqref{eq:gamma-consec}, we have
\begin{align}
\gamma(\tilde{d}) \leq \gamma / \zeta \leq O(1/\sqrt{n}).
\end{align} 
Furthermore, from the upper bound in~\eqref{eq:gamma-consec}, we have
\begin{align}
\tilde{d} \leq 2^{(1/\gamma(\tilde{d}))^{\frac{1}{0.5+\zeta}}} \leq 2^{O(n^{\frac{1}{1 + 2\zeta}})} \leq 2^{O(n^{1 - \zeta})},
\end{align}
where the last inequality follows from $\zeta < 0.1$.

As a result, the algorithm runs in time $O(2^{(1/\gamma(d))^{2 - \zeta}})\poly(\tilde{d}) \leq 2^{O(n^{1 - \zeta/2})}$, which from Theorem~\ref{thm:mr-pcp} would break the (randomized) ETH.
\end{proof}

Finally, we prove Theorem~\ref{thm:inapx}, which is again by simply applying Theorem~\ref{thm:red} to the Dinur-Harsha-Kindler PCP and the Sliding Scale Conjecture.

\begin{proof}[Proof of Theorem~\ref{thm:inapx}]
By plugging in our reduction from Theorem~\ref{thm:inapx} to Theorem~\ref{thm:dhk-pcp}, we get that it is NP-hard to, given a distribution $\D$, distinguish between $\opt_{\gamma}^{\D} \leq \kappa$ or $\opt_{0-1}^{\D} > \alpha \cdot \kappa + \Omega(\frac{1}{\poly(d)})$, 
where $\gamma = \frac{1}{d^{\polyloglog(d)}}$ and $\alpha = d^{1/\polyloglog(d)} = (1/\gamma)^{1/\polyloglog(1/\gamma)}$. 
In other words, if we have a polynomial time $\alpha$-agnostic learner for $\gamma$-margin halfspaces 
for this parameter regime, then NP = RP.

Similarly, by plugging in our reduction the Sliding Scale Conjecture, we get that it is NP-hard to, given a distribution $\D$, 
distinguish between $\opt_{\gamma}^{\D} \leq \kappa$ or $\opt_{0-1}^{\D} > \alpha \cdot \kappa + \Omega(\frac{1}{\poly(d)})$, 
where $\gamma = 1/d^{O(1)}$ and $\alpha = d^{\Omega(1)} = (1/\gamma)^{\Omega(1)}$. In other words, if we have 
a polynomial time $\alpha$-agnostic learner for $\gamma$-margin halfspaces for this parameter regime, then NP = RP.
\end{proof}




\new{

\section{Conclusions and Open Problems} \label{sec:conc}
This work gives nearly tight upper and lower bounds for the problem 
of $\alpha$-agnostic proper learning of halfspaces with a margin, for $\alpha = O(1)$. 
Our upper and lower bounds for $\alpha = \omega(1)$ are far from tight.
Closing this gap is an interesting open problem. Charactering the fine-grained complexity of 
the problem for improper learning algorithms remains a challenging open problem.

More broadly, an interesting direction for future work would be to generalize
our agnostic learning results to broader classes of geometric functions.
Finally, we believe that finding further connections between the problem of agnostic learning with a margin 
and adversarially robust learning is an intriguing direction to be explored.
}

\bibliographystyle{alpha}
\bibliography{allrefs}

\end{document}